\theoremstyle{plain}
\newtheorem{theorem}{Theorem}[section]
\newtheorem{lemma}[theorem]{Lemma}
\newtheorem{corollary}[theorem]{Corollary}
\theoremstyle{definition}
\theoremstyle{remark}
\def\eqref#1{equation~\ref{#1}}
\def\1{\bm{1}}
\DeclareMathAlphabet{\mathsfit}{\encodingdefault}{\sfdefault}{m}{sl}
\SetMathAlphabet{\mathsfit}{bold}{\encodingdefault}{\sfdefault}{bx}{n}
\newcommand{\Var}{\mathrm{Var}}
\newcommand{\Cov}{\mathrm{Cov}}
\DeclareMathOperator{\Tr}{Tr}
\pgfplotsset{compat=1.18}
\definecolor{yimingcolor}{RGB}{255, 69, 0}
\icmltitlerunning{Model Merging Scaling Laws in Large Language Models}
\DeclareRobustCommand{\corrauth}{\ensuremath{\ddagger}}
\DeclareRobustCommand{\outsidework}{\ensuremath{\dagger}}
\begin{document}

\twocolumn[
  \icmltitle{Model Merging Scaling Laws in Large Language Models}

  \icmlsetsymbol{equal}{*}
  \icmlsetsymbol{outside}{\outsidework}
  \icmlsetsymbol{corr}{\corrauth}

  \begin{icmlauthorlist}
    \icmlauthor{Yuanyi Wang}{equal,polyu}
    \icmlauthor{Yanggan Gu}{equal,polyu}
    \icmlauthor{Yiming Zhang}{equal,polyu}
    \icmlauthor{Qi Zhou}{polyu}
    \icmlauthor{Zhaoyi Yan}{infix} \\
    \icmlauthor{Congkai Xie}{infix}
    \icmlauthor{Xinyao Wang}{amazon,outside}
    \icmlauthor{Jianbo Yuan}{amazon,outside}
    \icmlauthor{Hongxia Yang}{polyu,infix,dayabay,corr}
  \end{icmlauthorlist}

  \icmlaffiliation{polyu}{The Hong Kong Polytechnic University (PolyU) \\}
  \icmlaffiliation{infix}{InfiX.ai}
  \icmlaffiliation{amazon}{Amazon}
  \icmlaffiliation{dayabay}{PolyU-Daya Bay Technology and Innovation Research Institute}
    
  \icmlcorrespondingauthor{}{yuan-yi.wang@connect.polyu.hk, \quad hongxia.yang@polyu.edu.hk}

  \icmlkeywords{Model Merging, Scaling Laws, Large Language Models}

  \vskip 0.3in
]

\makeatletter
\begingroup
\renewcommand{\@makefntext}[1]{%
  \parindent=0pt%
  \noindent\hbox to \footnotesep{}#1%
}

\printAffiliationsAndNotice{%
  \unskip\icmlEqualContribution\\%
  \textsuperscript{\corrauth}Corresponding author.\\%
  \textsuperscript{\outsidework}Work done outside of Amazon.\\
}

\endgroup
\makeatother

\definecolor{ruleNavy}{HTML}{244B73}
\begin{abstract}
We study empirical scaling laws for language model merging measured by cross-entropy. 
Despite its wide practical use, merging lacks a quantitative rule that predicts returns as we add experts or scale the model size. 
We identify a compact power law that links model size and expert number: the size-dependent floor decreases with model capacity, while the merging tail exhibits clear diminishing returns in the number of experts. 
The law holds in-domain and cross-domain, tightly fits measured curves across diverse architectures and methods (Average, TA, TIES, DARE), and explains two robust regularities: most gains arrive early, and variability shrinks as more experts are included. 
Building on this, we present a simple theory that explains why gains fall roughly as \(1/k\) and links the floor and tail to properties of the base model and the diversity across domains. This law enables \emph{predictive planning}: estimating how many experts are needed to reach a target loss, deciding when to stop adding experts, and trading off scaling the base model versus adding experts under a fixed budget. These results make merging a predictable, budget-aware alternative to multitask fine-tuning.
%
Our code and models are available at 
\textcolor{ruleNavy}{\url{https://github.com/InfiXAI/Merging-Scaling-Law}}
\end{abstract}

\section{Introduction}
Large language models (LLMs) are often specialized by fine-tuning on different domains, producing multiple domain experts. Model merging combines these experts in weight space to synthesize a single model without retraining.
This idea underlies a range of methods: linear rules such as weight averaging~\citep{izmailov2018averaging,wortsman2022model}, task arithmetic~\citep{ilharcoediting}, selective or nonlinear schemes like TIES~\citep{yadav2023ties}, and DARE~\citep{yu2024language}.
Merging has proven attractive in practice—it can approximate joint training at a fraction of the cost, supports modular pipelines with adapters, e.g., LoRA~\citep{lora2022, mao2025survey,zhou2026model}, and enables composition under privacy or compute constraints \citep{shi2026flexolmo,zhou2025democratizing}. 

\begin{figure}[t]
    \centering
    \begin{subfigure}{0.49\linewidth}
        \centering
        \includegraphics[width=\linewidth]{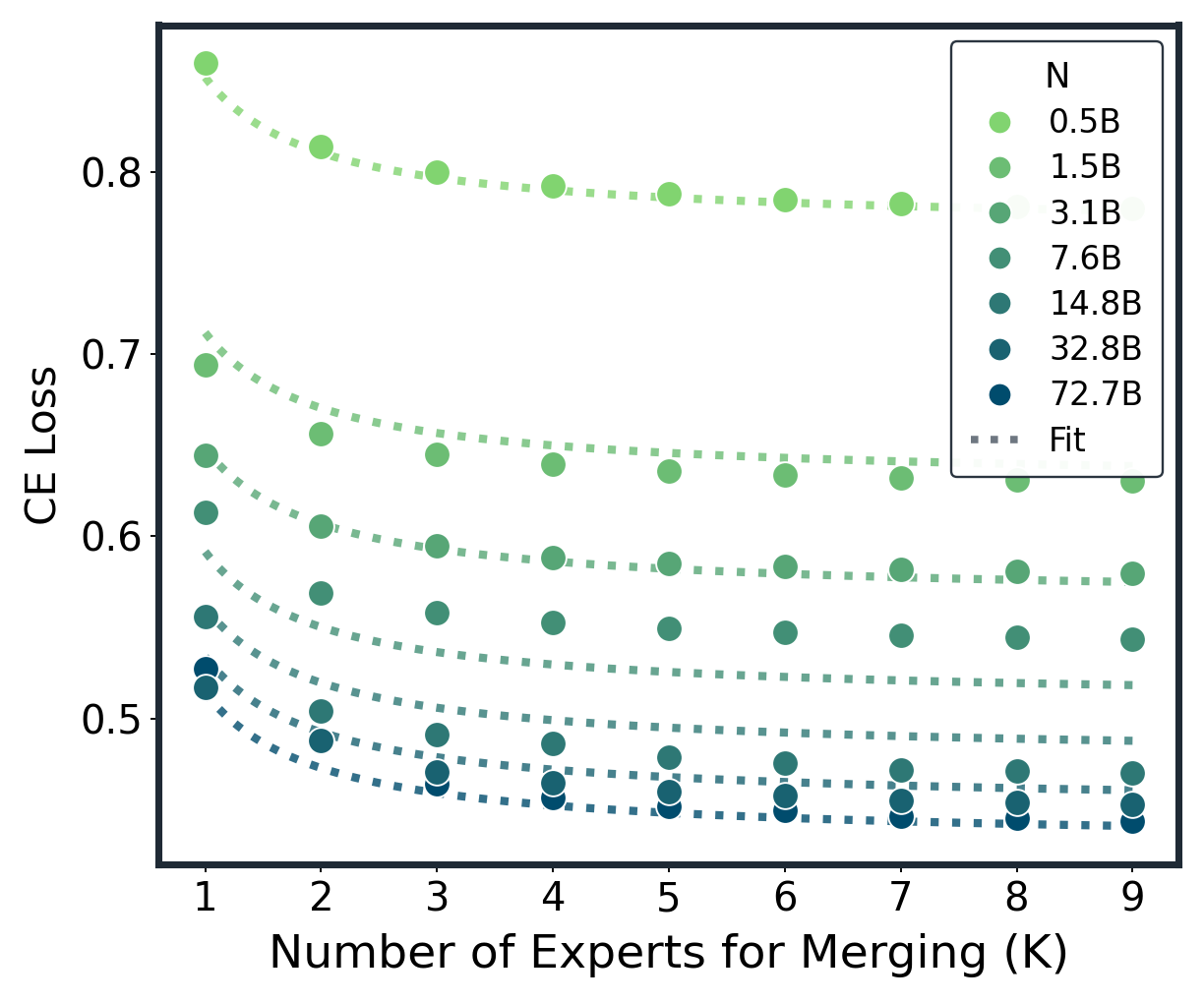}
        \caption{Averaging}
        \label{fig:sub-a}
    \end{subfigure}
    \begin{subfigure}{0.49\linewidth}
        \centering
        \includegraphics[width=\linewidth]{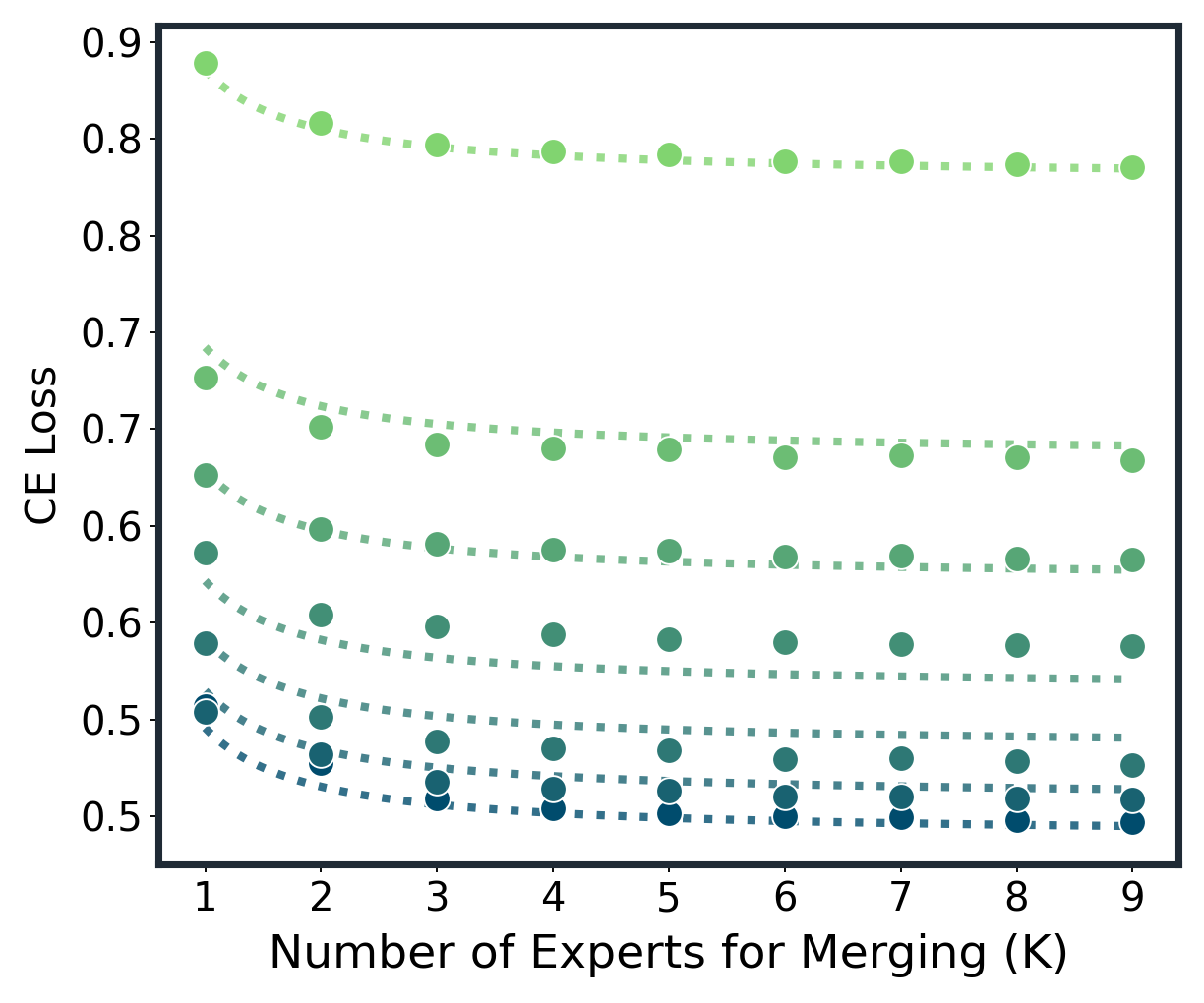}
        \caption{TA}
        \label{fig:sub-b}
    \end{subfigure}
    \\
    \begin{subfigure}{0.49\linewidth}
        \centering
        \includegraphics[width=\linewidth]{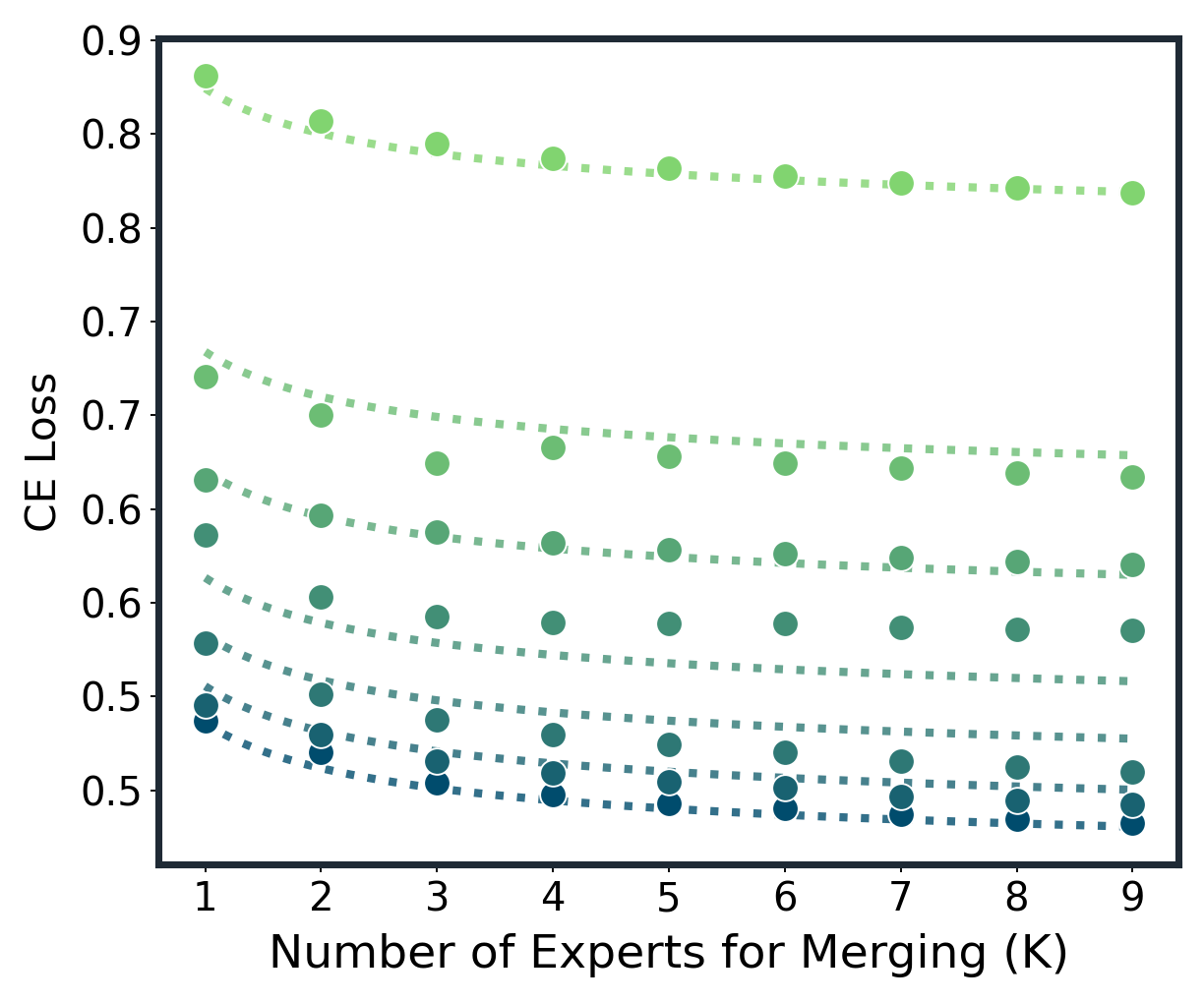}
        \caption{TIES}
        \label{fig:sub-c}
    \end{subfigure}
    \begin{subfigure}{0.49\linewidth}
        \centering
        \includegraphics[width=\linewidth]{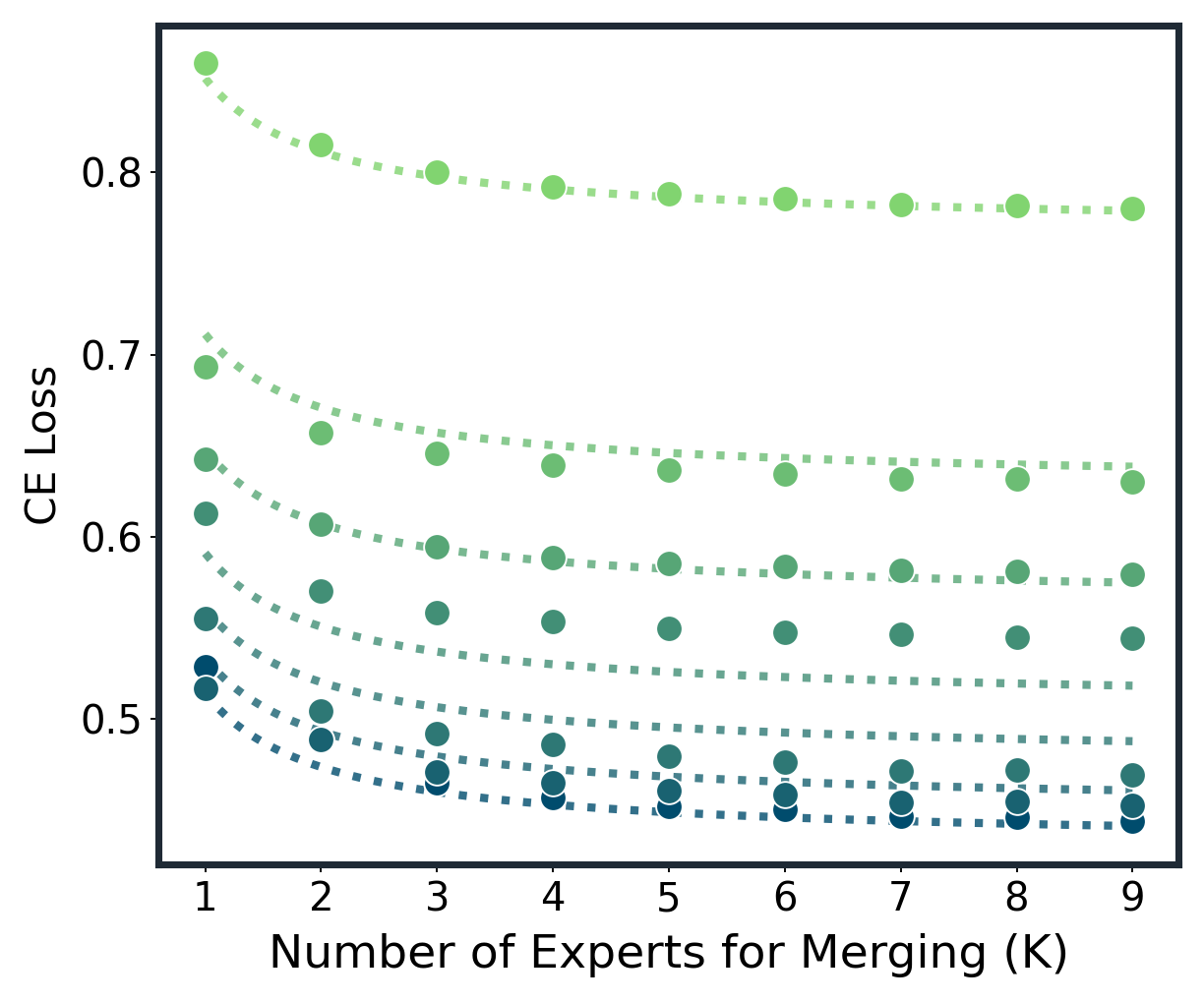}
        \caption{DARE}
        \label{fig:sub-d}
    \end{subfigure}

    \caption{Model Merging Scaling Law. CE vs.\ number of merged experts ($k$) at multiple model sizes ($N$) for four merging methods. Dots are real measurements; dotted lines are fits to the unified law.}
    \label{fig:abcd}
\end{figure}

Despite this promise, merging remains largely empirical.
Practitioners experiment with subsets, orders, and normalization rules, often at substantial computational expense.
Unlike pretraining, where well-established \emph{scaling laws} guide how loss decreases with model size, data, or compute~\citep{kaplan2020scaling, hoffmann2022training}, merging lacks an analogous quantitative account.
This gap makes it difficult to anticipate convergence as more experts are added, to compare rules across base sizes, or to make budget-aware design choices.

In this paper, we first introduce a compact, predictive \emph{merging scaling law} that couples model size \(N\) with the number of merged experts \(k\):
\begin{equation}
\label{eq:msl-joint}
\mathbb{E}[L\mid N,k]\;=\;
\underbrace{L_\ast + B\,N^{-\beta}}_{\text{floor }L_\infty(N)}
\;+\;
\underbrace{\frac{A_0\,N^{-\gamma}}{k+b}}_{\text{merging tail}},
\end{equation}
where $\beta,\gamma>0,\ b\!\ge\!0$. Intuitively, larger base models depress the size-dependent floor \(L_\infty(N)\) and shrink the tail amplitude \(A_0 N^{-\gamma}\); adding experts yields steep early improvements that taper as \(1/(k{+}b)\). The term \(L_\ast\) denotes the irreducible floor that remains even for very large \(N\).


As shown in Fig.~\ref{fig:abcd} and Fig.~\ref{fig.stg}, our experiments across {10{,}866} merged models, base sizes from 0.5B to 72B, nine domains, and four methods (Average, Task Arithmetic (TA), TIES, and DARE) validate this power law and directly compare \emph{merging} with \emph{multitask SFT} under normalized loss and GPU-hours. Empirically, merging approaches multitask SFT performance while using negligible GPU-hours, and method gaps compress as \(k\) and \(N\) grow. Across methods, we see the same pattern: steep early gains that flatten into a $1/(k{+}b)$ tail, and a uniform downward shift with larger $N$ (both the floor and the tail shrink). Method differences become smaller as scale increases. $R^2>0.98$ over all fitted points. These findings position merging as a practical, budget-aware alternative to comprehensive multitask training and highlight the proposed merging scaling law as a tool for forecasting returns and planning budgets.

\begin{figure}[tbp]
  \begin{center}
  \includegraphics[width=0.85\linewidth]{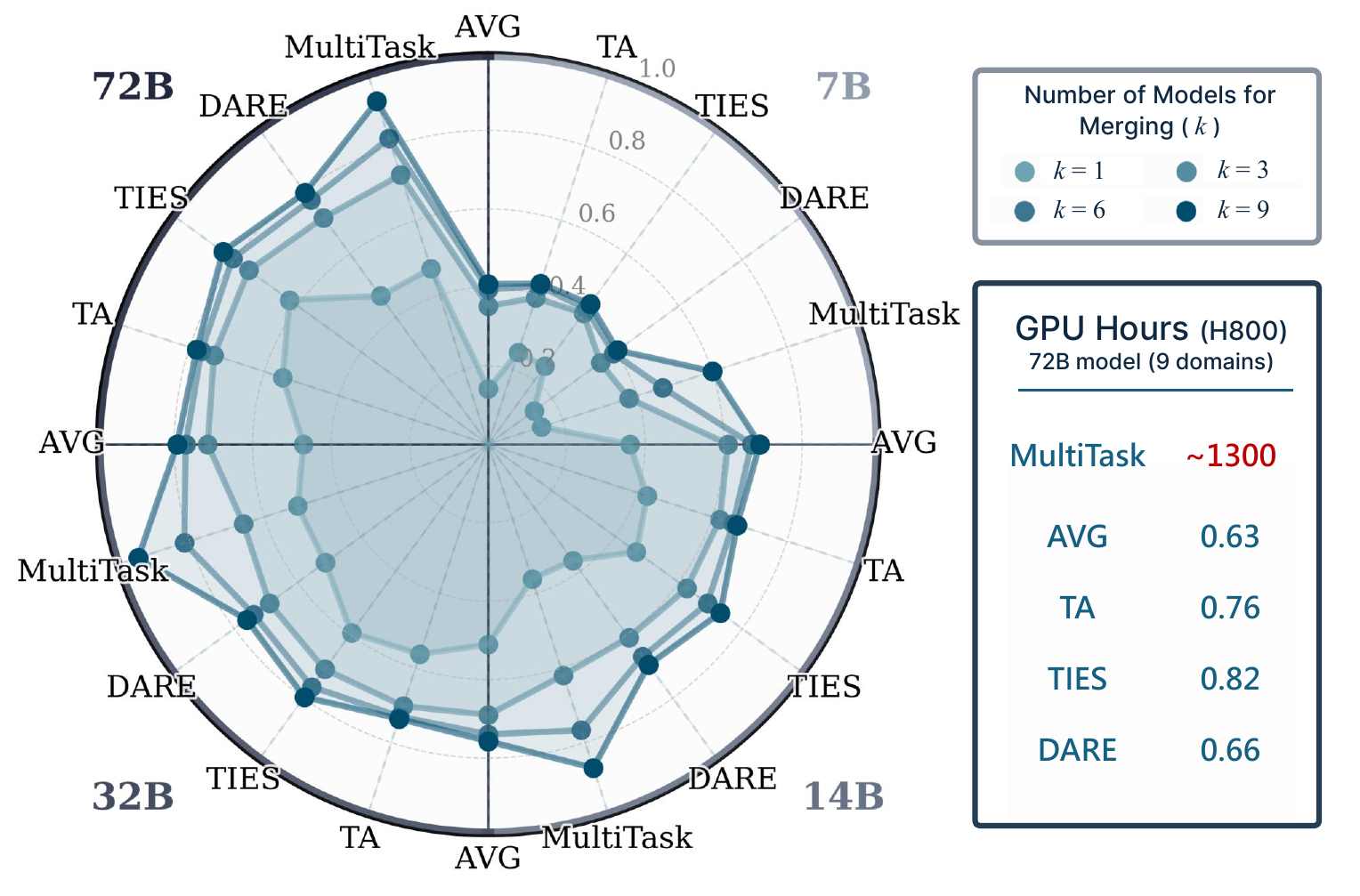}
  \end{center}
  \caption{Overview of merging vs.\ multitask SFT. The polar axis represents the normalized negative loss.}
  \label{fig.stg}
\end{figure}

This study reveals a consistent power law for LLM merging that aligns with the later sections: (i) \emph{larger models are easier to merge}, floors decrease with \(N\) and tails shrink (Fig.~\ref{fig:rq1_floors_tails}); (ii) \emph{most gains arrive early}, with a clear elbow at small \(k\) (Section~\ref{sec:rq2}); (iii) \emph{mixing domains helps pooled generalization} under the same floor+tail scaling (Section~\ref{sec:rq3-xdomain}); (iv) \emph{method differences are small at scale}, with both means and variability converging (Section~\ref{sec:rq4-method}); (v) \emph{order sensitivity fades quickly} as \(k\) grows (Section~\ref{sec:rq9-order}); and (vi) \emph{the power law transfers across backbones} with the same shape (Section~\ref{sec:rq13-open}).


In summary, this work provides:
\\
\noindent
(1) \textbf{Unified scaling law:} We introduce a compact floor+tail law that links base size and expert count, and show it applies consistently in both in-domain and cross-domain settings.
\\
\noindent
(2) \textbf{Large-scale validation:} Across extensive experiments covering diverse domains, model sizes, 10,866 models, and merging methods, the law tightly fits measured curves, variance contracts with more experts, and method gaps compress as scale increases.
\\
\noindent
(3) \textbf{Theory:} We derive a leading-order inverse-\(k\) tail and variance under equal-normalized composition of effective updates, and clarify how this average-case result should be interpreted for practical preprocessing rules such as TIES and DARE.
\\
\noindent
(4) \textbf{Operational recipe:} We introduce a lightweight three-point fitting procedure that predicts the full merging curve and identifies an efficient expert count, enabling budget-aware planning. The procedure is robust to candidate-pool size and transfers across architectures.

\section{Background, Related Work, and Setup}
\label{sec:bg}

\paragraph{Notation.}
Let $N$ denote the size of the base model, $\mathcal{M}$ denotes a set of expert models, and let $k$ be the number of expert models to be merged. We denote the base model by $\theta_0$. A task vector $v$ is defined as the parameter difference between the base model and a domain-adapted model, which may be either the full parameter difference or a low-rank adaptation such as an adapter or LoRA module~\citep{lora2022} restricted to its subspace. Unless otherwise stated, we employ \emph{equal-weight merging}, where all task vectors are assigned the same importance. For fixed $N$ and $k$, the \emph{expected loss} refers to the average performance over all possible $k$-element subsets of experts drawn from $\mathcal{M}$, while \emph{variance} measures the variability of the loss.

\subsection{Background}
\label{subsec:background}
\textbf{Model Merging:} 
Model merging is the integration of multiple independently trained models into a single cohesive model by aggregating their parameters \citep{matena2022merging,jin2022dataless,wang2025sewa}. 
Existing work performs merging either (i) on the \emph{full parameter space}, like model soups and Fisher weight-space averaging~\citep{izmailov2018averaging, wortsman2022model, davari2024model}, or (ii) within \emph{modular subspaces}, most commonly adapters or LoRA~\citep{lora2022}, enabling plug-and-play composition across domains with minimal interference~\citep{lora2022, mao2025survey}. 
Merging methods are refined with advanced techniques \citep{jhunjhunwala2024erasure,yan2025calm,akiba2025evolutionary}, including dynamic parameter selection~\citep{yang2023adamerging}. Despite these advances, the core idea remains manipulating \textit{task vectors}—changes relative to the base pre-trained model \citep{rinaldi2025update,zhang2024knowledge,bowen2024beyond}. Further gains come from processing task vectors before aggregation, for instance using element-wise masks or gates (e.g., TIES/DARE) to reduce conflicts between experts \citep{yadav2023ties,yu2024language,lu2024twin,wang2026mergepipe}.
These methods cover the majority of practical pipelines and constitute the settings evaluated in this paper. However, most of aforementioned studies consider limited expert models to merge, and the relation between the number of experts and the effectiveness is underexplored. \cite{wang2025expertsfailtheoreticalanalysis,yadav2024matters} examined this relationship from theoretical and empirical perspectives, respectively, identifying factors that influence merging performance, but did not provide a systematic scaling law to guide merging across different domains and model sizes.

\textbf{Scaling Law:}
Classical scaling laws quantify how loss scales with \emph{model size}, \emph{data}, and \emph{compute}: parameter/data power laws and compute-optimal trade-offs~\citep{kaplan2020scaling, hoffmann2022training, hestness2017deep}. 
Extensions study transfer and evaluation efficiency, as well as precision/quantization scaling that augments the usual size–data laws with a precision term~\citep{kumarscaling}. 
Scaling laws provide a predictable, quantitative framework that helps researchers make more informed decisions and prevent the blind allocation of vast resources \citep{ardalani2022understanding,klug2022scaling,neumann2022scaling,geiping2022much}.
Specifically, scaling laws have been leveraged by \cite{filipovich2022scaling} to empirically demonstrate that Direct Feedback Alignment (DFA) is not a more compute-efficient training method than backpropagation.
\cite{hilton2023scaling} extend these laws by incorporating sparsity, finding a compute-optimal sparse-dense trade-off that challenges the conventional belief that dense models are always superior for large-scale training. \cite{fernandes2023scaling} research on scaling laws to multilingual neural machine translation models, revealing that data mixture weights affect the multiplicative factor of the scaling law but not the scaling exponent. 
These laws guide pretraining, but they do not address \emph{composition in weight space}.


\subsection{Setup}
\textbf{Expert Models:} 
We use a dual–track design to balance control and realism (details in Appendix~\ref{app:expert_setting}).
\emph{(i) Controlled experts:} Starting from the same base, we train nine domain experts with identical hyperparameters. All base models are from the Qwen2.5 series (0.5B–72B)~\citep{qwen2025qwen25technicalreport}.
\emph{(ii) Open-source experts:} We additionally treat diverse HuggingFace checkpoints as experts to test robustness under heterogeneous, partly opaque post-training.

\textbf{Data:} 
We construct our own expert set $\mathcal{M}$ using data from Mixture-of-Thoughts \citep{openr1} and OpenScience\footnote{https://huggingface.co/datasets/nvidia/OpenScience}
, where all solutions are generated by DeepSeek-R1~\citep{deepseekai2025deepseekr1incentivizingreasoningcapability} to ensure consistent quality. For mathematics, we sample 93,700 instances and categorize them into five subfields (Algebra, Analysis, Discrete Mathematics and Combinatorics, Geometry and Topology, Number Theory), with 200 medium-difficulty problems per subfield reserved for validation. For science, we combine both datasets, selecting 20,000 training and 200 validation samples from each of Biology, Physics, and Chemistry. For code, we use 82,000 training and 10,000 validation samples from Mixture-of-Thoughts. This construction provides broad domain coverage, balanced validation sets, and consistent standards across all expert models.

\textbf{Merging $k$ Experts:} In this paper, we study four merging methods: Average merge, TA, TIES, and DARE. Table~\ref{tab:unified} gives a unified form for these recipes.
For a given number of experts $k$, we denote by $\mathcal{K} = \{K \subseteq \mathcal{M} : |K| = k\}$ the collection of all $k$-expert subsets of $\mathcal{M}$. Merging all experts can be written as:
\begin{equation}
\label{eq:merge-equal}
\theta =\theta_0+\sum_{i\in K}\alpha_{i,k}\,\Psi(v_i), \qquad
\sum_{i\in K}\alpha_{i,k}=c
\end{equation}
with a fixed scale $c>0$ (often $c=1$). Here $\Psi$ is the rule-specific preprocessing map. For Average and TA, $\Psi(v)=v$; for TIES and DARE, $\Psi$ includes trimming, masking, sparsification, or rescaling before the equal-normalized composition. Thus these practical rules can be viewed as composing transformed \emph{effective updates} rather than introducing external information at merge time.

\textbf{Expert capacity:}
We treat base size $N$ and expert count $k$ as the explicit scaling axes and keep the expert-training recipe fixed in the controlled Qwen experiments. Expert capacity is therefore not modeled as a separate axis; it enters through the distribution of effective updates. Changing the LoRA rank, adapter width, fine-tuning token budget, or expert quality would alter the mean direction, covariance, and curvature alignment of $\Psi(v_i)$, thereby shifting the fitted floor $L_\infty(N)$, tail amplitude $A(N)$, and possibly their exponents. Modeling expert capacity as a third scaling axis is a natural extension of the present two-axis law.

\textbf{Evaluation:}
We report token-level cross-entropy: per domain, we score \(30\)M held-out tokens and average the loss. For each \(k\), we aggregate by averaging CE over all \(\binom{|\mathcal{M}|}{k}\) expert subsets (or a uniform random subset when \(N{>}8\)B to control cost; details are provided in Appendix~\ref{app:sample}).

\section{Scaling Laws with Merging Experts and Model Size}
\label{sec:scaling-laws}

In this section, we ask a simple question: \emph{As we merge more experts ($k$) and use larger models ($N$), how does the cross-entropy (CE) loss change?}
We study this in two standard setups: \emph{in-domain} (evaluation on the single domain) and \emph{cross-domain} (experts drawn from nine heterogeneous domains and evaluated by macro-averaging over all nine). 
We use four widely adopted merge rules that scale from small to large models:
{Average}~\citep{wortsman2022model}, {TA}~\citep{ilharcoediting},
{TIES}~\citep{yadav2023ties}, and {DARE}~\citep{yu2024language}.
Our grids cover $N\!\in\!\{0.5,1.5,3,7,14,32,72\}$B (with 10,866 models in total) and $k\!\in\!\{1,\ldots,9\}$; domains are
\textit{algebra, analysis, geometry, discrete, number\_theory, code, chemistry, physics, biology}.


\begin{figure*}[t]
    \centering
    \includegraphics[width=\linewidth]{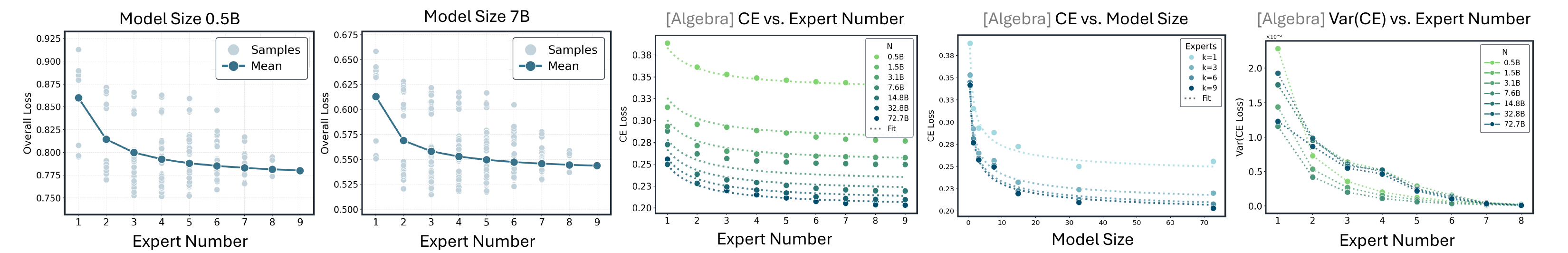}
    \caption{Empirical construction and in-domain scaling example. Panels (1)--(2) show $\mathbb{E}[L \mid N,k]$ on Qwen-2.5 models at fixed sizes ($N{=}0.5$B, $7$B): 
    light points are individual expert subsets and the solid curve is the empirical mean at each $k$. Panels (3)--(5) show the single-domain algebra case: CE vs.\ number of merged experts $k$, CE vs.\ model size $N$, and the subset-level CE variance as $k$ increases. 
    Dots are measurements; lines are fits to $L_{\infty}(N)+A(N)/(k+b)$.}
    \label{fig:in-domain}
\end{figure*}



\textbf{Construction of the expected loss.}
For each backbone size $N$, we start from a single base checkpoint and train $M{=}9$ domain-specialist experts.
Given a merge rule and a target expert number $k$, there are $\binom{M}{k}$ possible expert subsets.
For each $(N,k)$, we merge either all subsets (when feasible) or a large uniform sample, and evaluate the cross-entropy loss
$L(N,k,s)$ of the merged model on held-out data, where $s$ indexes the subset.

We define the \emph{expected merge loss} at $(N,k)$ as the empirical average over subsets,
\[
\widehat{\mathbb{E}}[L \mid N,k]
\;=\;
\frac{1}{S_{N,k}} \sum_{s=1}^{S_{N,k}} L(N,k,s),
\]
where $S_{N,k}$ denotes the number of sampled subsets.\footnote{In our grids, $S_{N,k}$ equals the full $\binom{M}{k}$ whenever feasible; otherwise we use a large uniform sample, which yields visually indistinguishable curves.}

The first two panels of Fig.~\ref{fig:in-domain} visualize this construction on representative Qwen-2.5 models.
These points correspond to losses from different expert subsets rather than a density over data samples; any apparent two-band structure reflects heterogeneity across subsets, while our analysis focuses on the subset-averaged expectation.
While individual subset losses exhibit nontrivial variability, the per-$k$ mean
$\widehat{\mathbb{E}}[L\mid N,k]$ forms a smooth, monotonic curve with diminishing returns as $k$ increases.
This motivates modeling the \emph{expected} behavior rather than individual expert combinations.
Additional results are provided in Appendix~\ref{app:empirical-construction}.

\subsection{A Unified Empirical Scaling Law}
\label{sec:unified-empirical-law}
{Let $\mathcal{M}$ denote the set of $M$ experts for a given backbone size $N$, and
let $K \subseteq \mathcal{M}$ be a subset of size $k$.
For a fixed $(N,k)$, choosing $K$ uniformly at random among all $\binom{M}{k}$ subsets and applying a merge rule yields a random merged loss $L$.
Throughout this subsection, we therefore study the \emph{conditional expectation}
$\mathbb{E}[L \mid N,k]$ over the random choice of $K$.}

{Empirically, we find that this expected loss admits a simple and interpretable \emph{floor + tail} form with a small finite-$k$ offset:}
\begin{equation}
\label{eq:merge-law}
\boxed{\ \ 
\mathbb{E}[L \mid N,k]
= L_{\infty}(N) \;+\; \frac{A(N)}{k+b}, \ \ b \ge 0 \ \text{(small).} \ \ }
\end{equation}
Here $L_{\infty}(N)$ is the limiting “best models can do” as $k\!\to\!\infty$, and $A(N)/(k{+}b)$
is a diminishing-returns term that explains why most gains arrive by small $k$.
Both size dependencies are well captured by simple power laws:
\begin{equation}
\label{eq:N-laws}
\boxed{\ 
L_{\infty}(N)=L_{\ast}+B\,N^{-\beta}, \ 
A(N)=A_{0}\,N^{-\gamma}, \  \beta,\gamma\ge 0. \ }
\end{equation}
\textit{Interpretation.} Bigger models help twice: they lower the floor $L_{\infty}(N)$ and shrink the tail amplitude $A(N)$, so (i) CE is lower for any fixed $k$, and (ii) fewer experts are needed to get close to the floor. 

To fit this power law,
we estimate $(L_{\ast},B,\beta,A_{0},\gamma,b)$ with weighted nonlinear least squares. 
Because the empirical variability across runs contracts roughly like $1/k$, we use weights proportional to $k$ when fitting curves in $k$ (this stabilizes early-$k$ noise without over-fitting the tail). 
All methods and both setups yield near-unity $R^2$ with small, structureless residuals; a tiny $b$ absorbs occasional early-$k$ curvature.
Fig.~\ref{fig:abcd} plots CE vs.\ the number of merged experts $k$ at multiple model sizes $N$ for each method; dots are measurements and dotted lines are the fitted $\;L_{\infty}(N){+}A(N)/(k{+}b)\;$ curves.
The same visual pattern holds across methods: steep early gains that flatten into a $1/(k{+}b)$ tail, and a uniform downward shift as $N$ increases.

\subsubsection{In-domain}
\label{sec:in-domain}

Fig.~\ref{fig:in-domain} shows the Average merging performance in the single algebra domain, and all domains are provided in Appendix~\ref{sec:in-domain-compact}. 
We can observe that:
\textbf{(1) Diminishing returns in $k$.}
Within each domain, CE decreases monotonically (or near-monotonically) as we merge more experts and follows the $1/(k{+}b)$ tail predicted by \eqref{eq:merge-law}. Most of the achievable improvement arrives early: there is a clear elbow by $k\!\approx\!5\!\sim\!6$, after which additional experts yield progressively smaller gains.
\textbf{(2) Scaling with $N$.}
Bigger models help in two orthogonal ways consistent with \eqref{eq:N-laws}: the floor $L_{\infty}(N)$ drops with $N$ and the tail amplitude $A(N)$ is flat-to-decreasing, so (i) CE is lower at any fixed $k$, and (ii) fewer experts are needed to approach the floor. Math-like domains exhibit shorter tails (earlier saturation), whereas science-like domains benefit more from increasing $k$ before saturating.

\subsubsection{Cross-domain}
\label{sec:cross-domain}
Fig.~\ref{fig:abcd} shows the cross-domain power law across nine domains as the expert count varies, and panels (3)--(5) of Fig.~\ref{fig:in-domain} show the corresponding model-size fit and variance trend in a representative in-domain setting. We observe two patterns:
{(1) Same law, pooled over domains.}
When merging experts drawn across heterogeneous domains and evaluating by macro-averaged CE, the same floor+tail law \eqref{eq:merge-law} holds: gains are monotone with $k$, steep early, and flatten into a $1/(k{+}b)$ tail. The elbow again occurs around $k\!\approx\!5$.
{(2) Scaling with $N$.}
Increasing model size uniformly shifts curves downward (lower floor) and weakly contracts tails (smaller $A(N)$), mirroring the in-domain behavior: larger models are both better at any fixed $k$ and require fewer experts to approach the floor.

Across both \emph{in-domain} and \emph{cross-domain} settings, the expected merge loss fits the same power law (Equation~\eqref{eq:merge-law}).
Bigger $N$ lowers the floor and shortens the tail, explaining the monotone gains and early saturation in $k$.

\subsection{Theory for the Merging Scaling Law}
\label{sec:theory}

This section explains why the \emph{average-case} performance of merging $k$ experts exhibits a leading-order $1/k$ tail, and how this behavior couples with model size $N$ to yield the joint scaling law used in our fits.
Under equal normalization, merging corresponds to averaging task update vectors.
As $k$ increases, the variance of the averaged update shrinks as $1/k$, and a Taylor second-order expansion of the loss converts this variance reduction into an expected-loss improvement of the same order.
This mechanism depends only on first- and second-order statistics in the merged subspace and is agnostic to task semantics.
For practical preprocessing rules, we apply the argument to the effective update $\tilde v_i=\Psi(v_i)$: TIES and DARE change the mean and covariance by trimming, masking, sparsifying, or rescaling updates before composition, but the equal-normalized aggregation still has the same leading variance scaling when these effective updates have stable second moments.

\textbf{Setup and Assumptions.}
\label{sec:assumption}
Fix a model size $N$. Let $L(\cdot;N)$ be twice continuously differentiable near the base $\theta_0(N)$ with $M(N)$-Lipschitz Hessian $H(N)$ and gradient $g(N)$.
Expert/task update vectors $v(N)$ lie in the merged subspace with mean $\mu(N)$, covariance $\Sigma(N)$, and finite sixth moment.
For rules with preprocessing, we interpret $v(N)$ below as the effective update $\tilde v(N)=\Psi(v(N))$ after the rule-specific transformation.
We use \emph{equal-normalization} $\alpha_{i,k}=c/k$ (covering uniform averaging, normalized sums, adapter ensembling, and the normalized composition step of TIES/DARE after preprocessing); specialized non-uniform or learned weightings can change the tail rate and are outside the scope of this theorem.

Under these assumptions, we can derive a precise asymptotic characterization of the population-averaged loss as a function of the number of merged experts $k$.

\begin{theorem}[Average-case joint merging law]\label{thm:avg-2D}
Under the assumptions above (equal weights), for each fixed $N$ the population-averaged loss over $k$ merged experts satisfies the second-order law
\begin{equation}
\label{eq:ELk-2D}
\begin{gathered}
\boxed{
    \mathbb{E}[L\mid N,k] = L_\infty(N) + \frac{A(N)}{k} + \mathcal{O}_N\!\big(k^{-3/2}\big)
} \\
\begin{aligned}
\text{with}\quad L_\infty(N) &= L(\theta_0;N)+c\,g^\top\!\mu+\tfrac12\,c^2\,\mu^\top\!H\,\mu, \\
A(N) &= \tfrac12\,c^2\,\mathrm{Tr}\!\big(H\,\Sigma\big).
\end{aligned}
\end{gathered}
\end{equation}

where $H$ denotes an approximation to the Hessian matrix, and $\mu, \Sigma$ represent respectively the mean and covariance of task vectors in the merged subspace.
In particular, the empirical family \eqref{eq:merge-law} appears with $b(N)=0$ at leading order; finite-$k$ effects manifest as a small positive offset in practice. Parameterizing $L_\infty(N),A(N)$ by \eqref{eq:N-laws} yields the practical joint model $\mathbb{E}[L\mid N,k]=L_\ast+BN^{-\beta}+A_0N^{-\gamma}/(k+b_0)$.
\end{theorem}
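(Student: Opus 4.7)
The plan is to obtain the joint law by a second-order Taylor expansion of $L$ around a carefully chosen anchor, followed by an expectation over the random $k$-subset $K$ of experts. Under equal-normalized composition the merged parameter reduces to $\theta = \theta_0 + c\,\bar v_K$ with $\bar v_K := (1/k)\sum_{i\in K}\Psi(v_i)$, so the entire $k$-dependence is funneled into moments of this sample mean; I would use $\mathbb{E}[\bar v_K]=\mu$ and $\Cov(\bar v_K)=\Sigma/k$ (the finite-population correction from sampling without replacement in $\mathcal{M}$ only tightens the rate).

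Next, I would expand about the deterministic \emph{mean anchor} $\theta_\mu := \theta_0 + c\mu$. Setting $\delta := c(\bar v_K-\mu)$, the $C^{2,1}$ smoothness (Lipschitz Hessian with constant $M$) yields
\[
L(\theta_\mu+\delta)=L(\theta_\mu)+\nabla L(\theta_\mu)^\top\delta+\tfrac12\delta^\top H(\theta_\mu)\delta+R(\delta),\qquad |R(\delta)|\le\tfrac{M}{6}\|\delta\|^3.
\]
Taking expectation over $K$, the linear term vanishes since $\mathbb{E}[\delta]=0$, and the quadratic contributes $(c^2/2)\Tr\!\big(H\,\Cov(\bar v_K)\big)=(c^2/2k)\Tr(H\Sigma)$. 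A second Taylor step around $\theta_0$ gives $L(\theta_\mu)=L(\theta_0)+c\,g^\top\mu+(c^2/2)\mu^\top H\mu+\mathcal{O}(c^3 M\|\mu\|^3)$; the trailing cubic constant and the drift $H(\theta_\mu)-H(\theta_0)=\mathcal{O}(M\,c\|\mu\|)$ are $k$-independent and get absorbed into $L_\infty(N)$, consistently with the direct limit $\bar v_K\!\to\!\mu$ forcing $L(\theta)\!\to\!L(\theta_\mu)$ as $k\!\to\!\infty$.

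For the remainder, the finite sixth-moment hypothesis lets me invoke Marcinkiewicz-Zygmund to conclude $\mathbb{E}\|\bar v_K-\mu\|^3=\mathcal{O}(k^{-3/2})$, so $|\mathbb{E}[R(\delta)]|\le(Mc^3/6)\,\mathbb{E}\|\bar v_K-\mu\|^3=\mathcal{O}_N(k^{-3/2})$, matching the stated error. Assembling the pieces yields \eqref{eq:ELk-2D} with $b(N)=0$ at leading order; the small empirical $b_0>0$ then captures subleading finite-$k$ curvature, namely (i) the Hessian drift $H(\theta_\mu)-H(\theta_0)$ evaluated against the $\mathcal{O}(1/\sqrt{k})$ fluctuations of $\delta$ and (ii) higher cumulants of $\bar v_K$ that enter at $\mathcal{O}(k^{-2})$. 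Substituting $L_\infty(N)=L_\ast+BN^{-\beta}$ and $A(N)=A_0N^{-\gamma}$ then delivers the practical joint model. The main obstacle, I expect, is justifying the bounded-moment and exchangeability properties of $\Psi(v)$ for the nonlinear recipes (random dropout rescaling in DARE, sign-election and disjoint merge in TIES): once these operators can be shown to produce exchangeable task vectors with a well-defined mean, covariance, and finite sixth moment on the merged subspace, the rest of the argument is a routine second-order expansion, but any construction inducing strong cross-expert dependence (e.g., a global vote that simultaneously couples all $\Psi(v_i)$) could degrade the $1/k$ rate and would require a per-method refinement of Step 3.
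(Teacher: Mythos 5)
Your proposal is correct and follows essentially the same route as the paper's Appendix proof: the same decomposition $\theta = \theta_0 + c\mu + c(\bar v_K - \mu)$, the same two-stage Taylor expansion (first at the anchor $\theta_0 + c\mu$ with the Lipschitz-Hessian remainder, then a second expansion relating the anchor to base-point quantities $g$, $H$, with the $\mathcal{O}(\|\mu\|^3)$ and Hessian-drift errors absorbed into $L_\infty(N)$ and $A(N)$), and the same Marcinkiewicz--Zygmund bound $\mathbb{E}\|\bar v_K-\mu\|^3=\mathcal{O}(k^{-3/2})$ for the remainder. Your closing caveat about whether the nonlinear recipes (TIES election, DARE masking) actually yield exchangeable task vectors with finite sixth moments is a legitimate gap that the paper's own proof also leaves unaddressed, so it does not distinguish your argument from theirs.
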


\textbf{Proof:}
The proof is provided in Appendix~\ref{app:avg-2d-approx}.

Theorem~\ref{thm:avg-2D} separates the merging behavior into two components: an asymptotic performance limit $L_\infty(N)$ and a finite-$k$ improvement term $A(N)/k$.
The former captures the loss attained as $k\to\infty$, determined by the base model, the mean task direction, and local curvature, while the latter governs the rate at which this limit is approached through the curvature, covariance interaction $\mathrm{Tr}(H\Sigma)$.
Crucially, the $1/k$ decay is universal under equal normalization of the effective updates, with all remaining effects strictly lower order.

From an empirical perspective, this result directly motivates the functional form of our merging scaling law.
The observed $k$-dependence follows from the theorem at leading order, while the additional offset $b_0$ accounts for finite-$k$ effects and curvature-surrogate mismatches.
This yields a simple yet expressive joint scaling model, which we validate experimentally across architectures and domains.

\textbf{Scope for TIES and DARE.}
The theorem provides a leading-order statement about equal-normalized composition of effective updates; it is not a rule-specific derivation of every trimming, election, masking, sparsification, or rescaling step.
These preprocessing steps change $\mu(N)$, $\Sigma(N)$, and their alignment with local curvature; empirically, these changes are absorbed into the fitted floor $L_\infty(N)$, tail amplitude $A(N)$, offset $b$, or small bounded finite-$k$ deviations.
Under this interpretation, the same floor+tail law fits Average, TA, TIES, and DARE with high $R^2$.

Beyond the mean trend, the same analysis also characterizes the stability of merging, showing that variability across different subsets of experts decreases as $k$ grows.

\begin{corollary}\label{cor:variance-2D}
Let $a_N \triangleq g(N)+H(N)\,c\,\mu(N)$.
Under the same assumptions and $a_N^\top\Sigma(N)\,a_N>0$,
\[
\mathrm{Var}\!\big(L(\theta_0+\Delta\theta_k;N)\big)
=\Theta\!\Big(\frac{1}{k}\Big), \ \ 
\mathrm{sd}=\mathcal{O}\!\Big(\frac{1}{\sqrt{k}}\Big).
\]
If $a_N^\top\Sigma(N)\,a_N = 0$, 

the variance contracts faster, at $\Theta(1/k^2)$.
\end{corollary}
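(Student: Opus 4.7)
The plan is to reduce the variance computation to a second-order Taylor expansion of $L$ near the base, and then read off the $k$-dependence of each term using the independence of the $k$ merged experts. Writing $\Delta\theta_k = c\,\bar v_k = c(\mu+w_k)$ with $w_k \triangleq \tfrac{1}{k}\sum_{i=1}^k (v_i-\mu)$ satisfying $\mathbb{E}[w_k]=0$ and $\mathrm{Cov}(w_k)=\Sigma/k$, and substituting into the expansion already used in Theorem~\ref{thm:avg-2D}, one obtains
\[
L(\theta_0+\Delta\theta_k;N) - L_\infty(N) \;=\; c\, a_N^\top w_k \;+\; \tfrac{c^2}{2}\, w_k^\top H\, w_k \;+\; R_k,
\]
where $a_N = g + c H\mu$ is exactly the linear coefficient from the statement and $R_k$ is the higher-order Taylor remainder. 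To make the remainder genuinely $k$-dependent, I would expand around the ``center'' $\tilde\theta \triangleq \theta_0+c\mu$ rather than $\theta_0$, so that the Lagrange form of the remainder depends only on $\|c w_k\|^3$ (avoiding the deterministic $O(\|c\mu\|^3)$ piece that would otherwise contaminate the variance); the linear and quadratic coefficients then agree with $a_N$ and $H$ up to an additive perturbation of size $O(M\|c\mu\|^2)$, which is absorbed into the subleading error.

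Next I would compute variances term by term using the independence of the $v_i$. The linear piece contributes exactly $c^2\, a_N^\top \Sigma\, a_N / k$, which is $\Theta(1/k)$ whenever $a_N^\top\Sigma\, a_N>0$. The quadratic piece $\tfrac{c^2}{2} w_k^\top H w_k$ expands as $(1/k^2)\sum_{i,j} u_i^\top H u_j$ with $u_i = v_i-\mu$; mean-zero independence kills every summand except those with $\{i,j\}=\{i',j'\}$, leaving $\Theta(k^2)$ matched pairs and hence $\mathrm{Var}\bigl(\tfrac{c^2}{2} w_k^\top H w_k\bigr) = \Theta(1/k^2)$, with coefficient involving traces such as $\mathrm{Tr}(H\Sigma H\Sigma)$ plus a fourth-cumulant contribution. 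For the remainder, the finite sixth-moment hypothesis combined with Marcinkiewicz--Zygmund (or Rosenthal's inequality) gives $\mathbb{E}\|w_k\|^6 = O(1/k^3)$, hence $\mathrm{Var}(R_k)\le \mathbb{E} R_k^2 \le (Mc^3/6)^2\,\mathbb{E}\|w_k\|^6 = O(1/k^3)$. All cross-covariances are controlled by Cauchy--Schwarz and are at most $O(k^{-3/2})$, strictly subleading to the $1/k$ rate.

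Assembling the pieces yields $\mathrm{Var}(L) = c^2\, a_N^\top\Sigma\, a_N / k + O(k^{-3/2})$, which gives the $\Theta(1/k)$ claim and the $\mathcal{O}(1/\sqrt{k})$ standard deviation under the non-degeneracy assumption. In the degenerate case $a_N^\top\Sigma\, a_N=0$ the linear variance vanishes exactly and the quadratic term governs; to upgrade the quadratic bound from $O(1/k^2)$ to $\Theta(1/k^2)$ one needs the surviving coefficient (generically $2\,\mathrm{Tr}(H\Sigma H\Sigma)>0$) to be strictly positive, which I would record as a mild non-degeneracy on $(H,\Sigma)$. The main obstacle, and essentially the only delicate point, is the remainder bookkeeping: the Lipschitz-Hessian bound on $R_k$ is not automatically $k$-dependent, so one must either re-expand around $\tilde\theta$ or carefully split $R_k$ into a deterministic part (absorbed into $L_\infty(N)$) and a stochastic part whose cubic-in-$w_k$ dependence is what drives the $O(k^{-3})$ variance bound. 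Once that is handled, the remaining arithmetic is a standard moment computation using the independence of the $v_i$.
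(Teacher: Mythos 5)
Your proposal is correct and follows essentially the same route as the paper's Appendix proof: re-center the Taylor expansion at $\theta_0+c\mu$, split the loss into linear, quadratic, and cubic-remainder pieces in $\varepsilon_k=c\,w_k$, and bound each variance and cross-covariance using the $p=2,4,6$ moment bounds from Marcinkiewicz--Zygmund/Rosenthal. Your additional remarks — that $a_N$ only matches $\nabla L(\theta_0+c\mu)$ up to an $\mathcal{O}(M\,c^2\|\mu\|^2)$ perturbation, and that the degenerate-case $\Theta(1/k^2)$ lower bound needs a mild non-degeneracy of the quartic coefficient — are both points the paper also makes (the latter) or should make more explicitly (the former).
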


\textbf{Proof:}
The proof is provided in Appendix~\ref{app:var-2d-proof}.

Corollary~\ref{cor:variance-2D} shows that merging more experts improves not only accuracy but also reliability.
In the generic case, the standard deviation of the loss decays as $1/\sqrt{k}$, indicating increasing concentration around the mean scaling curve.
This variance shrinkage explains the empirical observation that large-$k$ merges exhibit both better average performance and reduced run-to-run variability (Appendices~\ref{app:in-domain-tables} and \ref{app:xdom-fits}).

\subsection{Core Findings for Merging}
\subsubsection{Larger models make merging easier}
\label{sec:rq1-ease}

\begin{figure}[h]
  \centering
  \begin{subfigure}[t]{\linewidth}
    \centering
    \includegraphics[width=1.0\linewidth]{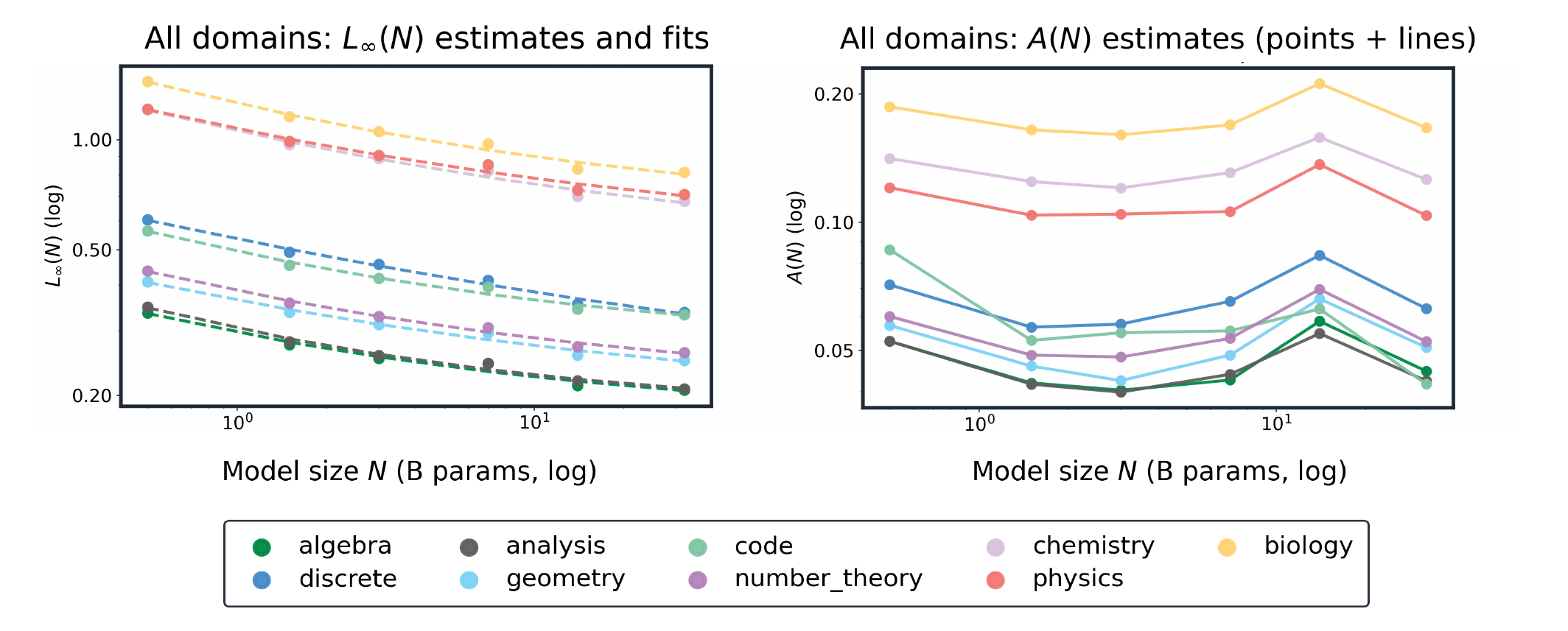}
    \caption{Per-domain floors \(L_\infty(N)\) and tail amplitudes \(A(N)\) as functions of model size.}
    \label{fig:rq1-floor-tail-panels}
  \end{subfigure}
  \\
  \begin{subfigure}[t]{0.95\linewidth}
    \centering
    \includegraphics[width=\linewidth]{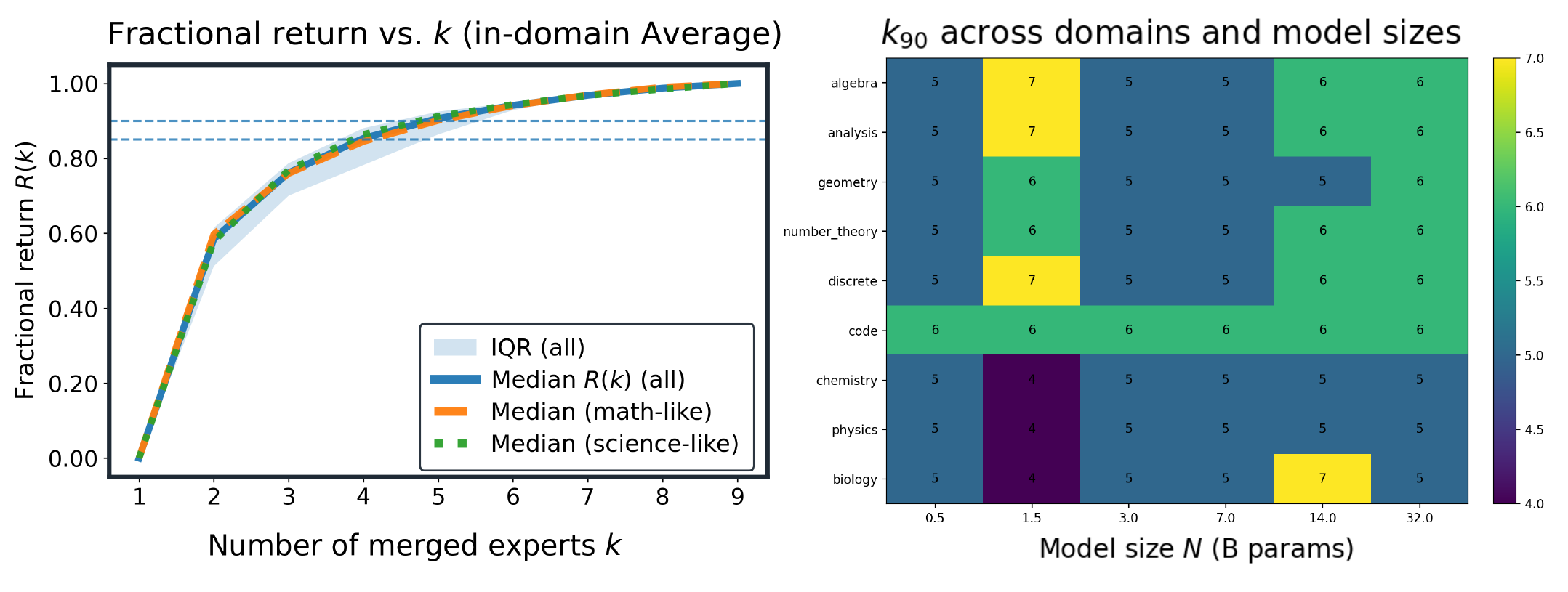}
    \caption{Fractional return \(R(k)\) and the smallest $k$ that reaches a target $90\%$ return.}
    \label{fig:rq1-return-panels}
  \end{subfigure}
  \caption{Larger models are easier to merge, and most gains arrive early.}
\label{fig:rq1_floors_tails}
\end{figure}

\textbf{Setup:}
We study the \emph{in-domain} case across 9 domains and define “easier to merge’’ as: at a fixed number of experts $k$, (i) CE is lower, and (ii) the number of experts needed to get $\varepsilon$-close to the domain floor is smaller.
Following the law in Section~\ref{sec:scaling-laws}, we estimate the \emph{floor} $L_{\infty}(N)$ and the \emph{tail amplitude} $A(N)$ from joint $(N,k)$ fits and summarize them in Fig.~\ref{fig:rq1_floors_tails}.

\textbf{Findings.}
The floor curves in Fig.~\ref{fig:rq1-floor-tail-panels} decay cleanly with model size $N$ across all domains (power-law trend), while the tail-amplitude curves in the same panel are small and overall flat-to-decreasing as $N$ grows.
Together these two effects explain why larger models are easier to merge: at any fixed $k$ the CE is lower and fewer experts are required to approach the floor.
As a headline number, at $k{=}9$ the domain-averaged CE drops from $0.739$ (@0.5B) to $0.430$ (@32B), a 41.9\% reduction.
Domains with shorter tails (math-like) saturate earlier; science-like domains benefit more from increasing $k$ but still follow the same floor+tail pattern. The fractional-return summary in Fig.~\ref{fig:rq1-return-panels} shows that \(k=5\) and \(k=6\) cross the \(85\%\)/\(90\%\) thresholds, respectively. Thus, roughly 60\% of the nine-expert pool is enough to recover over 90\% of the measured improvement.
Per-domain parameters and worked examples for the experts-to-floor budget are provided in Appendix~\ref{app:rq1-details}.

\subsubsection{Mixing domains helps generalization}
\label{sec:rq3-xdomain}


\textbf{Findings \& why.}
As seen in Fig.~\ref{fig:abcd} and Fig.~\ref{fig:rq1-floor-tail-panels}, cross-domain merging follows the \emph{same} law as in-domain: gains are monotone in $k$, steep early, and flatten into a $1/(k{+}b)$ tail, with an elbow around \textbf{$k{\approx}5$}. Larger $N$ uniformly shifts the pooled curves downward, mirroring the lower floor and smaller tail amplitude from Section~\ref{sec:rq1-ease}. The diversity of donors reduces domain-specific bias (lower $L_{\infty}$) while averaging attenuates variance and leaves a short tail governed by $A(N)/(k{+}b)$. The small bounded non-monotonicity observed in the fitted tail coefficient as $N$ varies does not propagate to the overall loss, confirming that cross-domain generalization \emph{inherits} the same diminishing-returns scaling.

\subsubsection{Gains concentrate in early experts}
\label{sec:rq2}

\textbf{Setup.}
We quantify the return from merging $k$ experts at a fixed $(N,d)$ by the fraction of realized improvement
$R(N,d,k)$ computed from the monotone envelope of the measured CE curve (see Appendix~\ref{app:rq2}).
We summarize the median $R(k)$ over all $(N,d)$ with an IQR band, together with the $k_{90}$ heatmap, in Fig.~\ref{fig:rq1-return-panels}.

\textbf{Findings \& why:}
As shown in Fig.~\ref{fig:rq1-return-panels}, most of the improvement arrives early: the median curve crosses $85\%$ by \textbf{$k{=}5$} and $90\%$ by \textbf{$k{=}6$}, and the $k_{90}$ heatmap concentrates in $\{5,6\}$ across domains and model sizes.
Math-like domains tend to saturate slightly earlier, while science-like domains keep a longer—but still flattening—tail.
This "early elbow" follows directly from the unified law $L(N,k)=L_{\infty}(N)+A(N)/(k{+}b)$:
the marginal gain $\Delta_k \!\approx\! A(N)/[(k{+}b)(k{+}1{+}b)]$ decays roughly as $k^{-2}$, so returns diminish sharply beyond the first few experts.

\subsubsection{Methods differ little at large scale}
\label{sec:rq4-method}

\textbf{Setup.}
We compare four merge methods, Average, TA ($\lambda{=}0.8$), TIES ($\lambda{\in}\{0.5,1\}$), and DARE (density $0.2$), under the same protocol as before, reporting macro-averaged CE across nine domains and fitting each curve with the unified law. Fig.~\ref{fig:rq4-method-mean} shows mean CE vs.\ $k$ at $N{=}32$B; Fig.~\ref{fig:rq4-method-var} shows the corresponding merge-to-merge variance.

\textbf{Findings \& why:}
As $k$ grows (and especially at larger $N$), method gaps in \emph{mean} CE compress quickly: in Fig.~\ref{fig:rq4-method-mean}, small early advantages (TA/TIES at $k{\le}3$) shrink to a tight band by $k{\approx}8$ (differences $\lesssim\!2\%$). Variance exhibits the same convergence (Fig.~\ref{fig:rq4-method-var}), contracting near $\sim\!1/k$ and approaching a small floor where all methods meet. This behavior follows directly from the shared scaling form: the diminishing-returns tail $A(N)/(k{+}b)$ makes early steps method-sensitive, while the common floor $L_{\infty}(N)$ dominates at larger $k$ and $N$, leaving only second-order differences. {The results are consistent with the observations of~\citep{yadav2024matters}, further confirming their findings.}

\begin{figure}[t]
  \centering
  \begin{subfigure}[t]{0.48\linewidth}
    \centering
    \includegraphics[width=\linewidth]{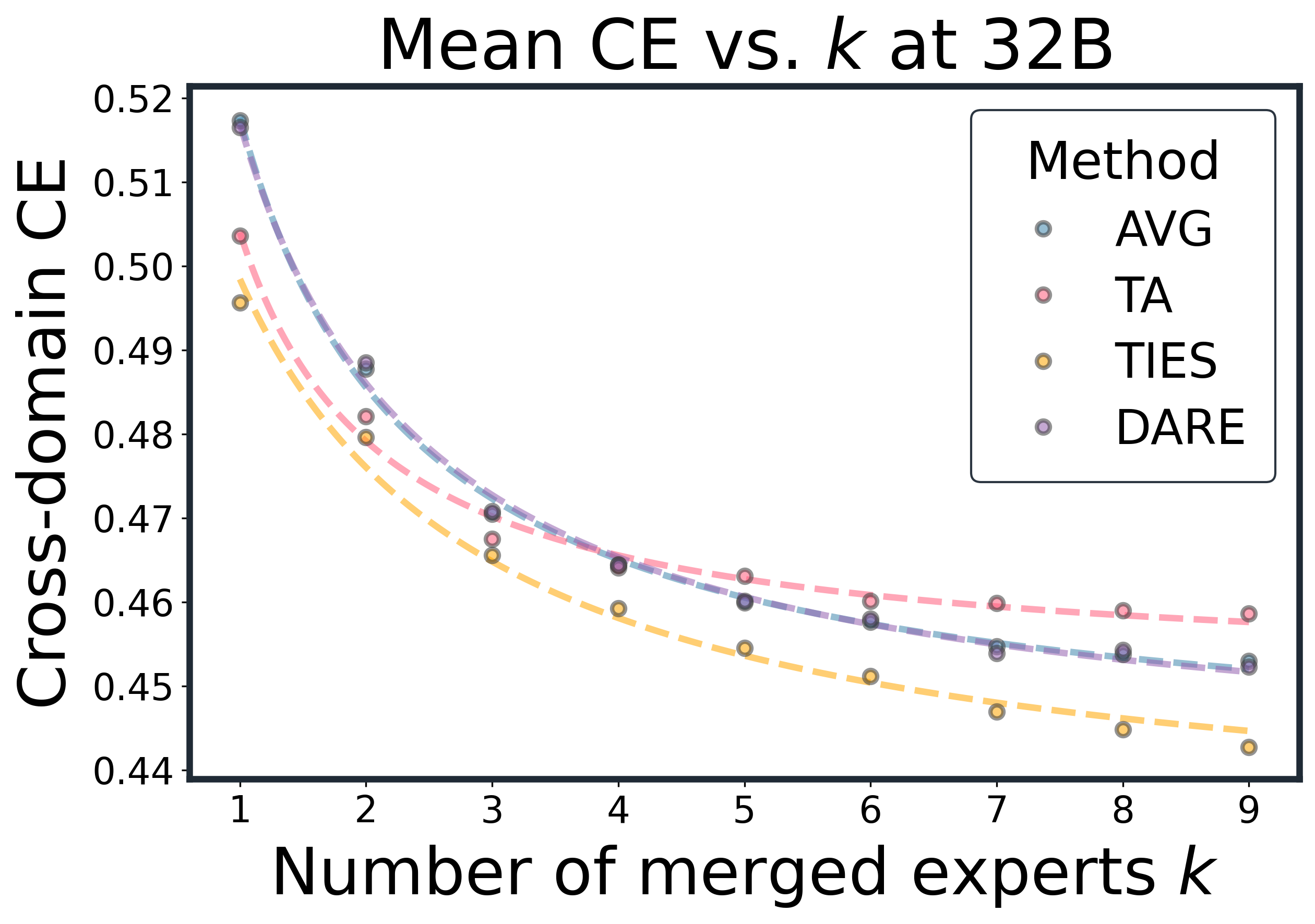}
    \caption{Mean CE vs.\ $k$ at $N{=}32$B.}
    \label{fig:rq4-method-mean}
  \end{subfigure}\hfill
  \begin{subfigure}[t]{0.48\linewidth}
    \centering
    \includegraphics[width=\linewidth]{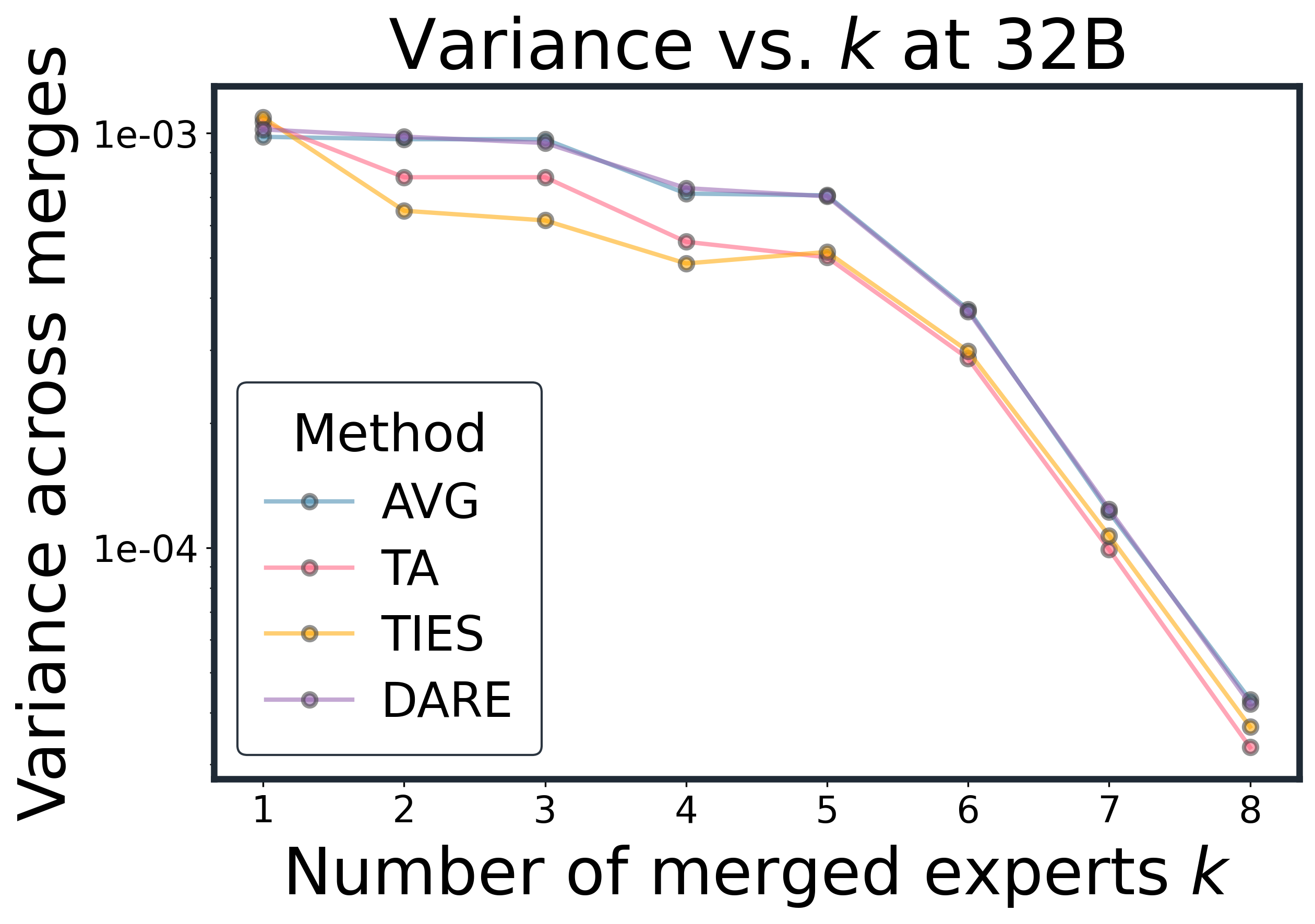}
    \caption{Merge-to-merge variance vs.\ $k$ at $N{=}32$B.}
    \label{fig:rq4-method-var}
  \end{subfigure}
  \\
  \begin{subfigure}[t]{0.95\linewidth}
    \centering
    \includegraphics[width=\linewidth]{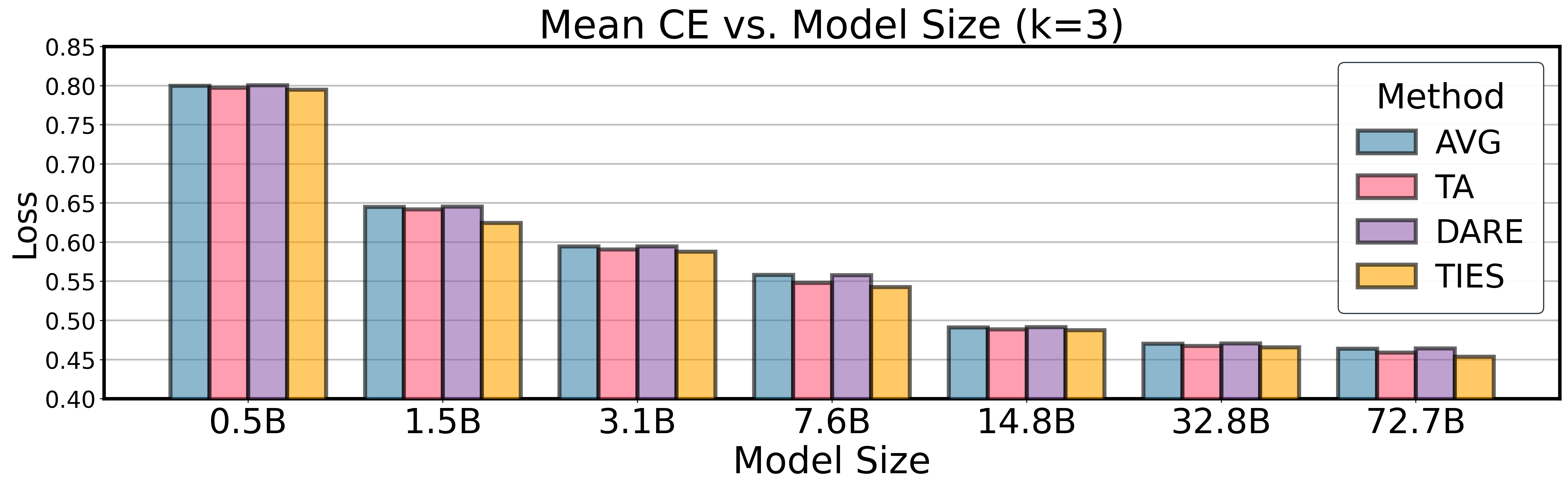}
    \caption{Mean CE vs.\ model size at $k{=}3$.}
    \label{fig:rq4-method-size}
  \end{subfigure}
  \caption{{Method sensitivity diminishes at scale.}
Mean CE and variance follow a common power law across methods; small early-$k$ gaps narrow quickly, and variance shows near-$1/k$ contraction for all methods.
}
  \label{fig:rq4-method}
\end{figure}

\section{Further Analysis and Recipe}
\label{sec:core-findings}

Beyond establishing the unified law in Section~\ref{sec:scaling-laws}, we stress–test it along practical axes that affect day-to-day merging: how large the candidate pool is, whether mixing domains helps, how sensitive results are to order/selection, and whether findings transfer across backbones. Throughout, we keep the evaluation protocol fixed and re-estimate the same
$L_\infty(N)+A(N)/(k{+}b)$ family. The main text reports trends and takeaways; per-domain numbers and fit diagnostics are in the Appendix.

\subsection{Does a bigger candidate pool help?}
\label{sec:rq-extended-main}
\textbf{Setup.} We repeat the cross-domain analysis while \emph{restricting} the pool of available donor domains to $M\!\in\!\{8,7\}$ (DARE; identical $(N,k)$ grids), then refit the unified law. Fig.~\ref{fig:rq-poolM57} contrasts the fitted floor $L_\infty(N)$ and tail $A(N)$ for $M{=}8$ vs.\ $M{=}7$.

\textbf{Findings \& why:} The law itself is \emph{stable} to pool size: floors remain tight power laws in $N$ with negligible change across $M$ (Figs.~\ref{fig:rq-poolM8} and \ref{fig:rq-poolM7}). The effect of a larger pool shows up almost entirely in the \emph{tail}: moving from $M{=}8$ to $M{=}7$ makes $A(N)$ \emph{flat-to-decreasing} with $N$ on science-like domains (chemistry/physics) while leaving math-like domains nearly unchanged. Intuitively, a slightly more diverse pool supplies complementary donors and reduces residual cross-domain mismatch, shrinking the $A(N)/(k{+}b)$ term; this yields the clearest gains at moderate-to-large $k$ and larger $N$. In short, a bigger pool chiefly helps by tightening the tail rather than shifting the floor.

\subsection{Can three points predict the whole $k$-curve? (Yes)}
\label{sec:rq5-autok}

\begin{figure}[t]
  \centering
  \begin{subfigure}[t]{\linewidth}
    \centering
    \includegraphics[width=\linewidth]{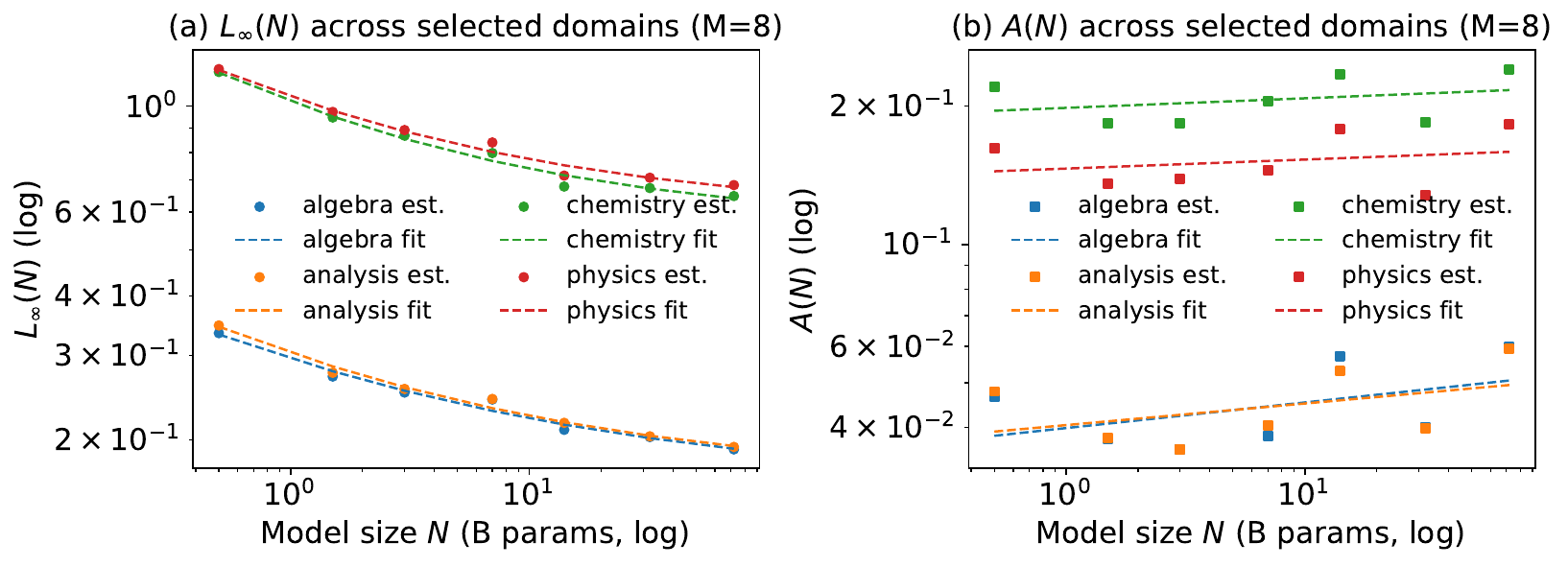}
    \caption{$M{=}8$ candidate domains.}
    \label{fig:rq-poolM8}
  \end{subfigure}\\
  \begin{subfigure}[t]{\linewidth}
    \centering
    \includegraphics[width=\linewidth]{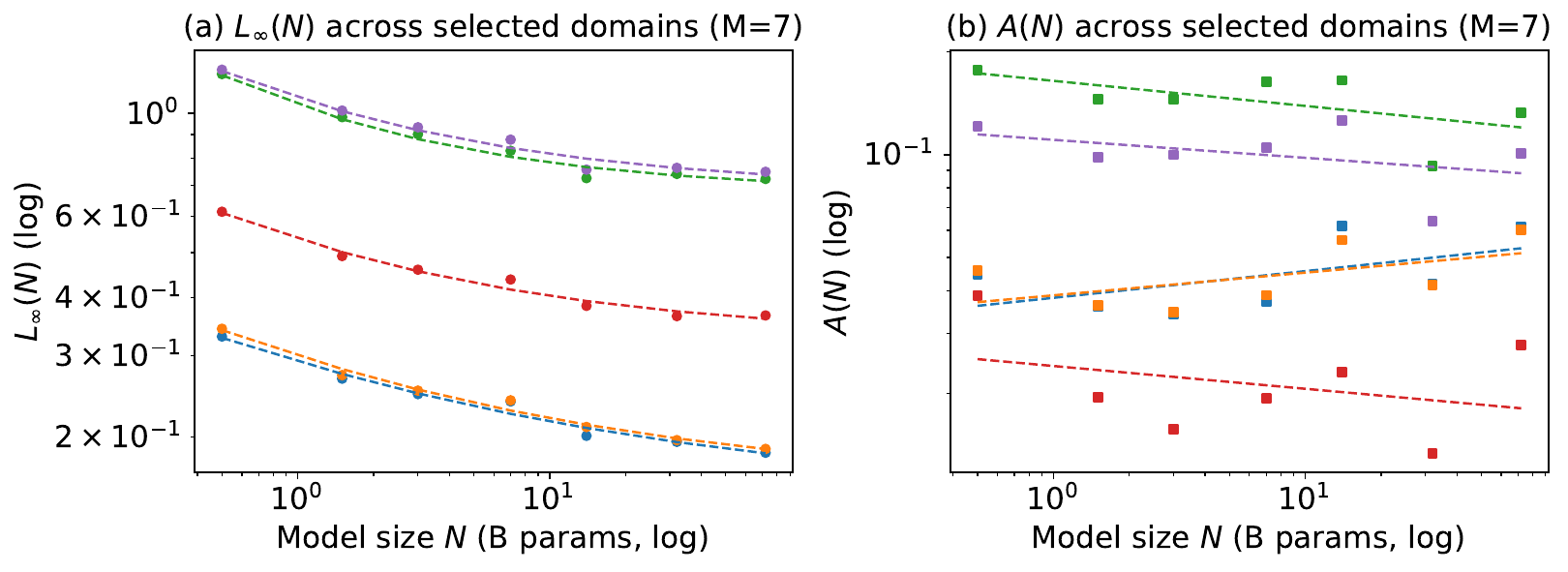}
    \caption{$M{=}7$ candidate domains.}
    \label{fig:rq-poolM7}
  \end{subfigure}
  \caption{
Unified-law fits with reduced candidate pools ($M{=}8,7$).
Across domains, floors $L_\infty(N)$ remain stable, while tail terms $A(N)$ exhibit weak or no shrinkage with $N$.}
  \label{fig:rq-poolM57}
\end{figure}

\begin{figure}[t]
  \centering
  \begin{subfigure}[t]{\linewidth}
    \centering
    \includegraphics[width=0.95\linewidth]{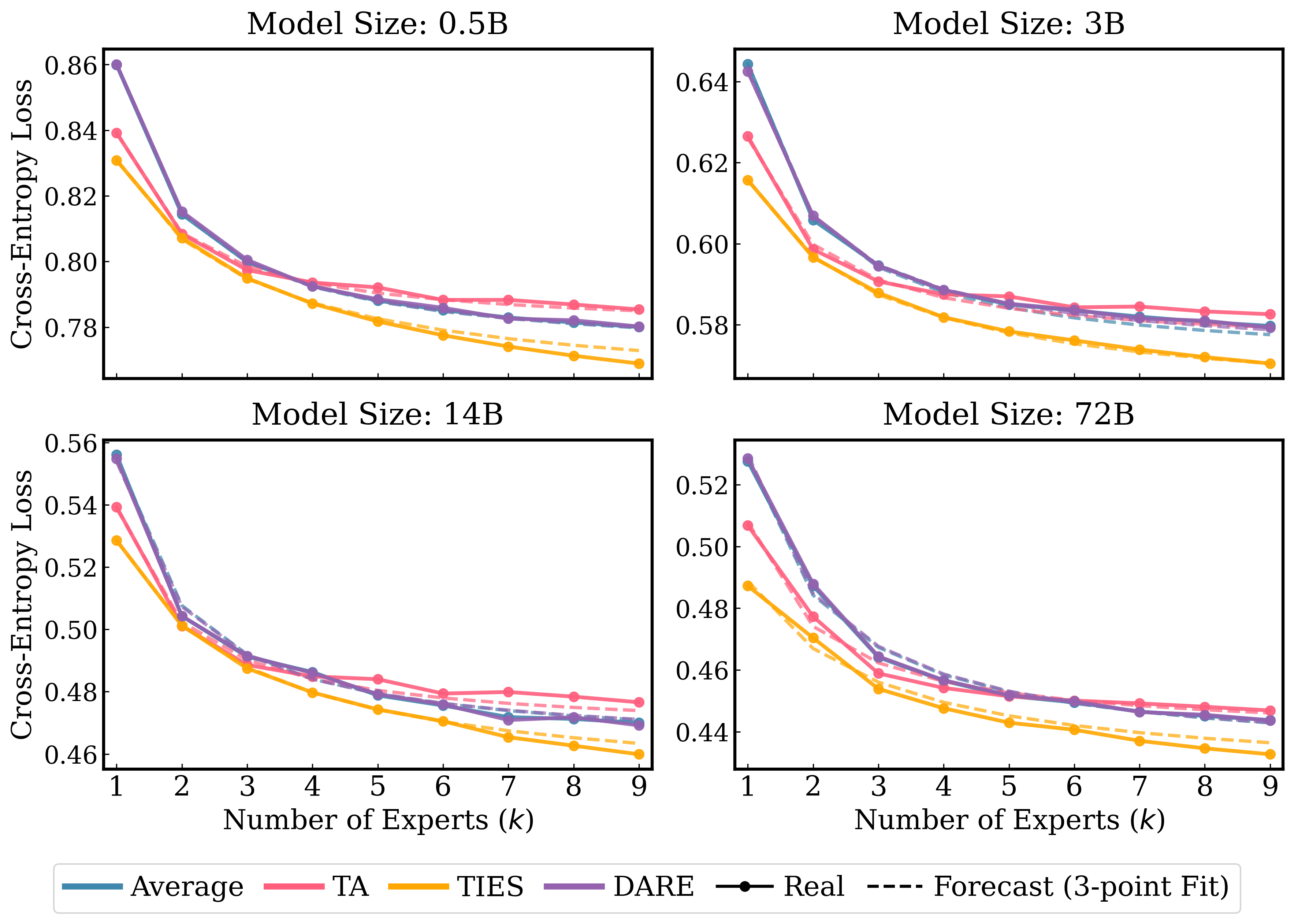}
    \caption{Ground truth vs.\ floor+tail fits across domains.}
    \label{fig:rq5-autok-fit}
  \end{subfigure}\\
  \begin{subfigure}[t]{\linewidth}
    \centering
    \includegraphics[width=0.95\linewidth]{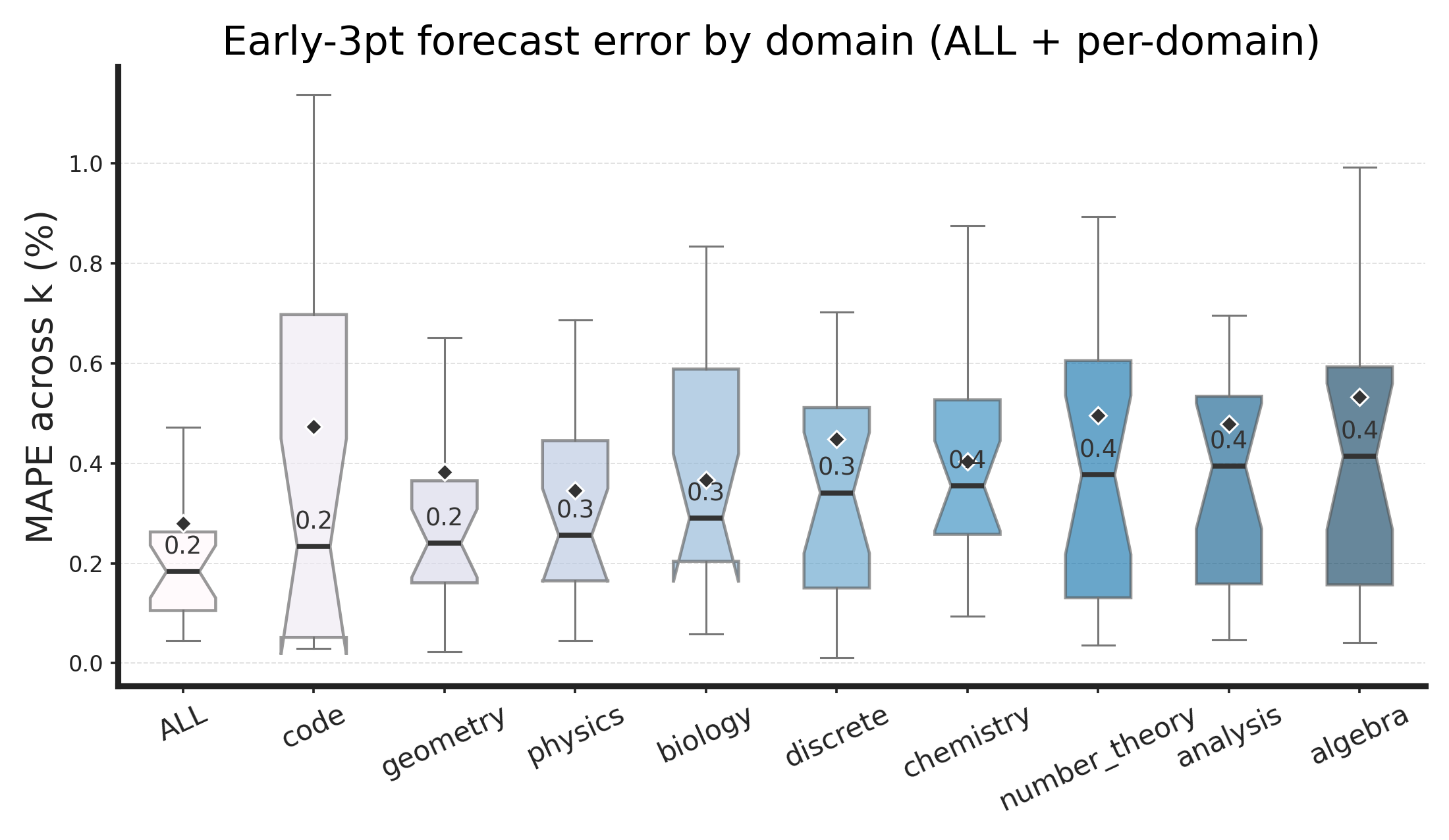}
    \caption{Forecast error across $k$ and the induced distribution of recommended $k^\star$.}
    \label{fig:rq5-autok-summary}
  \end{subfigure}
  \caption{{Predicting the $k$-curve from three points.}
Forecast errors stay low and the recommended $k^\star$ concentrates at small values.}
  \label{fig:rq5-autok}
  \vspace{-1em}
\end{figure}

\textbf{Setup.} For each series, either a single \emph{(domain, $N$)} in-domain curve or a \emph{(method, $N$)} cross-domain curve, we fit the unified law
$L(k)=L_\infty(N)+\tfrac{A(N)}{k+b}$ using only the first three points $k\!\in\!\{1,2,4\}$, then forecast the full $k\!\in\!\{1,\dots,9\}$ trajectory and the value at a target $k$.

\textbf{Findings \& why:} {Three points suffice.} Across domains and methods, the early-$k$ slope plus the long-tail shape are captured well by $L_\infty{+}A/(k{+}b)$, so fitting on $\{1,2,4\}$ closely tracks the full curve in Fig.~\ref{fig:rq5-autok-fit}. The implied $k^\star$ concentrates around $5\!\sim\!6$ in Fig.~\ref{fig:rq5-autok-summary}, aligning with the elbow found in Section~\ref{sec:rq2}. Intuitively, the model’s floor $L_\infty$ anchors the late regime while $A$ controls the early drop; those two degrees of freedom are identifiable from three well-spaced points, yielding stable forecasts without overfitting. Thus, early measurements are sufficient for budget-aware merge planning.

\subsection{Does merge order matter?}
\label{sec:rq9-order}

\begin{figure*}[t]
  \centering
  \begin{subfigure}[t]{0.32\linewidth}
    \centering
    \includegraphics[width=\linewidth]{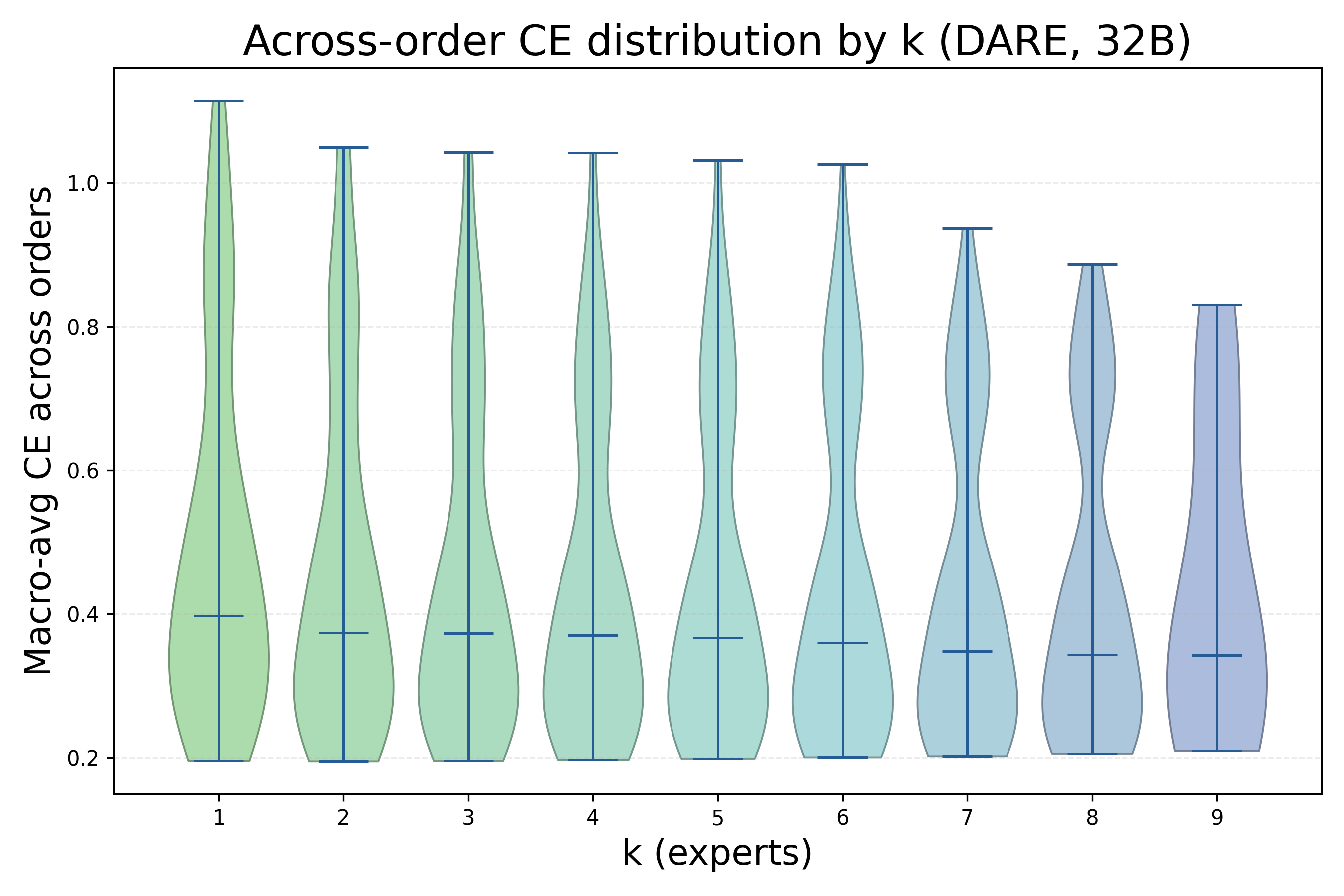}
    \caption{CE distribution across orders at $N{=}32$B.}
    \label{fig:rq9-order-violin}
  \end{subfigure}\hfill
  \begin{subfigure}[t]{0.32\linewidth}
    \centering
    \includegraphics[width=\linewidth]{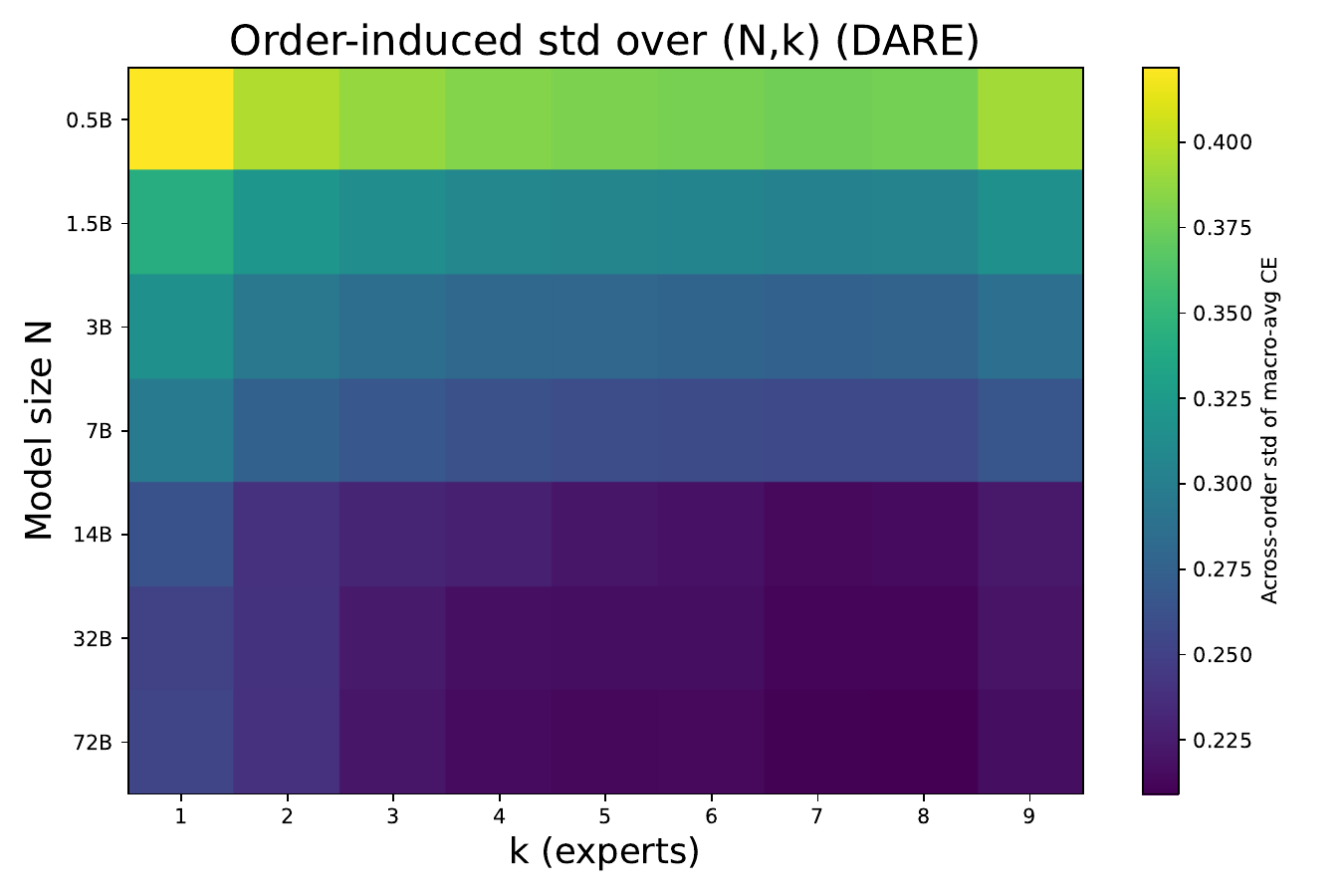}
    \caption{Across-order standard deviation over $N$ and $k$.}
    \label{fig:rq9-order-std}
  \end{subfigure}
  \begin{subfigure}[t]{0.32\linewidth}
    \centering
    \includegraphics[width=\linewidth]{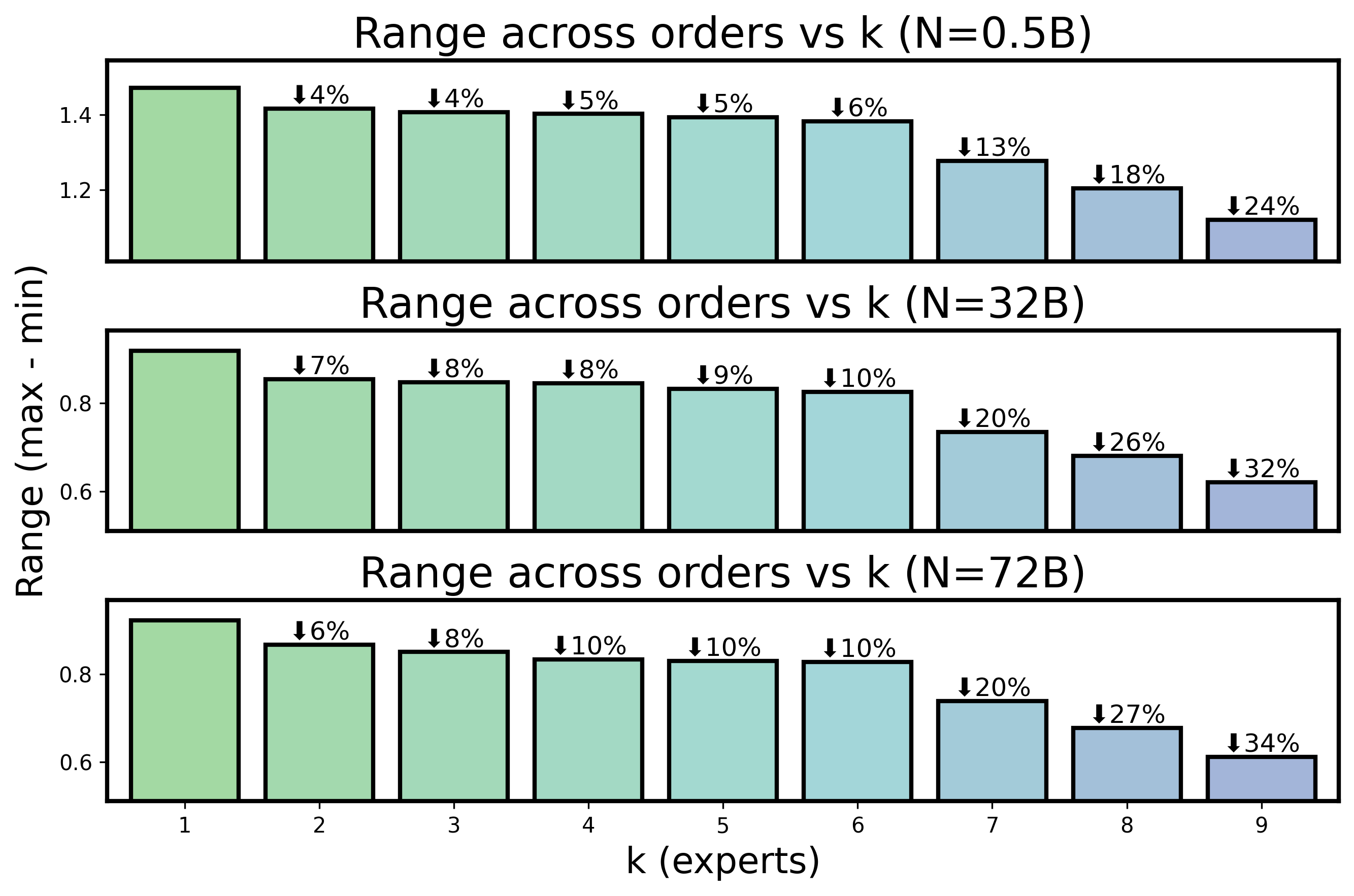}
    \caption{Worst--best range at representative sizes.}
    \label{fig:rq9-order-range}
  \end{subfigure}
  \caption{{Order sensitivity contracts with $k$ (DARE).}
The distribution, standard deviation, and worst--best range all tighten rapidly as $k$ increases.}
  \label{fig:rq9-order}
\end{figure*}

\begin{figure}[t]
  \centering
  \begin{subfigure}[t]{0.48\linewidth}
    \centering
    \includegraphics[width=\linewidth]{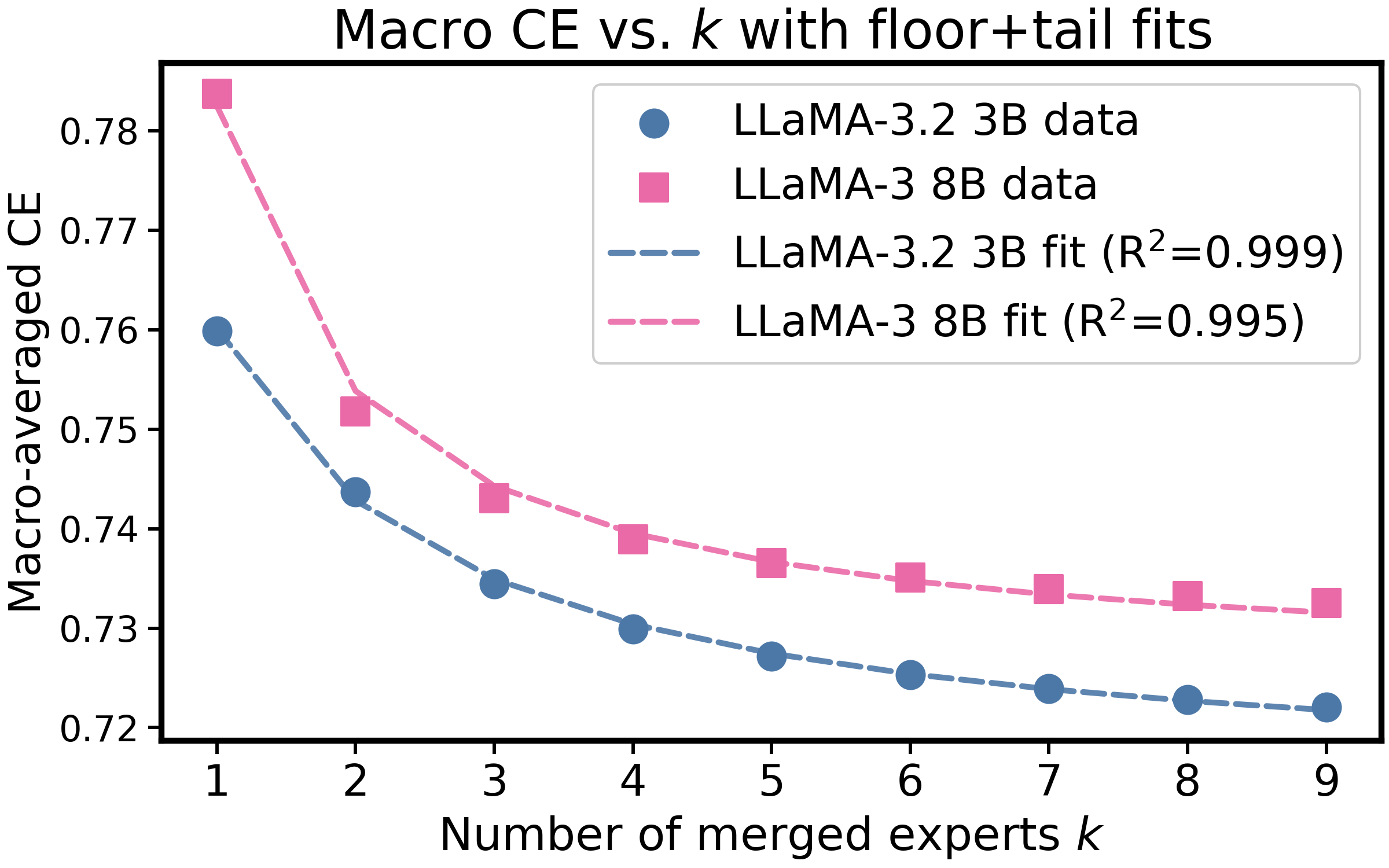}
    \caption{Macro CE vs.\ $k$.}
    \label{fig:rq13-llama-fit}
  \end{subfigure}\hfill
  \begin{subfigure}[t]{0.48\linewidth}
    \centering
    \includegraphics[width=\linewidth]{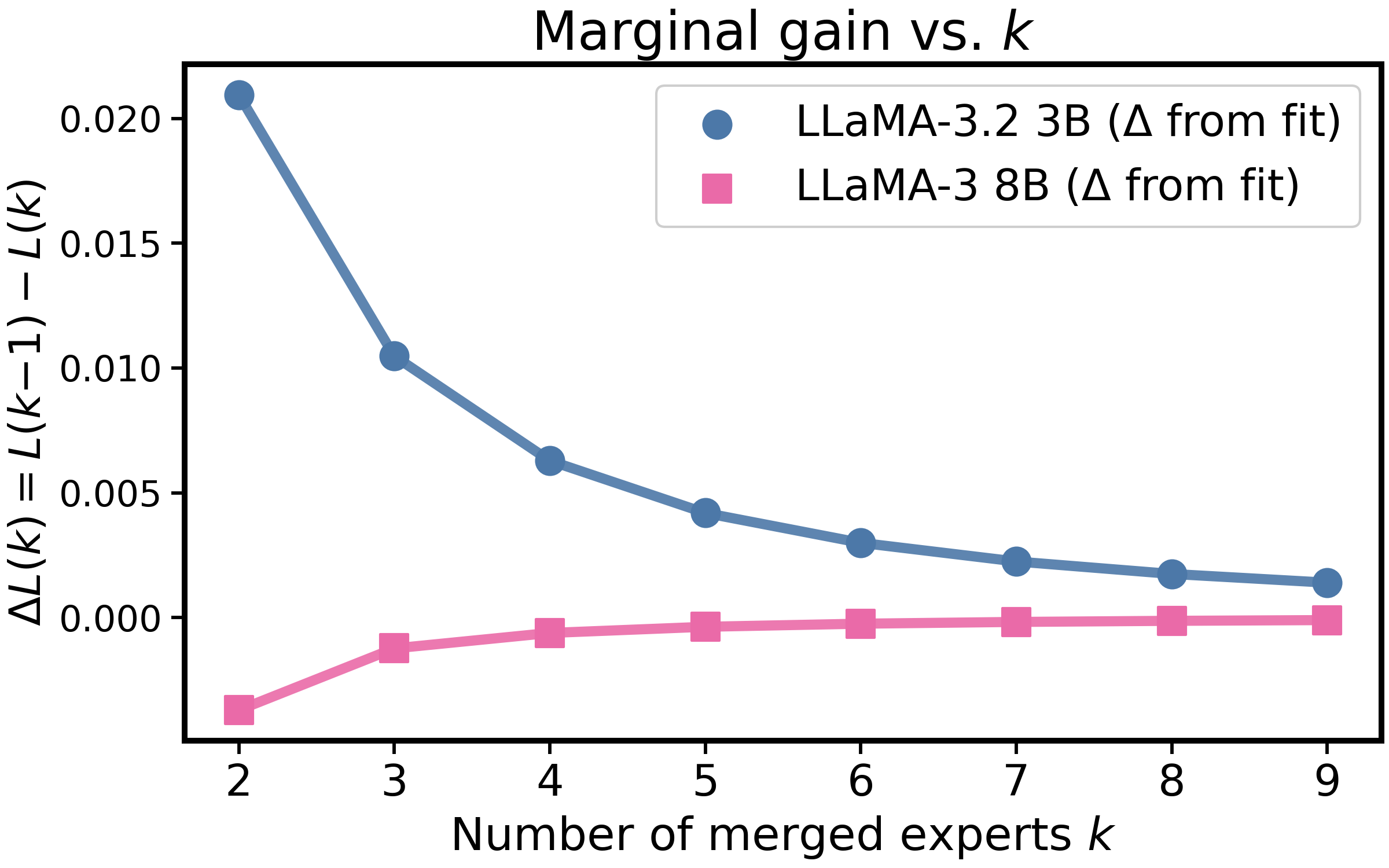}
    \caption{Marginal gain vs.\ $k$.}
    \label{fig:rq13-llama-gain}
  \end{subfigure}\hfill
  \caption{{Cross-backbone validation on LLaMA.}
Macro CE follows the same inverse tail on LLaMA-3.2 3B and LLaMA-3 8B, and marginal gains decay smoothly with $k$.
}
  \label{fig:rq13-llama}
\end{figure}

\textbf{Setup.}
We permute donor orders under DARE, and at each $(N,k)$ summarize the across-order dispersion of the macro-averaged CE by standard deviation, range, and coefficient of variation; we also fit a parsimonious tail
$\mathrm{Std}_{\text{order}}(N,k)\!\approx\!c_0(N)+c_1(N)/(k{+}b)$.

\textbf{Findings.}
{Order effects fade fast.} Fig.~\ref{fig:rq9-order-violin} shows that, at 32B, both the interquartile mass and the whiskers collapse as $k$ grows (about $83\%$ shrinkage in whisker length by $k{=}8$). Fig.~\ref{fig:rq9-order-std} confirms that this contraction holds \emph{for all} base sizes and follows the same $1/(k{+}b)$ pattern that governs the mean, with larger $N$ being slightly more stable. Fig.~\ref{fig:rq9-order-range} quantifies worst--best differences: the relative range reduction at $k{=}9$ is $\approx\!24\%$ (0.5B), $32\%$ (32B), and $34\%$ (72B), and in absolute CE the 32B spread falls from $\sim\!0.086$ (@$k{=}1$) to $\sim\!0.015$ (@$k{=}8$). Practically, once $k\!\ge\!6$ the across-order spread is small compared to early-$k$ method gaps and to the floor itself, so curating a specific merge order yields little benefit.

\subsection{Does the same law hold on other backbones?}
\label{sec:rq13-open}
\textbf{Setup.}
We replicate the power-law from Section~\ref{sec:scaling-laws} on two open-source backbones, LLaMA-3.2-3B, LLaMA-3-8B, and Gemma2 family.
For each backbone, we merge experts sampled across nine domains, report the macro-averaged CE for $k\!\in\!\{1,\dots,9\}$, and fit the same floor+tail form
$L(k)=L_\infty+\tfrac{A}{k+b}$ with a small $b$.
To complement the main \emph{curve fit}, we also visualize the \emph{marginal gain} $\Delta L(k)\!=\!L(k{-}1){-}L(k)$ and the \emph{experts-to-target} bars $k^\star_{80/90}$ (the smallest $k$ that reaches 80/90\% of the total $k{=}1{\to}9$ improvement).

\textbf{Findings.}
Both backbones follow the same inverse-tail law: macro CE decreases monotonically in $k$,
with steep early gains that flatten thereafter (Fig.~\ref{fig:rq13-llama-fit}).
The floor+tail model fits the data extremely well
($R^2{=}0.999$ for LLaMA-3.2 3B and $R^2{=}0.995$ for LLaMA-3 8B, details listed in Appendix~\ref{app:fitllama}),
confirming quantitative consistency across backbones.
The marginal gain curves (Fig.~\ref{fig:rq13-llama-gain}) decay smoothly with $k$,
and both models reach roughly 80\% of the total improvement with only $k{\approx}4\!-\!5$ experts.
Absolute CE levels differ modestly, LLaMA-3.2 3B sits lower with a slightly steeper early slope, reflecting backbone capability rather than a change of scaling law. {Additional results on Gemma 2 also follow the same form, shown in Appendix~\ref{app:downstream}}

\textbf{Note 1:} We also report downstream evaluations in Appendix~\ref{app:downstream}.
Aggregated task metrics generally improve with $k$ and then plateau, showing the same diminishing-return pattern as CE at a qualitative level. 
However, downstream scores can saturate earlier than token-level CE, and we do not claim that they follow the same law quantitatively.

{\textbf{Note 2:} We further extend the cross-domain experiments to a 16-domain pool on the LLaMA-3B backbone (original 9 domains plus Japanese, medical, house-arrangement, Korean, emotion, elementary school mathematics, and Java-code experts), and the aggregated cross-entropy still follows the same power law (see Appendix~\ref{app:16domain-scaling}).}

To make the downstream trend explicit in the main paper, Fig.~\ref{fig:downstream-main} summarises representative results from Appendix~\ref{app:downstream}. Under Task Arithmetic, both LLaMA backbones improve quickly from $k{=}1$ to $k{\approx}3$ and then flatten, with the smaller 3B model saturating earlier. On the 8B backbone, TA and TIES follow the same qualitative trajectory for $k\!\ge\!2$: most utility is captured by the first few experts, and merge-rule differences are concentrated in the early-$k$ regime. The translucent point clouds also show that task-level metrics are substantially noisier than token-level CE, even when their filled mean markers follow a stable trend. Thus downstream evaluation supports the same qualitative diminishing-return picture, while making clear that benchmark scores can plateau earlier and should not be read as following the exact CE scaling law.

\begin{figure}[t]
  \centering
  \begin{subfigure}[t]{0.48\linewidth}
    \centering
    \includegraphics[width=\linewidth]{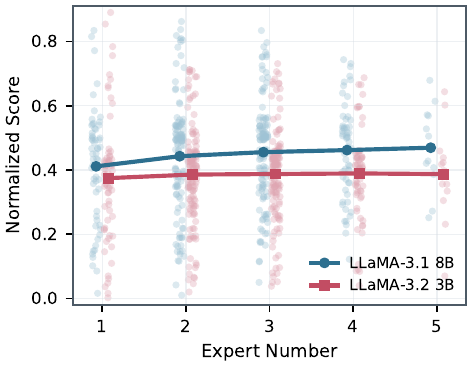}
    \caption{TA across LLaMA 3.1-8B \& 3.2-3B backbones.}
    \label{fig:downstream-backbone}
  \end{subfigure}\hfill
  \begin{subfigure}[t]{0.48\linewidth}
    \centering
    \includegraphics[width=\linewidth]{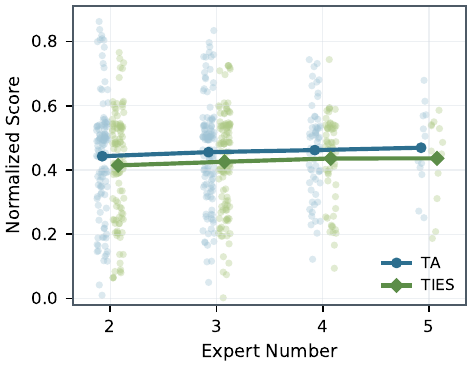}
    \caption{TA vs.\ TIES on 8B.}
    \label{fig:downstream-method}
  \end{subfigure}
  \caption{{Downstream scores exhibit early gains and saturation.}
  Translucent points denote normalized benchmark scores from individual expert subsets; filled markers connected by solid lines denote empirical means at fixed $k$.}
  \label{fig:downstream-main}
\end{figure}

\section{Conclusion}
This paper presented a simple, predictive \emph{merging scaling law} that links model size and the number of merged experts via a floor+tail power law. This law unifies a broad set of empirical regularities: larger bases lower the size-dependent floor, most improvement arrives at small \(k\), variance contracts with additional experts, method gaps compress at scale, and merge order quickly becomes inconsequential. The same power law form holds in-domain and cross-domain, and transfers across architectures and representative merging methods.
Beyond description, the law is prescriptive. A lightweight fit from a few early points forecasts the full loss–vs.–\(k\) curve and recommends an efficient expert count, enabling budget-aware decisions about when to stop adding experts and how to trade off scaling the base model versus increasing \(k\). Expert capacity, such as LoRA rank, adapter width, or fine-tuning token budget, is absorbed into the fitted floor and tail in this study and remains a natural additional scaling axis. Together, these results elevate merging from heuristic practice to a computationally efficient, budget-aware alternative to multitask fine-tuning.

\newpage


\section*{Acknowledgements}

This paper is fully supported by a grant from the Research Grants Council of the Hong Kong Special Administrative Region, China (Project No. T41-517/25-N).

\section*{Impact Statement}

This work aims to advance the understanding of model merging by providing a principled scaling law that characterizes how performance evolves with model size and the number of merged experts.
By offering theoretical insight and practical guidance for efficient expert merging, our results may help reduce unnecessary computation and resource usage in large-scale model development.
The techniques studied here operate on trained models and do not introduce new learning objectives or data sources, and thus inherit the same ethical considerations as existing large language models.
While model merging can potentially lower the cost of deploying specialized capabilities, it may also contribute to the broader accessibility of powerful models, with societal impacts that are well studied in the machine learning literature.
We do not foresee any specific negative ethical consequences unique to this work beyond those already associated with large-scale machine learning systems.

\section*{Reproducibility statement}
We have made every effort to ensure that the results reported in this work are reproducible. All models and datasets employed are publicly available, and we describe the methodological choices, data sources, and evaluation protocols in detail in Section~\ref{sec:bg} of the main text. Additional implementation details and hyperparameters are documented in Appendix~\ref{app:expert_setting}. Furthermore, we provide the complete source code as supplementary material to facilitate replication and independent verification. Our checkpoints will also be released.

\bibliography{example_paper}
\bibliographystyle{icml2026}

\newpage
\appendix
\onecolumn
\section*{Statement of LLMs Usage}
We utilized a Large Language Model (LLM) solely as \textbf{an editing tool} for syntactic error correction and stylistic enhancement. It is important to note that the LLM \textbf{did not} participate in any aspect of the core research, such as the generation or revision of the central research ideas, the design of experiments, or the overall organization and chapter arrangement of this paper.

\section*{Limitations.}
Our main claim concerns expected token-level cross-entropy under merging. This choice is deliberate: CE is dense, relatively low-variance, and directly aligned with the local second-order analysis that gives rise to the floor+tail form. Downstream benchmark scores provide complementary evidence of utility, but they need not obey the same scaling law. In practice, they can plateau earlier than CE as $k$ grows, since they are sparser, more thresholded, and typically noisier aggregates of task success. We therefore view the contribution of this paper as a predictive law for merge loss and a principled way to reason about the tradeoff between model size and expert count, rather than as an exact predictor of every downstream benchmark curve.
Our study centers on cross-entropy and equal-normalized composition; extending to other objectives and adaptive weighting is an important next step. While the law is robust across datasets, methods, and backbones we tested, probing extreme scales, additional modalities, and broader downstream metrics (robustness, safety, calibration) remains future work. We also keep expert capacity controlled rather than treating it as a third scaling axis; changing LoRA rank, adapter width, training tokens, or expert quality should alter the effective-update statistics and therefore the fitted floor/tail parameters. On the theoretical side, refining the link between floor/tail parameters, curvature anisotropy, and domain dispersion, and developing selection/ordering policies that exploit these quantities, could further tighten predictions and automate practical merging at scale.

\section{Model Merging Recipes}
\label{app:recipes}

We use a unified form to represent all of these recipes in Table~\ref{tab:unified}.

\begin{table}[]
\centering
\caption{Unified view of model merging recipes}
\label{tab:unified}
\begin{tabular}{@{} l c c c c @{}}
\toprule
\textbf{Method} & $\Psi(v)$ & $c$ & $\alpha$ & Add. Params \\
\midrule
Average & $v$ & 1 & $\alpha=1/k$ & - \\
TA      & $v$ & 0.8 & $\alpha=1/k$ & - \\
TIES    & Trim, Elect, Disjoint & 1 & $\alpha=1/k$ & $d=1.0$ \\
DARE    & $ m \odot v / (1-p)$ & 1 & $\alpha=1/k$ & $p=0.2$ \\
\bottomrule
\end{tabular}
\end{table}

\section{Detailed proof of Theorem~\ref{thm:avg-2D}}
\label{app:avg-2d-approx}

We fix a model size $N$ and omit $(N)$ when clear. 
Following Assumption \ref{sec:assumption}:
(i) $L$ is twice continuously differentiable near $\theta_0$ with an $M$-Lipschitz Hessian;
(ii) task vectors $v_i$ are i.i.d.\ with mean $\mu$ and covariance $\Sigma$, and $\mathbb{E}\|v_i\|^6<\infty$;
(iii) equal-weight normalisation $\alpha_{i,k}=c/k$.

\paragraph{Decomposition.}
Let
\[
\Delta\theta_k(S)\;=\;\sum_{i\in S}\tfrac{c}{k}\,v_i \;=\; c\,\mu + \varepsilon_k(S),
\qquad
\varepsilon_k(S):=\tfrac{c}{k}\sum_{i\in S}\big(v_i-\mu\big).
\]
Expectation $\mathbb{E}[\cdot]$ is with respect to the uniform random $k$-subset $S$ (the same orders follow for i.i.d.\ sampling with replacement) and
$\varepsilon$ means the sampling error.

\begin{lemma}[Moments of the mean-corrected step]\label{lem:moment-approx}
$\mathbb{E}[\varepsilon_k]=0$ and $\mathbb{E}[\varepsilon_k\varepsilon_k^\top]=\tfrac{c^2}{k}\Sigma$.
Moreover, $\mathbb{E}\|\varepsilon_k\|^3=\mathcal{O}(k^{-3/2})$ under $\mathbb{E}\|v_i\|^6<\infty$.
\end{lemma}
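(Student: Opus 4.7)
The plan is to take the three claims in turn, with the main effort going into the $\mathcal{O}(k^{-3/2})$ third-moment bound. Setting $X_i := v_i - \mu$ so that $\varepsilon_k = \tfrac{c}{k}\sum_{i\in S} X_i$, the zero-mean claim is immediate from linearity of expectation together with $\mathbb{E}[X_i]=0$, irrespective of whether $S$ is drawn uniformly without replacement or by i.i.d.\ sampling with replacement. For the covariance I would expand
\[
\mathbb{E}[\varepsilon_k\varepsilon_k^\top] \;=\; \frac{c^2}{k^2}\sum_{i,j\in S}\mathbb{E}\bigl[X_iX_j^\top\bigr],
\]
and invoke the i.i.d.\ structure of the task vectors to kill off-diagonal contributions: distinct summands are independent with mean zero, so only the $k$ diagonal terms $\Sigma$ survive, giving $\tfrac{c^2}{k}\Sigma$ on the nose.

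The third-moment bound is the substantive step. I would invoke a Rosenthal-type (vector Marcinkiewicz--Zygmund) inequality for sums of i.i.d.\ mean-zero random vectors, which gives
\[
\mathbb{E}\Bigl\|\sum_{i=1}^{k} X_i\Bigr\|^{3} \;\le\; C\Bigl(k\,\mathbb{E}\|X_1\|^{3} + k^{3/2}\bigl(\mathbb{E}\|X_1\|^{2}\bigr)^{3/2}\Bigr) \;=\; \mathcal{O}\!\bigl(k^{3/2}\bigr),
\]
where $\mathbb{E}\|X_1\|^{3}<\infty$ follows from $\mathbb{E}\|v_i\|^{6}<\infty$ by Minkowski and Jensen. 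Multiplying by $(c/k)^{3}$ yields $\mathbb{E}\|\varepsilon_k\|^{3}=\mathcal{O}(k^{-3/2})$. As a self-contained alternative, avoiding any external inequality, I would use Hölder's bound $\mathbb{E}\|\varepsilon_k\|^{3}\le(\mathbb{E}\|\varepsilon_k\|^{4})^{3/4}$ combined with a direct expansion of the fourth moment: in the resulting four-index sum, every term containing a singleton index vanishes by $\mathbb{E}[X_i]=0$, leaving only $\mathcal{O}(k^{2})$ pair-partition contributions, hence $\mathbb{E}\|\varepsilon_k\|^{4}=\mathcal{O}(k^{-2})$ and the same rate.

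The only point I expect to require real care is the uniform-without-replacement regime, where $\{X_i\}_{i\in S}$ is not strictly independent once we marginalise over $S$. I would handle this by conditioning on $S$ first: since the full pool is i.i.d., the conditional law of $\{v_i\}_{i\in S}$ is a family of i.i.d.\ draws, so Rosenthal applies inside the conditional expectation and the rate survives after taking the outer expectation. A cleaner alternative is to cite the extension of Rosenthal-type inequalities to negatively associated samples (uniform $k$-subsets of i.i.d.\ sequences qualify), which preserves the rate with only inflated constants. The sixth-moment hypothesis is overkill for the lemma itself; it is presumably calibrated to the Hessian-Lipschitz remainder that enters later in Theorem~\ref{thm:avg-2D} and to the variance shrinkage claimed in Corollary~\ref{cor:variance-2D}.
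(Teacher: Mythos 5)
Your proposal is correct and follows essentially the same route as the paper: linearity for the mean, the i.i.d.\ centred-vector computation for the covariance, and a Marcinkiewicz--Zygmund/Rosenthal bound $\mathbb{E}\|\sum_i X_i\|^3=\mathcal{O}(k^{3/2})$ scaled by $(c/k)^3$ for the third moment. Your added remarks (the self-contained fourth-moment/H\"older alternative, the without-replacement caveat, and the observation that the sixth-moment hypothesis is only needed later) are sound but go beyond what the paper's proof records.
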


\begin{proof}
Linearity gives $\mathbb{E}[\varepsilon_k]=0$. For the second moment, averaging $k$ i.i.d.\ centred vectors yields covariance $c^2\Sigma/k$. The $p=3$ Marcinkiewicz--Zygmund \citep{ortega2007marcinkiewicz, ibragimov1999analogues} inequality gives 
\[
\mathbb{E}\|\varepsilon_k\|^3 \;\le\; \frac{C_3\,c^3}{k^{3/2}}\Big(\mathbb{E}\|\xi_1\|^2\Big)^{3/2} \;+\; \frac{C_3'\,c^3}{k^2}\mathbb{E}\|\xi_1\|^3
\;=\; \mathcal{O}\!\Big(\frac{1}{k^{3/2}}\Big),
\]
for $\xi_i:=v_i-\mu$, hence the stated rate after multiplying by $c^3$.
\end{proof}

\paragraph{Step 1: Taylor expand at $\theta_0+c\mu$.}
Define $\phi(\delta):=L(\theta_0+c\mu+\delta)$.
Let $a:=\nabla \phi(0)=\nabla L(\theta_0+c\mu)$ and $H_S:=\nabla^2\phi(0)=\nabla^2L(\theta_0+c\mu)$.
The Hessian is $M$-Lipschitz, hence the second-order Taylor formula with remainder
\begin{equation}
\label{eq:taylor-asym}
\phi(\delta)=\phi(0)+a^\top\delta+\tfrac12\,\delta^\top H_S\delta+R_S(\delta),
\qquad |R_S(\delta)|\le \tfrac{M}{6}\|\delta\|^3.
\end{equation}
Plugging $\delta=\varepsilon_k(S)$ and taking expectation, using Lemma~\ref{lem:moment-approx},
\begin{align}
\mathbb{E}\big[L(\theta_k(S))\big]
&= L(\theta_0+c\mu) + a^\top\mathbb{E}[\varepsilon_k] 
  + \tfrac12\,\mathbb{E}\big[\varepsilon_k^\top H_S\varepsilon_k\big] + \mathbb{E}[R_S(\varepsilon_k)] \nonumber\\
&= L(\theta_0+c\mu) + \tfrac12\,\mathrm{Tr}\!\big(H_S\,\mathbb{E}[\varepsilon_k\varepsilon_k^\top]\big) 
  + \mathbb{E}[R_S(\varepsilon_k)] \nonumber\\
&= L(\theta_0+c\mu) + \frac{1}{2}\,c^2\,\mathrm{Tr}\!\big(H_S\Sigma\big)\cdot\frac{1}{k}
  + \mathcal{O}\!\Big(\frac{1}{k^{3/2}}\Big).
\label{eq:avg-at-asym}
\end{align}
Thus, at the \emph{asymptote point} $\theta_0+c\mu$ the averaged curve has a $1/k$ tail with coefficient $\tfrac12 c^2\mathrm{Tr}(H_S\Sigma)$, up to $\mathcal{O}(k^{-3/2})$.

\paragraph{Step 2: relate $(L_\infty(N),A(N))$ used in the main text to the above.}
In the main text we present the $k\!\to\!\infty$ intercept and tail amplitude at the \emph{base} $\theta_0$, using a PSD curvature surrogate $H$ (e.g., GGN/Fisher) evaluated at $\theta_0$:
\[
L_\infty(N)\ :=\ L(\theta_0)+c\,g^\top\mu+\tfrac12 c^2\,\mu^\top H\,\mu, 
\qquad
A(N)\ :=\ \tfrac12 c^2\,\mathrm{Tr}(H\Sigma),
\]
where $g=\nabla L(\theta_0)$.

To connect these to \eqref{eq:avg-at-asym}, apply Taylor at $\theta_0$ with the same Lipschitz-$M$ control:
\begin{equation}
\label{eq:link-intercept}
L(\theta_0+c\mu)
= L(\theta_0) + c\,g^\top\mu + \tfrac12 c^2\,\mu^\top H\,\mu \;+\; \rho_0,
\qquad
|\rho_0|\ \le\ \frac{M}{6}\,c^3\|\mu\|^3 \;+\; \underbrace{\tfrac12 c^2\,\big|\mu^\top(\nabla^2L(\theta_0)-H)\mu\big|}_{\text{curvature surrogate error}}.
\end{equation}
Similarly, since $\|H_S-\nabla^2L(\theta_0)\|_{\rm op}\le M\,c\|\mu\|$ (Hessian Lipschitz along the segment),
\begin{equation}
\label{eq:link-tail}
\mathrm{Tr}(H_S\Sigma) 
= \mathrm{Tr}(H\Sigma) \;+\; \eta_0,
\qquad
|\eta_0|\ \le\ \|\,(H_S-H)\,\|_{\rm op}\,\mathrm{Tr}(\Sigma)\ \le\ M\,c\|\mu\|\,\mathrm{Tr}(\Sigma)\;+\;\big|\mathrm{Tr}\big((\nabla^2L(\theta_0)-H)\Sigma\big)\big|.
\end{equation}
Combining \eqref{eq:avg-at-asym}--\eqref{eq:link-tail},
\begin{equation}
\label{eq:main-approx}
\mathbb{E}\big[L(\theta_k(S))\big]
= \underbrace{L(\theta_0)+c\,g^\top\mu+\tfrac12 c^2\,\mu^\top H\,\mu}_{L_\infty(N)}
\;+\; \underbrace{\tfrac12 c^2\,\mathrm{Tr}(H\Sigma)}_{A(N)}\cdot\frac{1}{k}
\;+\; R_{N,k},
\end{equation}
with an explicit error bound
\begin{equation}
\label{eq:error-bound}
|R_{N,k}|
\ \le\ 
\underbrace{|\rho_0|}_{\mathcal{O}(\|\mu\|^3)\ \text{+ surrogate}}\ 
+\ \underbrace{\frac{1}{2}c^2\,\frac{|\eta_0|}{k}}_{\mathcal{O}(\|\mu\|/k)\ \text{+ surrogate}}\ 
+\ \underbrace{C\,k^{-3/2}}_{\text{from }\mathbb{E}[R_S(\varepsilon_k)]},
\end{equation}
where $C$ depends on $M$, $c$ and the sixth-moment bound of $v_i$.
Hence,
\[
\boxed{\quad
\mathbb{E}[L\mid N,k]
= L_\infty(N)\;+\;\frac{A(N)}{k}\;+\;\mathcal{O}_N\!\Big(\frac{1}{k^{3/2}}\Big)\;+\;\mathcal{O}_N\!\big(\|\mu\|^3\big)\;+\;\mathcal{O}_N\!\Big(\frac{\|\mu\|}{k}\Big)\;+\;\text{(error)}.
\quad}
\]

\paragraph{Interpretation of the approximation terms.}
The $\mathcal{O}(k^{-3/2})$ term is the genuine \emph{averaging} remainder from Step~1.
The $\mathcal{O}(\|\mu\|^3)$ and $\mathcal{O}(\|\mu\|/k)$ terms arise from \emph{using base-point coefficients} $(g,H)$ to parameterise the intercept and tail: when $\|\mu\|$ is moderate (typical in practice for adapter/LoRA merging or small $c$), these terms are dominated by the leading $1/k$ tail. 
Any persistent curvature-surrogate mismatch at $\theta_0$ is absorbed into the (fitted) $L_\infty(N)$ and $A(N)$ in the empirical model; it does not change the $1/k$ rate.

\paragraph{Conclusion (Theorem~\ref{thm:avg-2D} in $\approx$ form).}
Collecting the above, for each fixed $N$,
\[
\mathbb{E}[L\mid N,k]\;\approx\;L_\infty(N)\;+\;\frac{A(N)}{k},
\]
with a quantitative remainder given by \eqref{eq:error-bound}.
Equivalently, at the granularity used in the main text,
\[
\mathbb{E}[L\mid N,k]\;=\;L_\infty(N)\;+\;\frac{A(N)}{k}\;+\;\mathcal{O}_N\!\Big(\frac{1}{k^{3/2}}\Big),
\]
where the $N$-dependent constants (including the small base-point/curvature-surrogate discrepancies) are absorbed into $L_\infty(N),A(N)$—exactly the form fitted in our 2D scaling law.
\hfill$\square$

\section{Detailed proof of Corollary~\ref{cor:variance-2D}}
\label{app:var-2d-proof}

We continue with the setting and notation of Appendix~\ref{app:avg-2d-approx}.
Fix a model size $N$ and omit $(N)$ when clear.
Based on the equation \ref{eq:taylor-asym}, the second-order expansion at $\theta_0+c\mu$:
\begin{equation}
\label{eq:taylor-asym-again}
L(\theta_0+c\mu+\delta)
= L(\theta_0+c\mu)\;+\;a^\top\delta\;+\;\tfrac12\,\delta^\top H_S\delta\;+\;R_S(\delta),\qquad
|R_S(\delta)|\le \tfrac{M}{6}\|\delta\|^3,
\end{equation}
with $a:=\nabla L(\theta_0+c\mu)$ and $H_S:=\nabla^2 L(\theta_0+c\mu)$.
Besides Lemma~\ref{lem:moment-approx} (which gave $\mathbb{E}[\varepsilon_k]=0$, $\Cov(\varepsilon_k)=\frac{c^2}{k}\Sigma$, and $\mathbb{E}\|\varepsilon_k\|^3=\mathcal{O}(k^{-3/2})$), we will need $p{=}4,6$ moment bounds.
By Marcinkiewicz--Zygmund / Rosenthal inequalities \citep{ortega2007marcinkiewicz},
\begin{equation}
\label{eq:eps-p-mom}
\mathbb{E}\|\varepsilon_k\|^p \;=\; \mathcal{O}\!\big(k^{-p/2}\big),\qquad p\in\{2,4,6\}.
\end{equation}

Then we make a variance decomposition.
By \eqref{eq:taylor-asym-again} with $\delta=\varepsilon_k(S)$,
\[
L(\theta_k(S)) \;=\; C \;+\; \underbrace{a^\top\varepsilon_k}_{\textstyle L_1}
\;+\; \underbrace{\tfrac12\,\varepsilon_k^\top H_S\,\varepsilon_k}_{\textstyle L_2}
\;+\; \underbrace{R_S(\varepsilon_k)}_{\textstyle L_3},
\qquad C:=L(\theta_0+c\mu).
\]
Hence
\begin{align}
\Var\big(L(\theta_k(S))\big)
&=\Var(L_1)+\Var(L_2)+\Var(L_3)\nonumber\\
&\quad +\,2\Cov(L_1,L_2)+2\Cov(L_1,L_3)+2\Cov(L_2,L_3).
\label{eq:vardec}
\end{align}
We bound the six pieces one by one.

\paragraph{(i) Linear term: $\Var(L_1)$.}
Since $\mathrm{Var}[\varepsilon_k]=0$ and $\Cov(\varepsilon_k)=\frac{c^2}{k}\Sigma$,
\begin{equation}
\label{eq:var-linear}
\Var(L_1)\;=\;\Var(a^\top\varepsilon_k)\;=\;a^\top \Cov(\varepsilon_k) a\;=\;\frac{c^2}{k}\,a^\top\Sigma\,a.
\end{equation}

\paragraph{(ii) Quadratic term: $\Var(L_2)$.}
Using $(x^\top A x)^2\le \|A\|_F^2\|x\|^4$,
\[
\mathbb{E}[L_2^2]\;=\;\tfrac14\,\mathbb{E}\big[(\varepsilon_k^\top H_S\varepsilon_k)^2\big]
\;\le\;\tfrac14\,\|H_S\|_F^2\,\mathbb{E}\|\varepsilon_k\|^4
\;=\;\mathcal{O}\!\Big(\frac{1}{k^2}\Big)
\]
by \eqref{eq:eps-p-mom} with $p{=}4$.
Moreover $\mathbb{E}[L_2]=\tfrac12\,\mathbb{E}[\varepsilon_k^\top H_S\varepsilon_k]=\tfrac12\,\Tr(H_S\Cov(\varepsilon_k))=\tfrac12\,\tfrac{c^2}{k}\Tr(H_S\Sigma)$, so $|\mathbb{E}[L_2]|=\mathcal{O}(1/k)$, hence $|\mathbb{E}[L_2]|^2=\mathcal{O}(1/k^2)$.
Therefore
\begin{equation}
\label{eq:var-quad}
\Var(L_2)\;=\;\mathbb{E}[L_2^2]-\mathbb{E}[L_2]^2\;=\;\mathcal{O}\!\Big(\frac{1}{k^2}\Big).
\end{equation}

\paragraph{(iii) Remainder: $\Var(L_3)$.}
By \eqref{eq:taylor-asym-again}, $|L_3|\le \tfrac{M}{6}\|\varepsilon_k\|^3$, so
$\mathbb{E}[L_3^2]\;\le\;\Big(\tfrac{M}{6}\Big)^2\,\mathbb{E}\|\varepsilon_k\|^6
\;=\;\mathcal{O}\!\Big(\frac{1}{k^3}\Big)$
by \eqref{eq:eps-p-mom} with $p{=}6$, hence
\begin{equation}
\label{eq:var-rem}
\Var(L_3)\;\le\;\mathbb{E}[L_3^2]\;=\;\mathcal{O}\!\Big(\frac{1}{k^3}\Big).
\end{equation}

\paragraph{(iv) Covariances.}
By Cauchy--Schwarz and the above variance bounds,
\begin{align}
|\Cov(L_1,L_2)|&\le \sqrt{\Var(L_1)\Var(L_2)}=\mathcal{O}\!\Big(\frac{1}{k^{3/2}}\Big),
\label{eq:cov12}\\
|\Cov(L_1,L_3)|&\le \sqrt{\Var(L_1)\Var(L_3)}=\mathcal{O}\!\Big(\frac{1}{k^{2}}\Big),
\label{eq:cov13}\\
|\Cov(L_2,L_3)|&\le \sqrt{\Var(L_2)\Var(L_3)}=\mathcal{O}\!\Big(\frac{1}{k^{5/2}}\Big).
\label{eq:cov23}
\end{align}

Then combining \eqref{eq:vardec}--\eqref{eq:cov23},
\begin{equation}
\label{eq:var-final-expansion}
\Var\big(L(\theta_k(S))\big)
\;=\;
\frac{c^2}{k}\,a^\top\Sigma\,a
\;+\;\mathcal{O}\!\Big(\frac{1}{k^2}\Big).
\end{equation}
Here \(\mathcal{O}(1/k^2)\) is a \emph{one-sided upper bound} on the remainder.
If \(\alpha>0\), which is non-degenerate case, there exist constants \(C_1,C_2>0\) and \(k_0\) such that for all \(k\ge k_0\),
\[
\frac{C_1}{k}\ \le\ \Var\big(L(\theta_k(S))\big)\ \le\ \frac{C_2}{k},
\]
hence 
\[
\Var\big(L(\theta_k(S))\big)\;=\;\Theta\!\Big(\frac{1}{k}\Big),
\qquad
\mathrm{sd}\big(L(\theta_k(S))\big)\;=\;\mathcal{O}\!\Big(\frac{1}{\sqrt{k}}\Big).
\]

For the degenerate linear term, where $a^\top\Sigma a=0$, the linear contribution vanishes and \eqref{eq:var-quad}--\eqref{eq:cov23} give the uniform bound
\[
\Var\big(L(\theta_k(S))\big)=\mathcal{O}\!\Big(\frac{1}{k^2}\Big).
\]
Moreover, whenever $H_S$ is nonzero on the range of $\Sigma$ and the fourth central moments of $v_i$ are not all degenerate along that range (a mild condition satisfied in our experiments), the quadratic fluctuation has \emph{nonzero} variance constant, so the bound is tight:
\[
\Var\big(L(\theta_k(S))\big)=\Theta\!\Big(\frac{1}{k^2}\Big).
\]
\hfill$\square$

\section{Expert Model Details}
\label{app:expert_setting}

\begin{table}[h]
\centering
\caption{Training Hyperparameters}
\label{tab:training_hparams}
\begin{tabular}{ll}
\toprule
\textbf{Hyperparameter} & \textbf{Value} \\
\midrule
Batch Size & 16 \\
Learning Rate & $1 \times 10^{-5}$ \\
Warmup Ratio & 0.05 \\
Number of Epochs & 2 \\
Maximum Sequence Length & 16,384 \\
Optimizer & Adam (with offloading) \\
Precision & bfloat16 \\
Gradient Checkpointing & Enabled \\
Zero Redundancy Optimizer Stage & 3 \\
\bottomrule
\end{tabular}
\end{table}

For evaluation, we evaluate model performance using \textit{(token-level) cross-entropy loss}.
LLM benchmark scores can vary across repeated runs and execution environments \cite{yang2026inficoevalchain}.
Therefore, we randomly sample $30$M tokens from the corresponding validation set for each domain. Let $x_t$ denote the $t$-th token sequence in the evaluation set and $p_\theta(x_t)$ the probability assigned by the model parameterized by $\theta$. The domain-specific loss is defined as the average negative log-likelihood:
\[
\mathcal{L}_{\text{overall}} = - \frac{1}{\sum_{i \in \mathcal{M}} T_i} \sum_{i \in \mathcal{M}} \sum_{t=1}^{T_i} \log p_\theta(x_t|x_{t-1},..,x_1),
\]
where $T_i$ is the number of tokens in domain $i$. Even for a given $k$, we have $\binom{|\mathcal{M}|}{k}$ possible selections to merge. Each such choice yields a potentially distinct merged model. This indicates that the loss is
not only a function of $k$ but also depends on which specific domains
are included. Therefore, for a fixed
$k$, we enumerate all $\binom{|\mathcal{M}|}{k}$ possible subsets of domain
experts and compute the expected loss over them.\footnote{Since the computational overhead of model merging grows with model size, we mitigate the cost by randomly sampling a subset of all possible combinations when the model size exceeds 8B parameters. The complete sampling procedure is detailed in the next section. }

\textbf{Note:} We isolate weight-space merging and its scaling; complementary \emph{model fusion} baselines (e.g., InfiGFusion, InfiFPO~\cite{wang2025infigfusion,gu2025infifpo}) are different ways as they require data and additional training.

\section{Sampling Algorithm}
\label{app:sample}

\begin{algorithm}[H]
\caption{Diverse Permutation Generation}
\label{alg:diverse_perms}
\begin{algorithmic}[1]
\REQUIRE $k \in \mathbb{N}$, base sequence $\mathbf{s} = [1,2,\ldots,9]$
\ENSURE Set of $k$ diverse permutations $\mathcal{P} = \{\pi_1, \ldots, \pi_k\}$

\STATE Initialize $\mathcal{P} \leftarrow \{\mathbf{s}\}$ 
\IF{$k \geq 2$}
    \STATE $\mathcal{P} \leftarrow \mathcal{P} \cup \{\text{reverse}(\mathbf{s})\}$
\ENDIF

\FOR{$i = 3$ to $k$}
    \STATE Generate candidate set $\mathcal{C}$ by random shuffling of $\mathbf{s}$ ($|\mathcal{C}| = 1000$)
    \STATE $\pi^* \leftarrow \arg\max_{\pi \in \mathcal{C}} \min_{\pi' \in \mathcal{P}} d_H(\pi, \pi')$
    \STATE $\mathcal{P} \leftarrow \mathcal{P} \cup \{\pi^*\}$
\ENDFOR

\STATE \textbf{Return} $\mathcal{P}$
\end{algorithmic}
\end{algorithm}

We employ Algorithm~\ref{alg:diverse_perms} to perform sampling over model merge combinations, where $d_H$ denotes the Hamming distance. Fig.~\ref{fig.sample} illustrates a comparison between curves obtained via our sampling strategy (where $k=15$) and those obtained from full merging combinations on the 0.5B model. The results demonstrate that the sampled curves closely align with the full ones, both in terms of overall trend and numerical values.

\section{Scaling Laws for Expert Model Training}

\begin{figure*}[t]
\centering 
\includegraphics[width=0.95\textwidth]{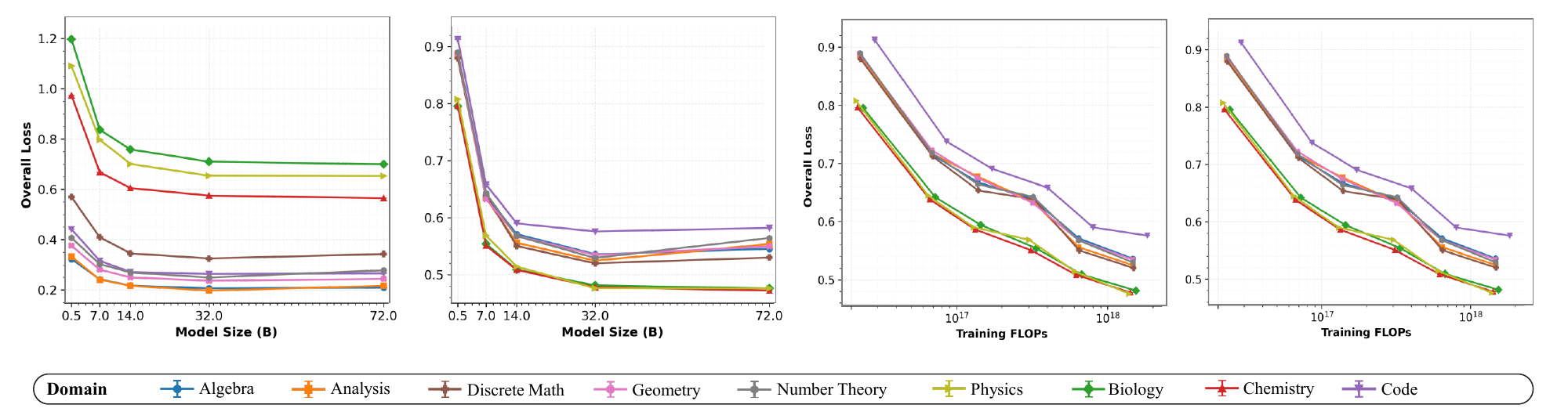}
\caption{Expert Post-training Scaling Law. Expert models performance improves as we increase the model size, computational budget used for post-training.} 
\label{fig:expert_scaling_law}
\end{figure*}

In addition to investigating the scaling laws of model merging, we further examine the scaling behavior of expert models during the post-training stage. Specifically, we conduct a systematic analysis across different domains to understand how post-training affects expert models. 
Our study focuses on characterizing the relationship between the magnitude of the loss and three key factors: model size, the number of training tokens, and the overall computational budget. 
This analysis provides new insights into the scaling laws that govern post-training dynamics and highlights their potential applicability across diverse domains.

\begin{wrapfigure}{r}{0.41\textwidth}
  \vspace{-1.5em}
  \begin{center}
  \includegraphics[width=0.40\textwidth]{/cmp.png}
  \end{center}
  \vspace{-1.3em}
  \caption{Results for different numbers of merged experts on the 0.5B model. The base model is also considered one expert.}
  \label{fig.sample}
\end{wrapfigure}

Fig.~\ref{fig:expert_scaling_law} illustrates the performance of expert models, measured in terms of loss, as a function of model size and computational budget. Overall, we observe a consistent trend across domains: larger models and greater computation generally yield improved performance. This observation aligns with the well-established \textit{language modeling scaling law}~\citep{kaplan2020scaling}. 
Nevertheless, an important distinction arises across domains. For instance, the performance curve in the \textit{Biology} domain exhibits substantially higher loss values compared to that in \textit{Geometry}, even under comparable training conditions. This suggests that the model's pre-existing knowledge reserves differ across domains, leading to heterogeneous post-training dynamics despite identical training configurations.  Such domain-specific disparities may further induce instability when merging expert models trained on heterogeneous knowledge bases. 

\section{Empirical Construction of $\mathbb{E}[L \mid N,k]$}
\label{app:empirical-construction}
{In this section, Figures illustrate the expected loss of different representative cases, where light points show individual subset losses $L(N,k,s)$ for different model sizes, $N$B, while the solid curve traces
the per-$k$ mean $\widehat{\mathbb{E}}[L\mid N,k]$ that we fit our scaling law to.
As $k$ grows, the scatter narrows, but the fitted curve remains smooth, which motivates modelling the
\emph{mean} behaviour rather than individual subsets.}

\begin{figure}[htbp]
    \centering
    \begin{subfigure}{0.32\textwidth}
    \centering
        \includegraphics[width=\linewidth]{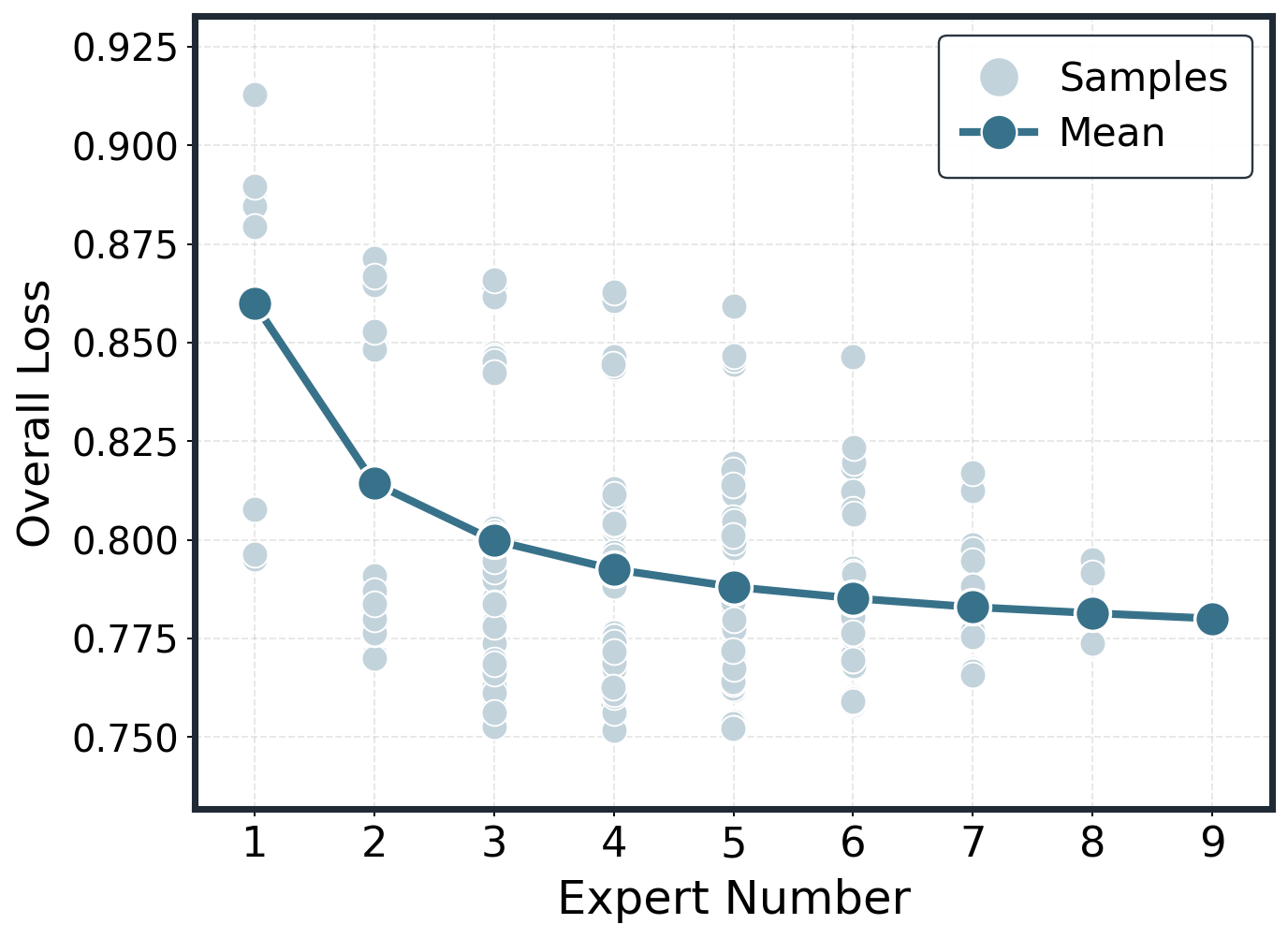}
        \caption{Average merge with $N$=0.5B}
    \end{subfigure}
    \begin{subfigure}{0.32\textwidth}
    \centering
        \includegraphics[width=\linewidth]{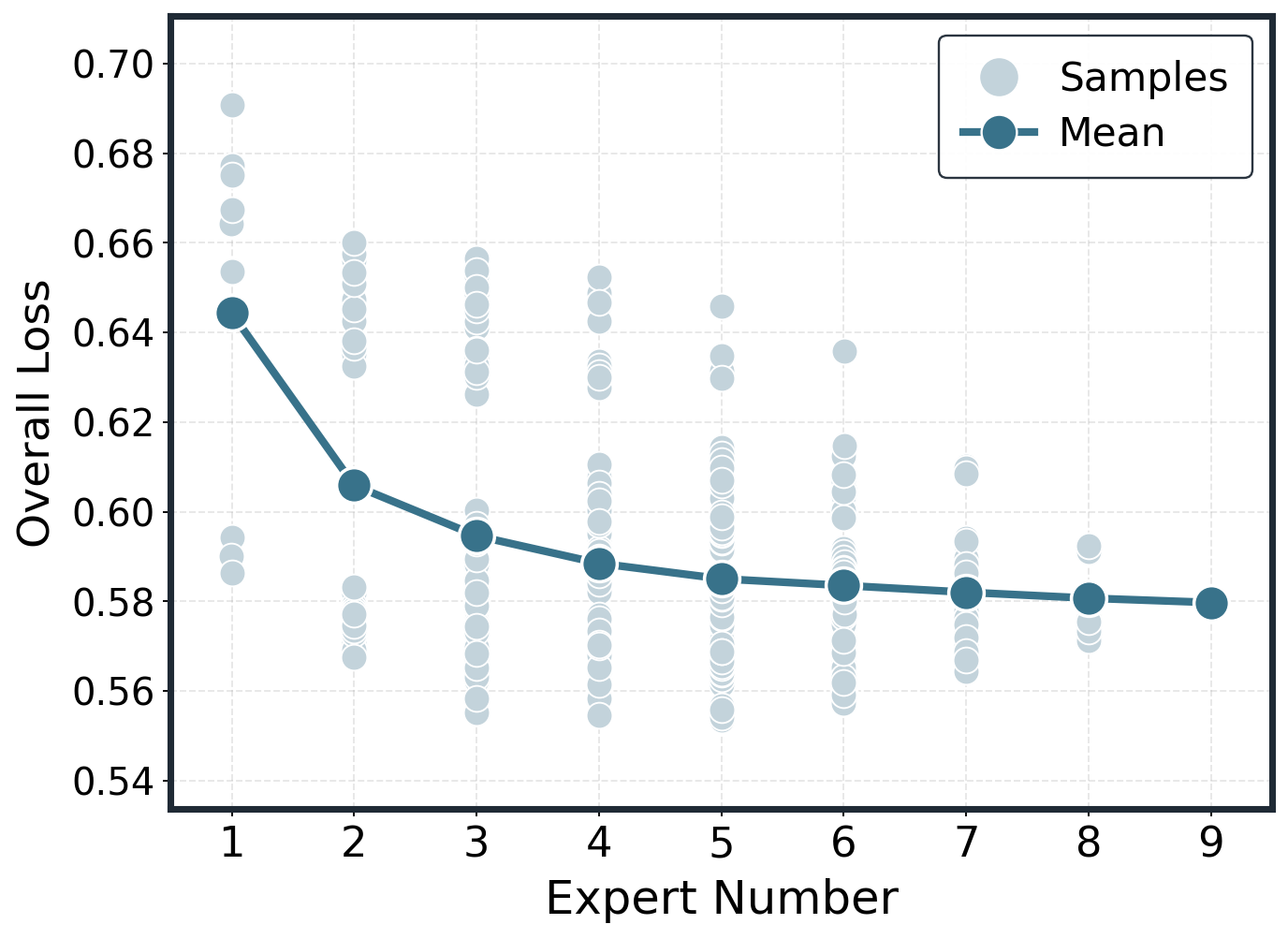}
        \caption{Average merge with $N$=3B}
    \end{subfigure}
    \begin{subfigure}{0.32\textwidth}
    \centering
        \includegraphics[width=\linewidth]{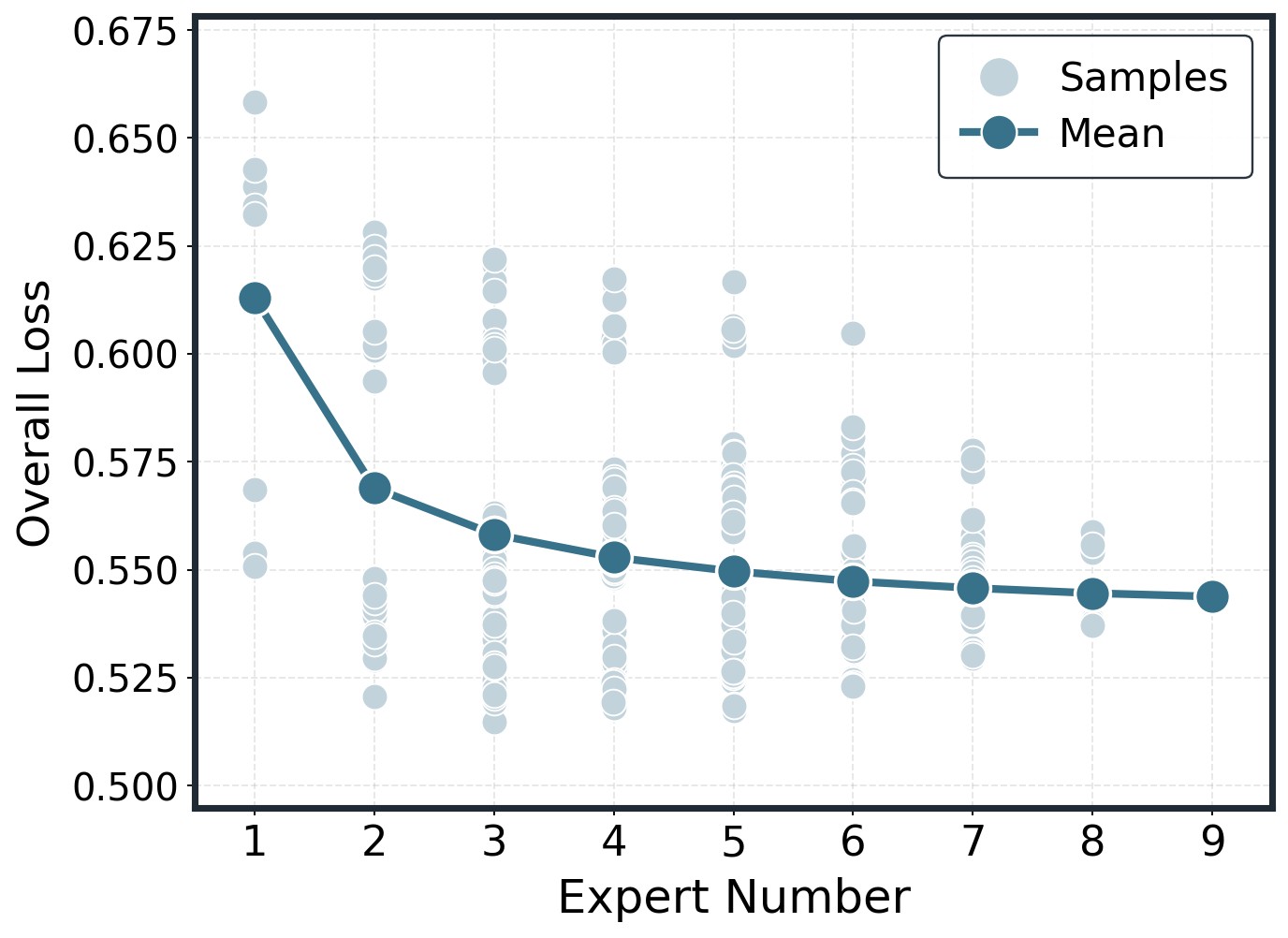}
        \caption{Average merge with $N$=7B}
    \end{subfigure}

    \begin{subfigure}{0.32\textwidth}
    \centering
        \includegraphics[width=\linewidth]{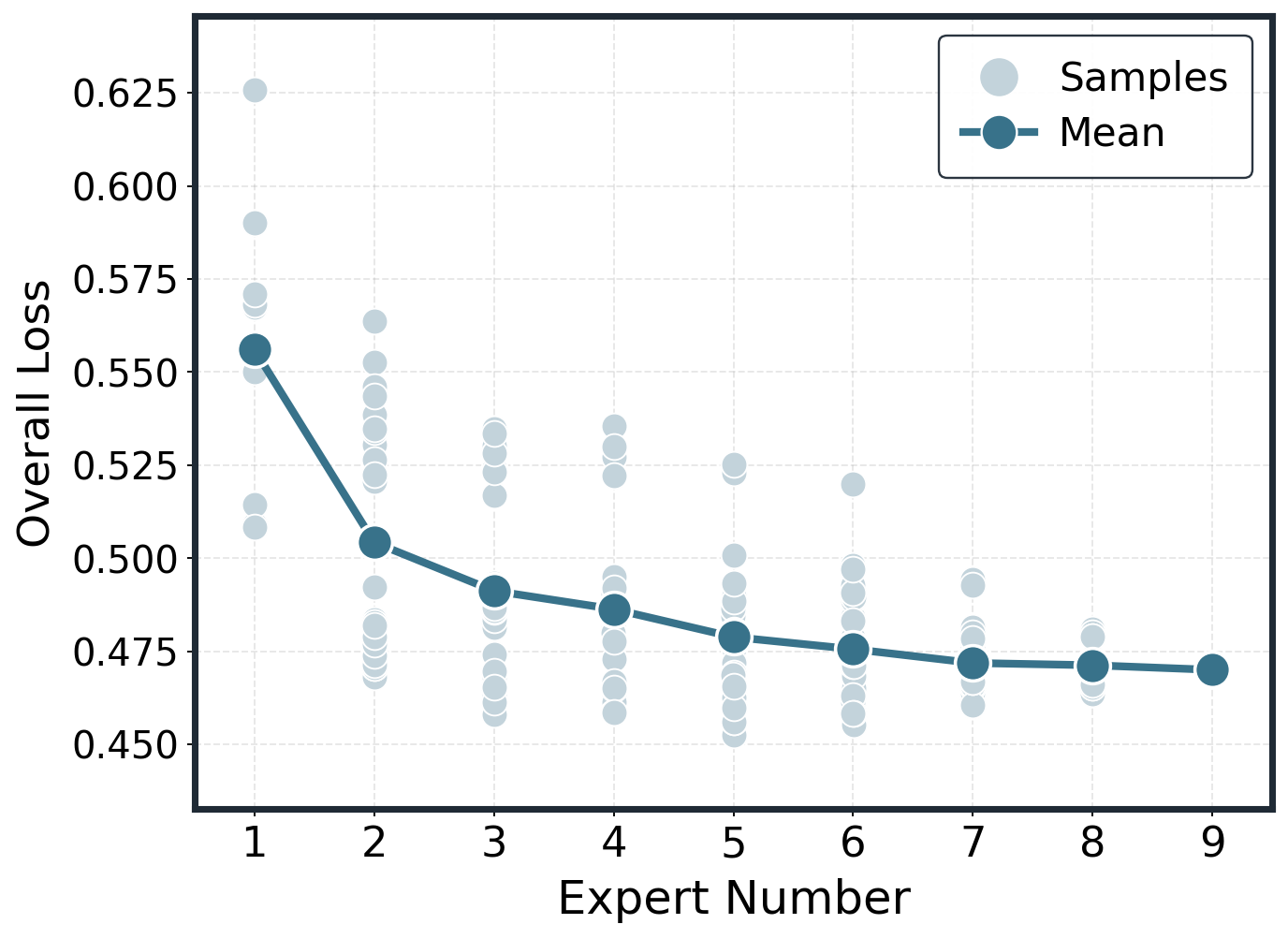}
        \caption{Average merge with $N$=14B}
    \end{subfigure}
    \begin{subfigure}{0.32\textwidth}
    \centering
        \includegraphics[width=\linewidth]{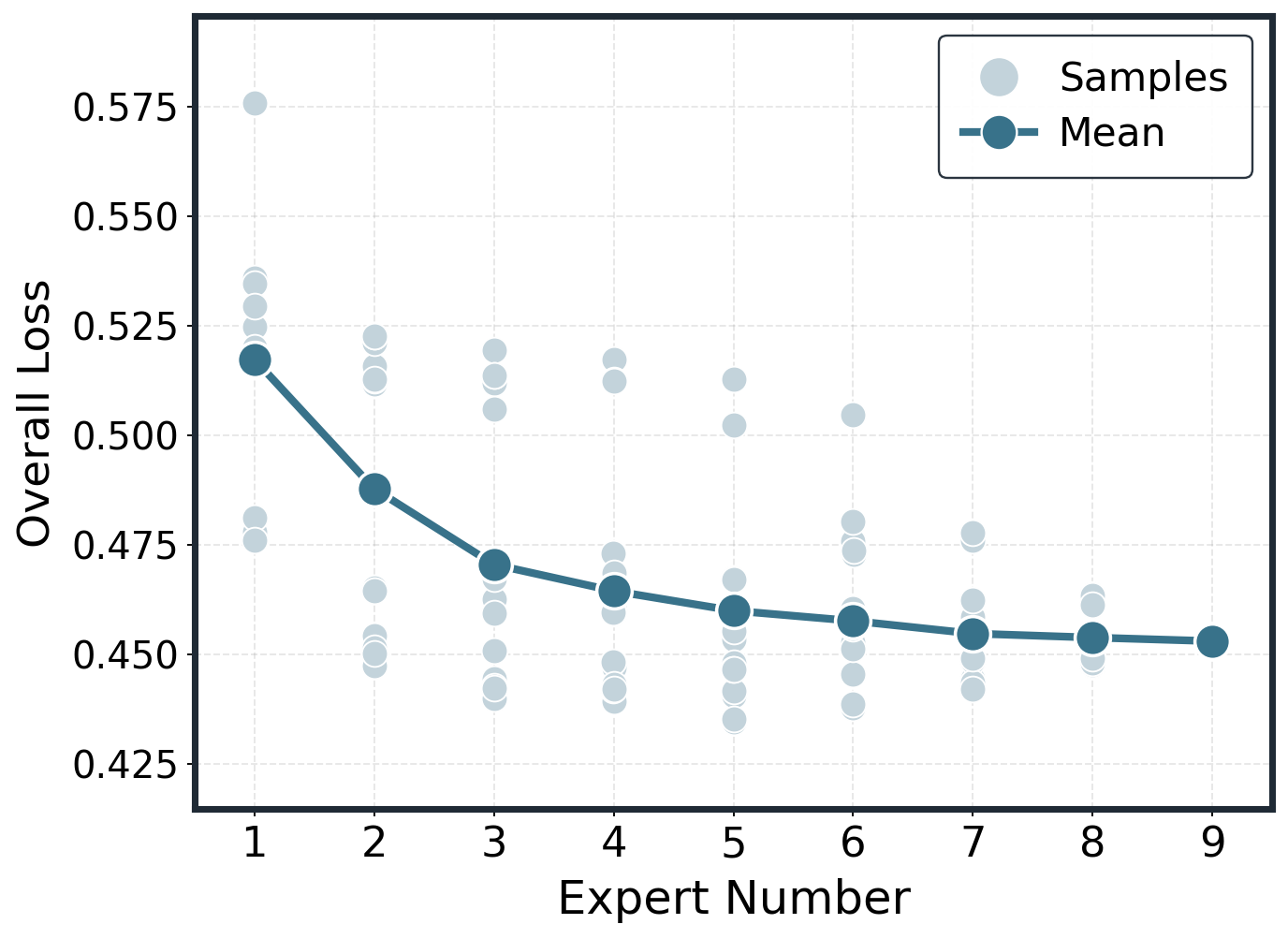}
        \caption{Average merge with $N$=32B}
    \end{subfigure}
    \begin{subfigure}{0.32\textwidth}
    \centering
        \includegraphics[width=\linewidth]{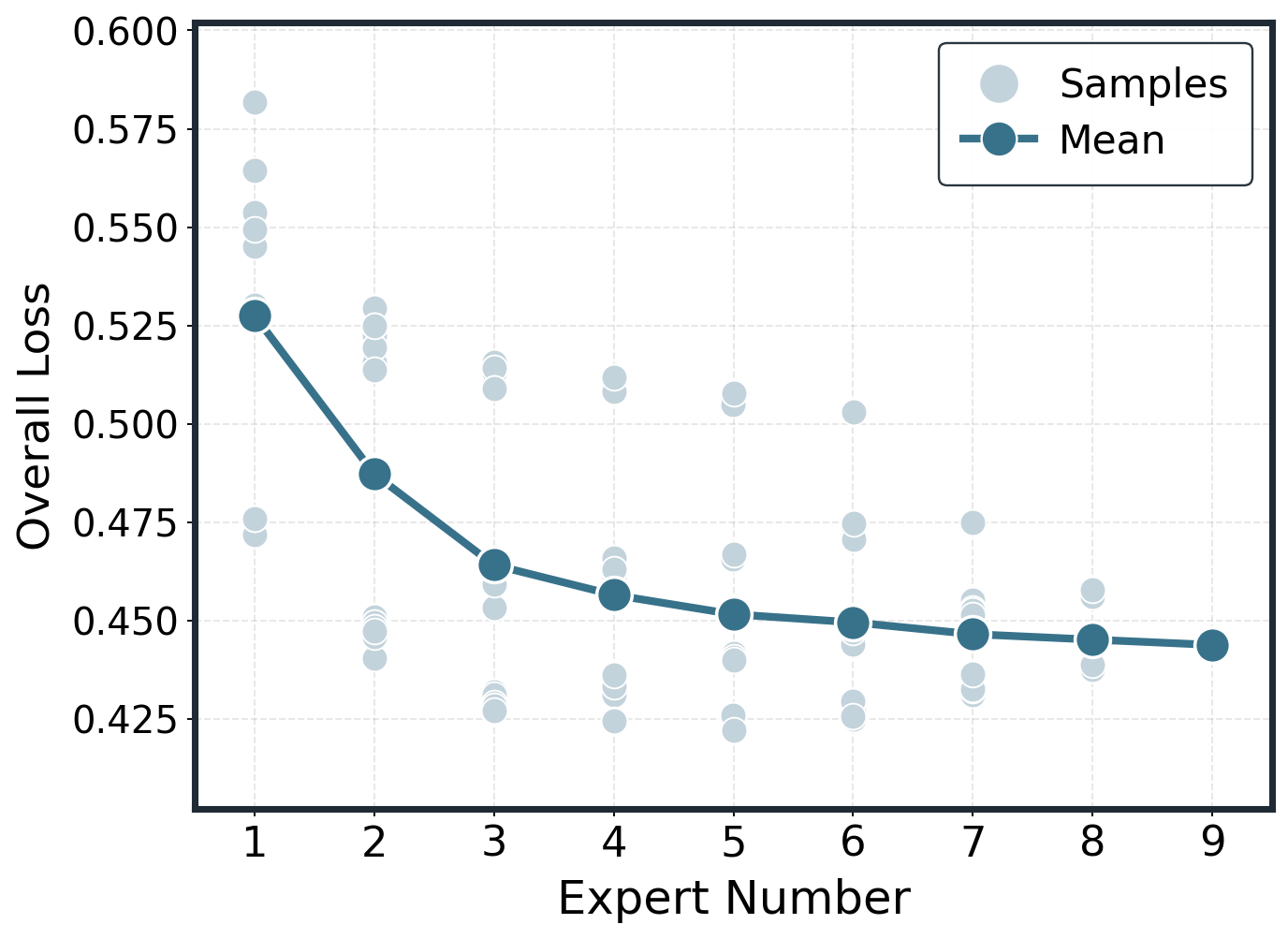}
        \caption{Average merge with $N$=72B}
    \end{subfigure}

    \caption{
    Empirical construction of $\mathbb{E}[L \mid N,k]$ in the cross-domain setting.
    Each figure shows average merging on Qwen 2.5 at a fixed model size. 
    Light points are individual merged models (different expert subsets and seeds), and the solid curve is the empirical mean over all subsets at each $k$.
    }
\end{figure}

\begin{figure}[htbp]
    \centering
    \begin{subfigure}{0.32\textwidth}
    \centering
        \includegraphics[width=\linewidth]{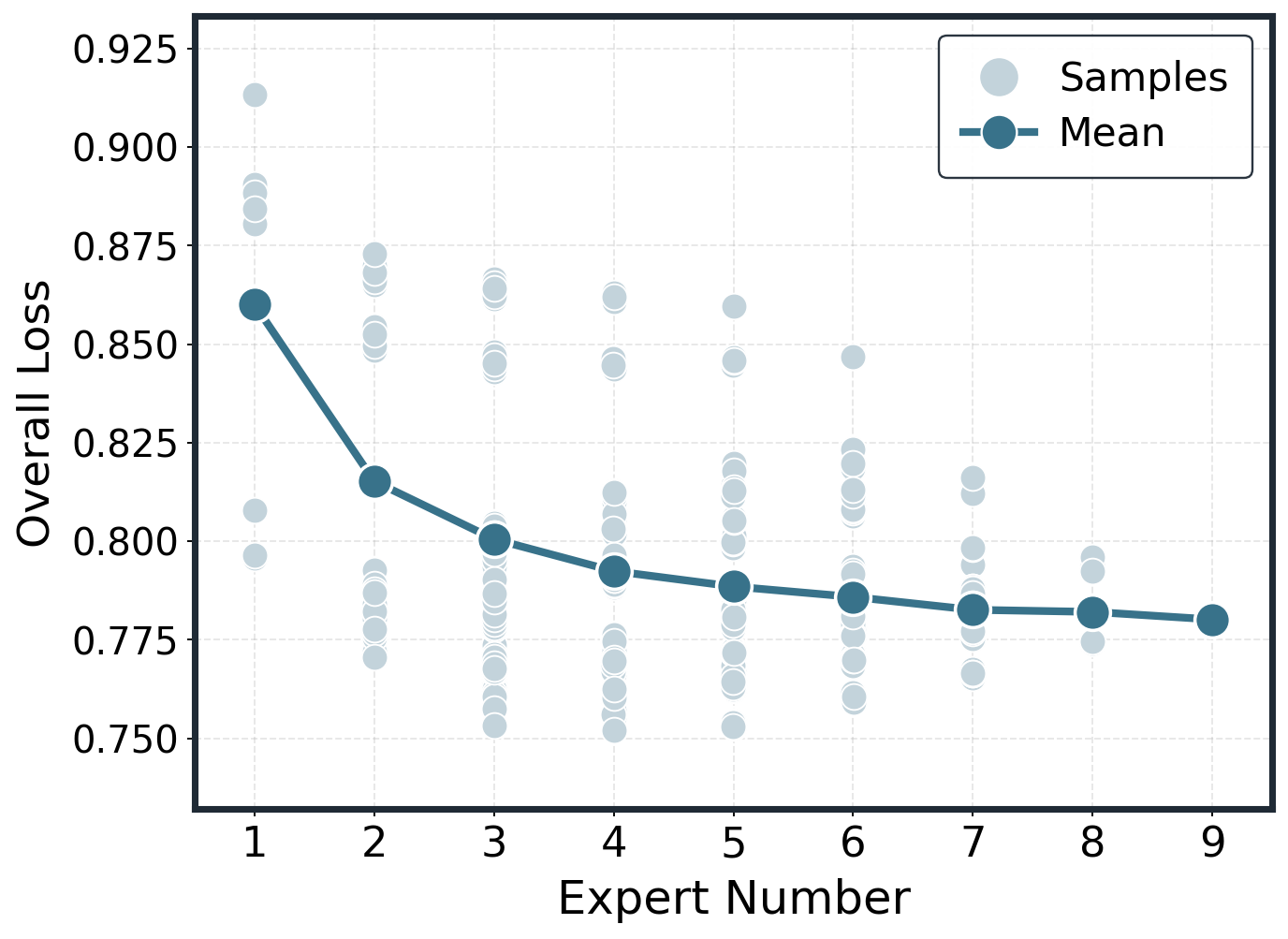}
        \caption{Average merge with $N$=0.5B}
    \end{subfigure}
    \begin{subfigure}{0.32\textwidth}
    \centering
        \includegraphics[width=\linewidth]{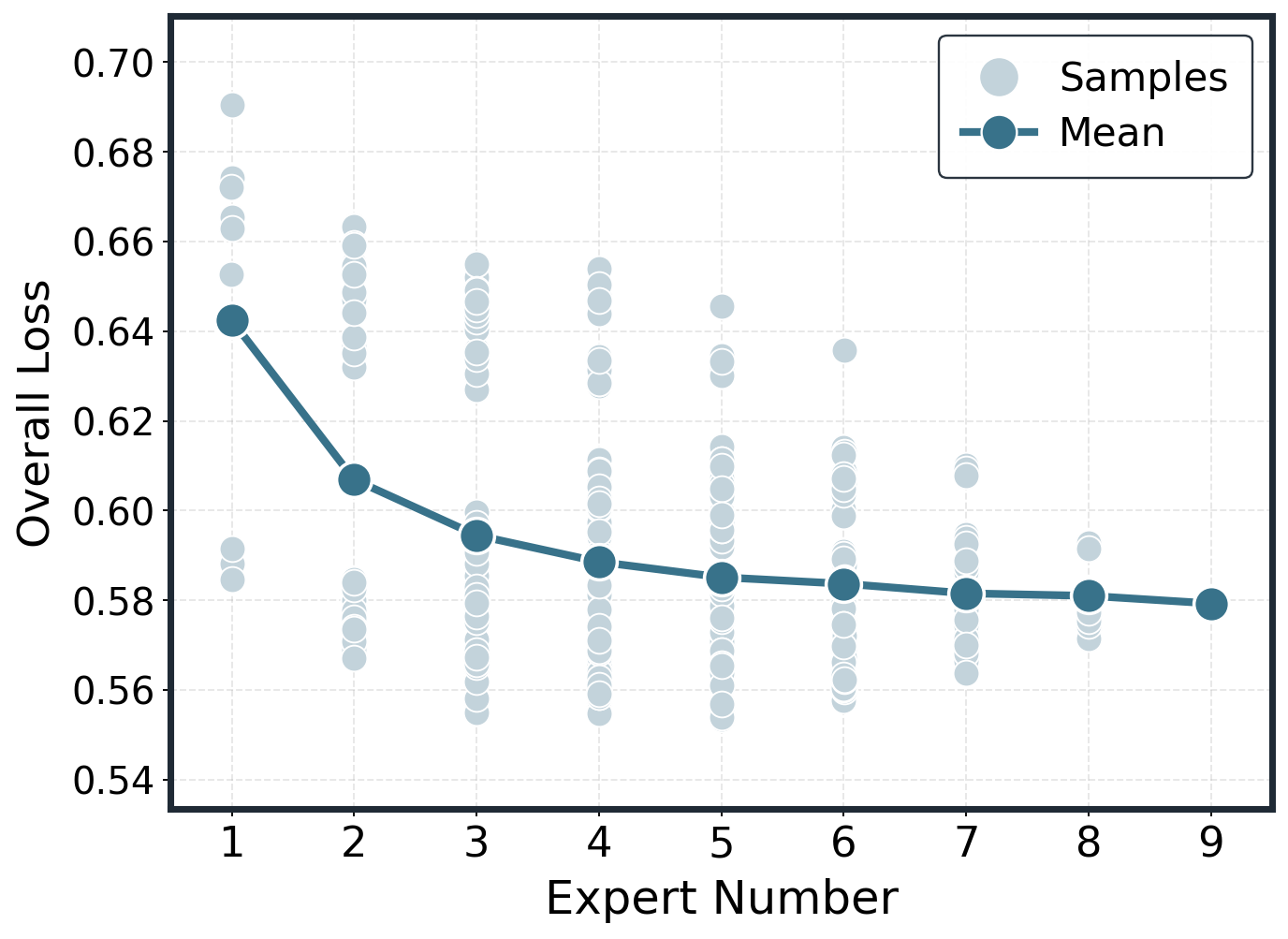}
        \caption{Average merge with $N$=3B}
    \end{subfigure}
    \begin{subfigure}{0.32\textwidth}
    \centering
        \includegraphics[width=\linewidth]{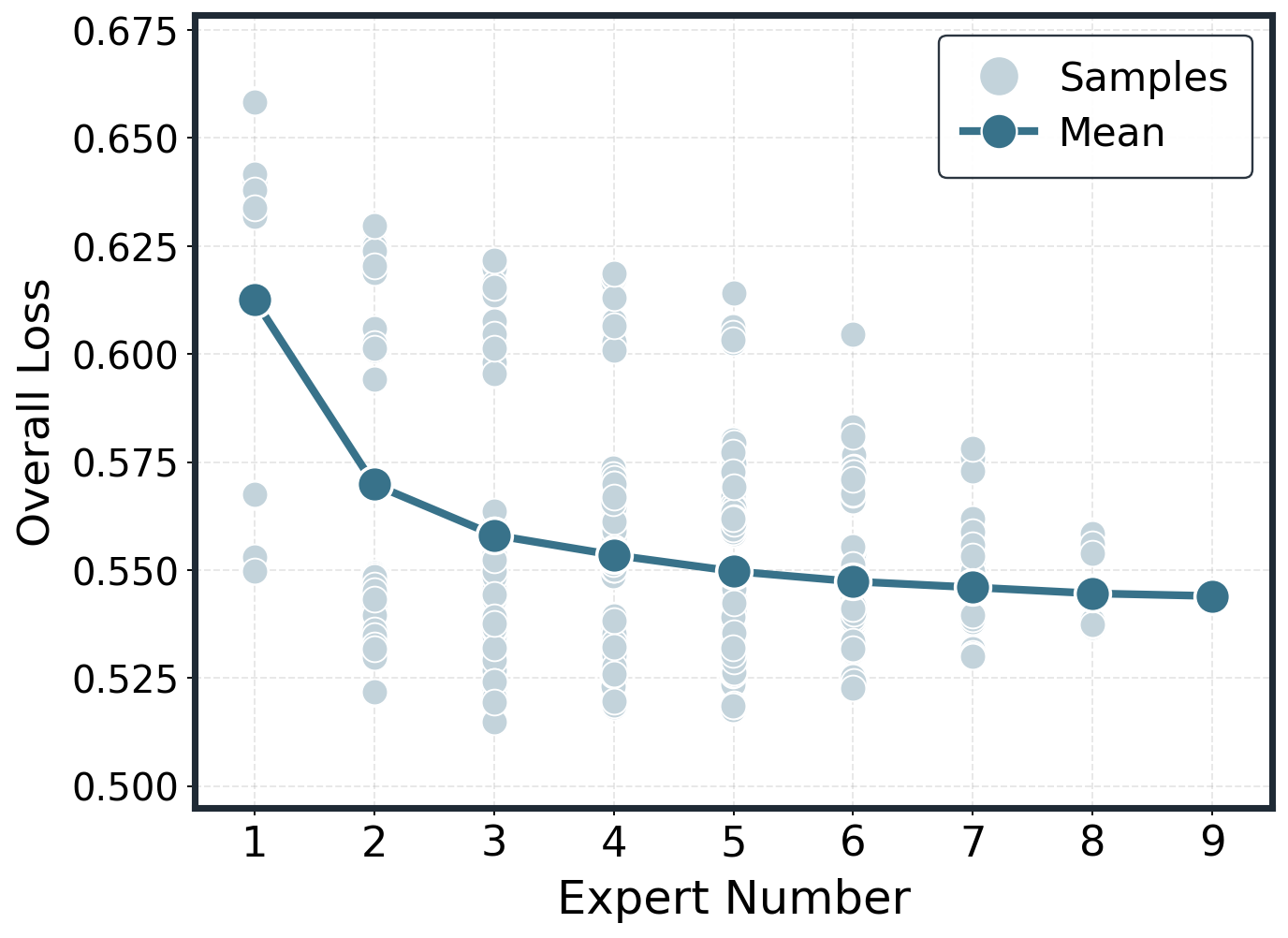}
        \caption{Average merge with $N$=7B}
    \end{subfigure}

    \begin{subfigure}{0.32\textwidth}
    \centering
        \includegraphics[width=\linewidth]{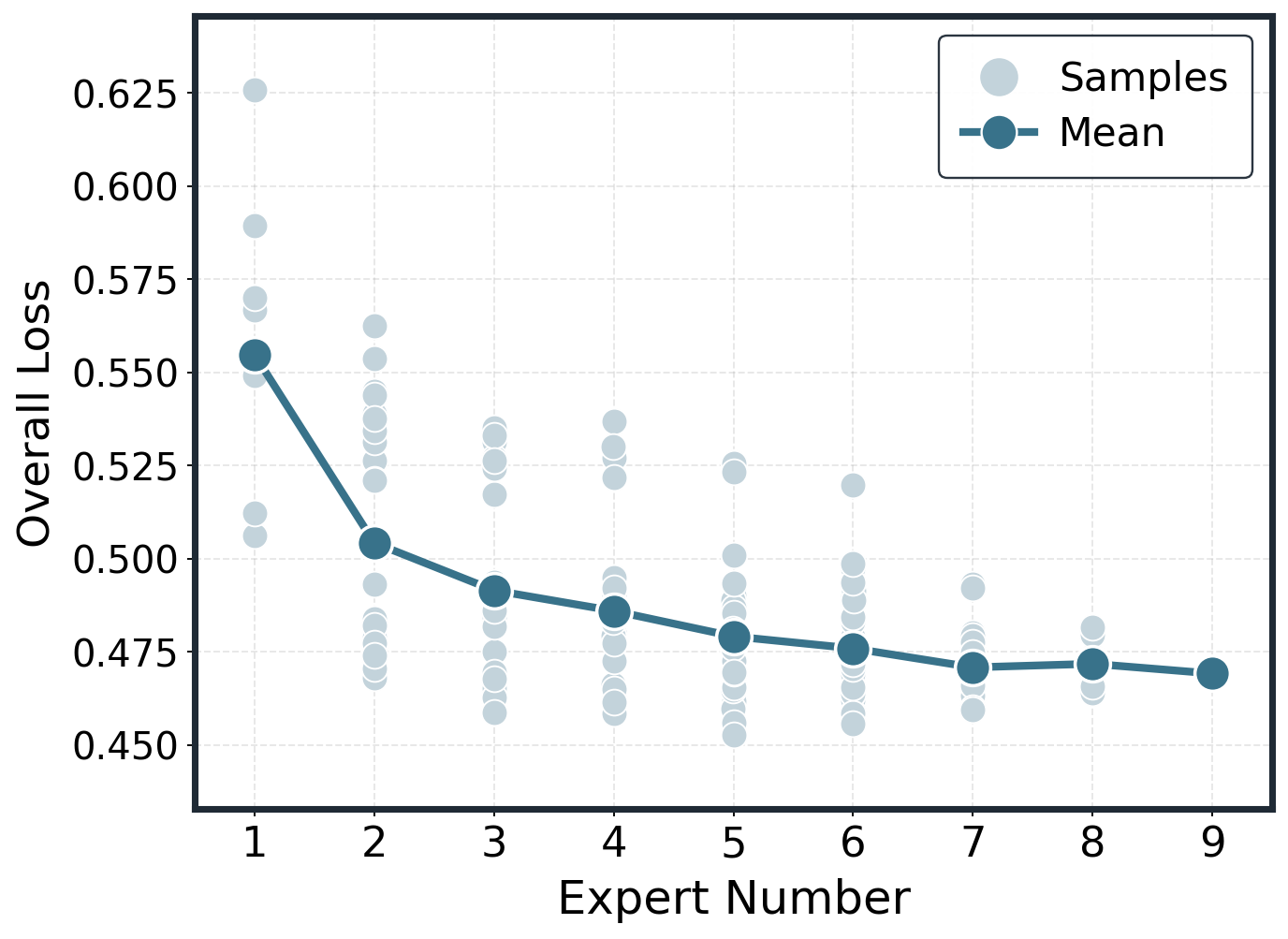}
        \caption{Average merge with $N$=14B}
    \end{subfigure}
    \begin{subfigure}{0.32\textwidth}
    \centering
        \includegraphics[width=\linewidth]{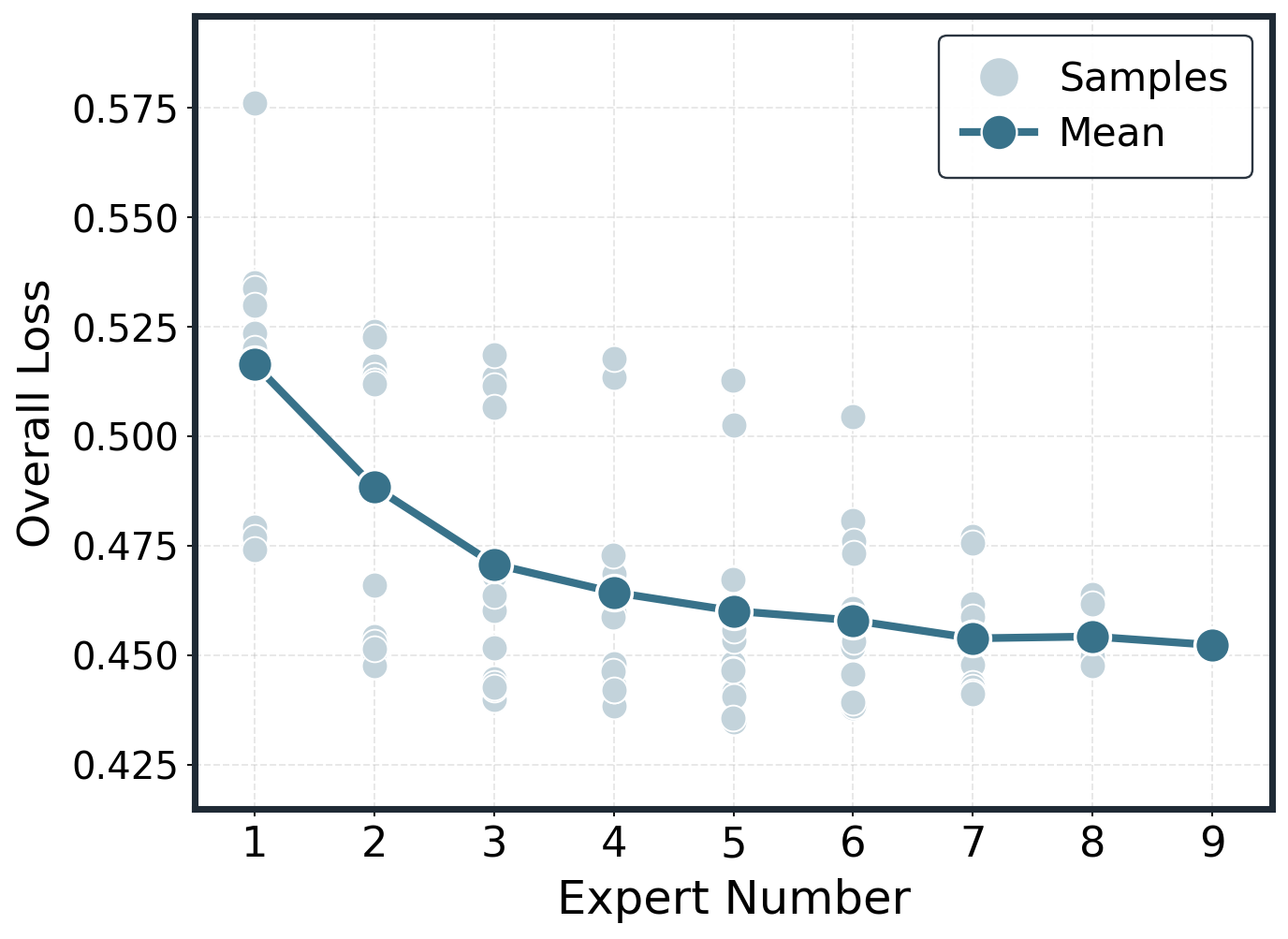}
        \caption{Average merge with $N$=32B}
    \end{subfigure}
    \begin{subfigure}{0.32\textwidth}
    \centering
        \includegraphics[width=\linewidth]{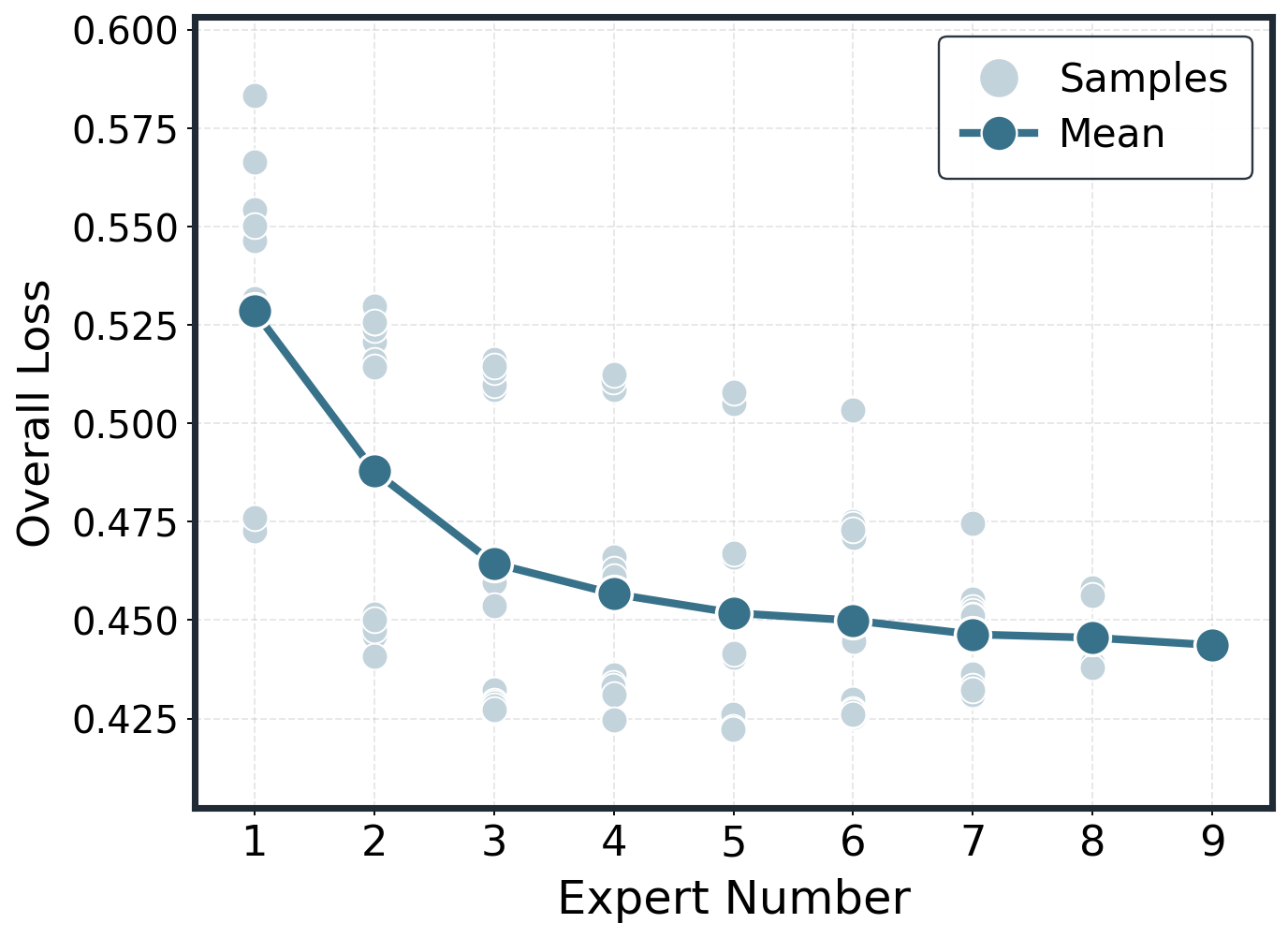}
        \caption{Average merge with $N$=72B}
    \end{subfigure}

    \caption{
    Empirical construction of $\mathbb{E}[L \mid N,k]$ in the cross-domain setting.
    Each figure shows DARE merging on Qwen 2.5 at a fixed model size.
    }
\end{figure}

\begin{figure}[htbp]
    \centering
    \begin{subfigure}{0.32\textwidth}
    \centering
        \includegraphics[width=\linewidth]{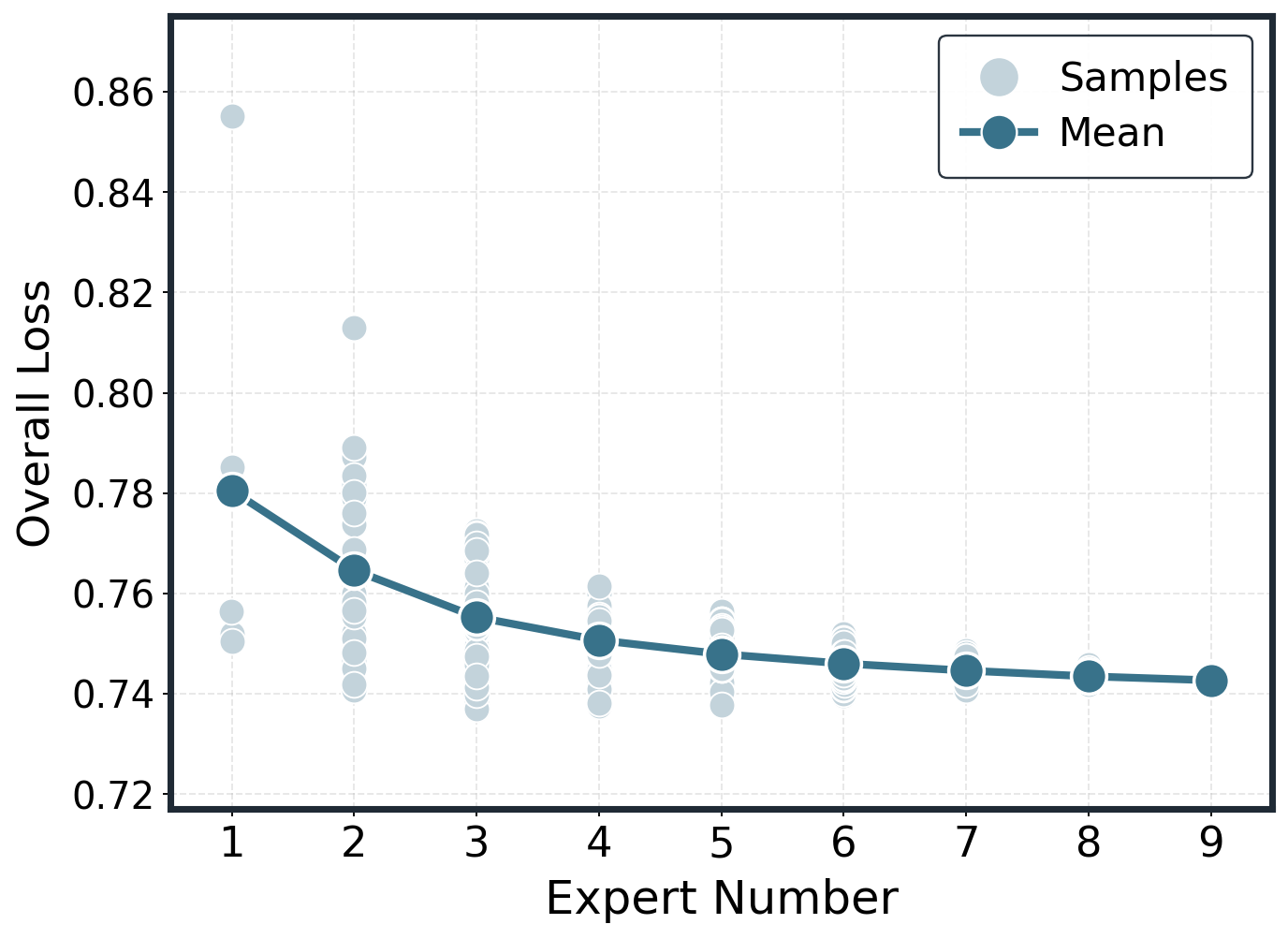}
        \caption{Average merge with $N$=3B}
    \end{subfigure}
    \begin{subfigure}{0.32\textwidth}
    \centering
        \includegraphics[width=\linewidth]{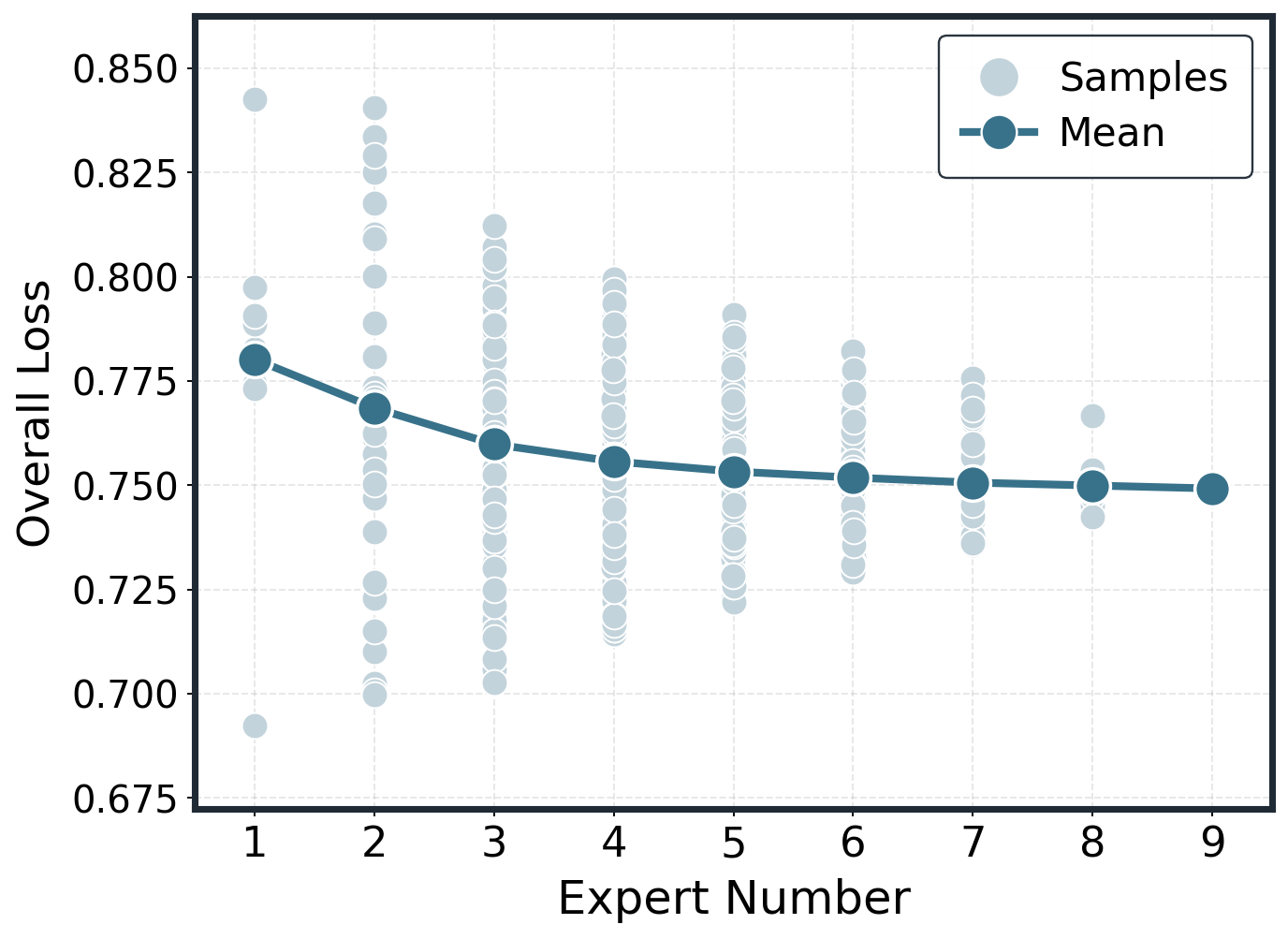}
        \caption{Average merge with $N$=7B}
    \end{subfigure}

    \caption{
    Empirical construction of $\mathbb{E}[L \mid N,k]$ in the cross-domain setting.
    Each figure shows TA merging on LLaMA-3.1/3.2 at a fixed model size.
    }
\end{figure}

\section{In-Domain Fits}
\label{app:in-domain-tables}

\subsubsection{In-domain (single-domain evidence)}
\label{sec:in-domain-compact}
\textbf{Diminishing returns in $k$.} 
CE decreases near-monotonically with $k$ and follows the $1/(k{+}b)$ tail. 
At $0.5$B the macro in-domain CE drops from $\approx\!0.816$ at $k{=}1$ to $\approx\!0.739$ at $k{=}9$ ($-9.5\%$); 
at $32$B it drops from $\approx\!0.493$ to $\approx\!0.430$ ($-12.8\%$). 
Most gains arrive by $k{\approx}5$ (math-like domains saturate sooner; science-like domains carry longer tails).  

\textbf{Scaling with $N$.} 
Both the floor $L_{\infty}(N)$ and the tail amplitude $A(N)$ shrink with $N$; 
at fixed $k{=}9$, macro CE moves from $\approx\!0.739$ (@0.5B) to $\approx\!0.430$ (@32B), about $-42\%$. 
Per-domain joint fits (Average) give tight exponents (e.g., $\beta\!\in\![0.33,0.42]$) and high $R^2$ (Table~\ref{tab:indomain-params}). 

\textbf{Where the details live.}
Full per-domain parameters for Average/TA/TIES (incl.\ $b$), plus 72B forecasts, are reported in Appendix~\ref{app:in-domain-tables}. 
The 72B extrapolation is modest: at $k{=}9$ the median in-domain CE is forecast to drop another $\sim\!6$–$10\%$ from 32B to 72B.

\begin{figure}[htbp]
    \centering
    \begin{subfigure}{0.32\textwidth}
    \centering
        \includegraphics[width=\linewidth]{/CE_K_AVG_Algebra.png}
        \caption{Algebra}
    \end{subfigure}
    \begin{subfigure}{0.32\textwidth}
    \centering
        \includegraphics[width=\linewidth]{/CE_K_AVG_Analysis.png}
        \caption{Analysis}
    \end{subfigure}
    \begin{subfigure}{0.32\textwidth}
    \centering
        \includegraphics[width=\linewidth]{/CE_K_AVG_Discrete.png}
        \caption{Discrete Math}
    \end{subfigure}

    \begin{subfigure}{0.32\textwidth}
    \centering
        \includegraphics[width=\linewidth]{/CE_K_AVG_Geometry.png}
        \caption{Geometry}
    \end{subfigure}
    \begin{subfigure}{0.32\textwidth}
    \centering
        \includegraphics[width=\linewidth]{/CE_K_AVG_Number_Theory.png}
        \caption{Number Theory}
    \end{subfigure}
    \begin{subfigure}{0.32\textwidth}
    \centering
        \includegraphics[width=\linewidth]{/CE_K_AVG_Physics.png}
        \caption{Physics}
    \end{subfigure}

    \begin{subfigure}{0.32\textwidth}
    \centering
        \includegraphics[width=\linewidth]{/CE_K_AVG_Chemistry.png}
        \caption{Chemistry}
    \end{subfigure}
    \begin{subfigure}{0.32\textwidth}
    \centering
        \includegraphics[width=\linewidth]{/CE_K_AVG_Biology.png}
        \caption{Biology}
    \end{subfigure}
    \begin{subfigure}{0.32\textwidth}
    \centering
        \includegraphics[width=\linewidth]{/CE_K_AVG_Code.png}
        \caption{Code}
    \end{subfigure}

    \caption{Merging Scaling Law with the Averaging Method}
\end{figure}

\begin{figure}[htbp]
    \centering
    \begin{subfigure}{0.32\textwidth}
    \centering
        \includegraphics[width=\linewidth]{/CE_K_TA_Algebra.png}
        \caption{Algebra}
    \end{subfigure}
    \begin{subfigure}{0.32\textwidth}
    \centering
        \includegraphics[width=\linewidth]{/CE_K_TA_Analysis.png}
        \caption{Analysis}
    \end{subfigure}
    \begin{subfigure}{0.32\textwidth}
    \centering
        \includegraphics[width=\linewidth]{/CE_K_TA_Discrete.png}
        \caption{Discrete Math}
    \end{subfigure}

    \begin{subfigure}{0.32\textwidth}
    \centering
        \includegraphics[width=\linewidth]{/CE_K_TA_Geometry.png}
        \caption{Geometry}
    \end{subfigure}
    \begin{subfigure}{0.32\textwidth}
    \centering
        \includegraphics[width=\linewidth]{/CE_K_TA_Number_Theory.png}
        \caption{Number Theory}
    \end{subfigure}
    \begin{subfigure}{0.32\textwidth}
    \centering
        \includegraphics[width=\linewidth]{/CE_K_TA_Physics.png}
        \caption{Physics}
    \end{subfigure}

    \begin{subfigure}{0.32\textwidth}
    \centering
        \includegraphics[width=\linewidth]{/CE_K_TA_Chemistry.png}
        \caption{Chemistry}
    \end{subfigure}
    \begin{subfigure}{0.32\textwidth}
    \centering
        \includegraphics[width=\linewidth]{/CE_K_TA_Biology.png}
        \caption{Biology}
    \end{subfigure}
    \begin{subfigure}{0.32\textwidth}
    \centering
        \includegraphics[width=\linewidth]{/CE_K_TA_Code.png}
        \caption{Code}
    \end{subfigure}

    \caption{Merging Scaling Law with the TA Method}
\end{figure}

\begin{figure}[htbp]
    \centering
    \begin{subfigure}{0.32\textwidth}
    \centering
        \includegraphics[width=\linewidth]{/CE_K_TIES_Algebra.png}
        \caption{Algebra}
    \end{subfigure}
    \begin{subfigure}{0.32\textwidth}
    \centering
        \includegraphics[width=\linewidth]{/CE_K_TIES_Analysis.png}
        \caption{Analysis}
    \end{subfigure}
    \begin{subfigure}{0.32\textwidth}
    \centering
        \includegraphics[width=\linewidth]{/CE_K_TIES_Discrete.png}
        \caption{Discrete Math}
    \end{subfigure}

    \begin{subfigure}{0.32\textwidth}
    \centering
        \includegraphics[width=\linewidth]{/CE_K_TIES_Geometry.png}
        \caption{Geometry}
    \end{subfigure}
    \begin{subfigure}{0.32\textwidth}
    \centering
        \includegraphics[width=\linewidth]{/CE_K_TIES_Number_Theory.png}
        \caption{Number Theory}
    \end{subfigure}
    \begin{subfigure}{0.32\textwidth}
    \centering
        \includegraphics[width=\linewidth]{/CE_K_TIES_Physics.png}
        \caption{Physics}
    \end{subfigure}

    \begin{subfigure}{0.32\textwidth}
    \centering
        \includegraphics[width=\linewidth]{/CE_K_TIES_Chemistry.png}
        \caption{Chemistry}
    \end{subfigure}
    \begin{subfigure}{0.32\textwidth}
    \centering
        \includegraphics[width=\linewidth]{/CE_K_TIES_Biology.png}
        \caption{Biology}
    \end{subfigure}
    \begin{subfigure}{0.32\textwidth}
    \centering
        \includegraphics[width=\linewidth]{/CE_K_TIES_Code.png}
        \caption{Code}
    \end{subfigure}

    \caption{Merging Scaling Law with the TIES Method}
\end{figure}

\begin{figure}[htbp]
    \centering
    \begin{subfigure}{0.32\textwidth}
    \centering
        \includegraphics[width=\linewidth]{/CE_K_DARE_Algebra.png}
        \caption{Algebra}
    \end{subfigure}
    \begin{subfigure}{0.32\textwidth}
    \centering
        \includegraphics[width=\linewidth]{/CE_K_DARE_Analysis.png}
        \caption{Analysis}
    \end{subfigure}
    \begin{subfigure}{0.32\textwidth}
    \centering
        \includegraphics[width=\linewidth]{/CE_K_DARE_Discrete.png}
        \caption{Discrete Math}
    \end{subfigure}

    \begin{subfigure}{0.32\textwidth}
    \centering
        \includegraphics[width=\linewidth]{/CE_K_DARE_Geometry.png}
        \caption{Geometry}
    \end{subfigure}
    \begin{subfigure}{0.32\textwidth}
    \centering
        \includegraphics[width=\linewidth]{/CE_K_DARE_Number_Theory.png}
        \caption{Number Theory}
    \end{subfigure}
    \begin{subfigure}{0.32\textwidth}
    \centering
        \includegraphics[width=\linewidth]{/CE_K_DARE_Physics.png}
        \caption{Physics}
    \end{subfigure}

    \begin{subfigure}{0.32\textwidth}
    \centering
        \includegraphics[width=\linewidth]{/CE_K_DARE_Chemistry.png}
        \caption{Chemistry}
    \end{subfigure}
    \begin{subfigure}{0.32\textwidth}
    \centering
        \includegraphics[width=\linewidth]{/CE_K_DARE_Biology.png}
        \caption{Biology}
    \end{subfigure}
    \begin{subfigure}{0.32\textwidth}
    \centering
        \includegraphics[width=\linewidth]{/CE_K_DARE_Code.png}
        \caption{Code}
    \end{subfigure}

    \caption{Merging Scaling Law with the DARE Method}
\end{figure}

\subsection{Mean CE: Joint $(N,k)$ Fits}
Table~\ref{tab:indomain-params} reports the per-domain parameters of the joint law 
$L_{\infty,d}(N)=L_{\ast,d}+B_d N^{-\beta_d}$ and $A_d(N)=A_{0,d} N^{-\gamma_d}$ with the finite-$k$ offset $b_{0,d}$.
All numbers come from weighted nonlinear least squares (weights $\propto k$). $R^2$ is computed on held-in $k$ grid points.

\begin{table}
\centering
\caption{Joint $(N,k)$ fit for Average (per-domain parameters).}
\label{tab:indomain-params}
\begin{tabular}{lrrrrrrr}
\toprule
domain & Lstar & B & beta & A0 & gamma & b0 & R2 \\
\midrule
algebra & 0.18092 & 0.11453 & 0.42335 & 0.052334 & 0.0086009 & 0.096378 & 0.984 \\
analysis & 0.18784 & 0.11445 & 0.46899 & 0.054877 & 0.02738 & 0.1375 & 0.988 \\
biology & 0.63884 & 0.6201 & 0.37247 & 0.1588 & 1.4702e-11 & 0.022561 & 0.990 \\
chemistry & 0.50824 & 0.54954 & 0.34262 & 0.12219 & 2.15e-08 & 1.668e-14 & 0.990 \\
code & 0.28292 & 0.20851 & 0.41186 & 0.082102 & 0.13678 & 0.43453 & 0.986 \\
discrete & 0.2052 & 0.3295 & 0.26766 & 0.066181 & 4.7525e-12 & 9.8614e-05 & 0.992 \\
geometry & 0.20278 & 0.16029 & 0.35431 & 0.052369 & 1.3982e-12 & 0.0087202 & 0.987 \\
number\_theory & 0.21726 & 0.16818 & 0.38339 & 0.055823 & 6.8172e-09 & 0.0070628 & 0.992 \\
physics & 0.54195 & 0.52847 & 0.33756 & 0.1111 & 0.0038941 & 9.3222e-17 & 0.987 \\
\bottomrule
\end{tabular}
\end{table}

\begin{table}
\centering
\caption{Joint $(N,k)$ fit for TA (per-domain parameters).}
\label{tab:joint-fit-ta}
\begin{tabular}{lrrrrrrr}
\toprule
domain & Lstar & B & beta & A0 & gamma & b0 & R2 \\
\midrule
algebra & 0.1912 & 0.10481 & 0.48613 & 0.031756 & 2.0539e-12 & 3.0949e-12 & 0.993 \\
analysis & 0.19859 & 0.10452 & 0.53812 & 0.032072 & 0.020433 & 8.3408e-12 & 0.994 \\
biology & 0.67453 & 0.6048 & 0.39438 & 0.10437 & 2.0948e-10 & 6.7298e-13 & 0.994 \\
chemistry & 0.5471 & 0.52754 & 0.3698 & 0.079144 & 1.4331e-15 & 5.2296e-13 & 0.994 \\
code & 0.29195 & 0.19378 & 0.4604 & 0.061683 & 0.11845 & 0.41132 & 0.993 \\
discrete & 0.26439 & 0.26479 & 0.36064 & 0.045787 & 5.5863e-10 & 1.153e-15 & 0.997 \\
geometry & 0.21888 & 0.14605 & 0.40757 & 0.034849 & 3.6096e-12 & 6.4127e-08 & 0.995 \\
number\_theory & 0.23532 & 0.15 & 0.45207 & 0.037155 & 2.7958e-12 & 4.9617e-11 & 0.997 \\
physics & 0.57646 & 0.50399 & 0.36559 & 0.073691 & 1.0052e-07 & 5.0247e-15 & 0.993 \\
\bottomrule
\end{tabular}
\end{table}

\begin{table}
\centering
\caption{Joint $(N,k)$ fit for TIES (per-domain parameters).}
\label{tab:joint-fit-ties}
\begin{tabular}{lrrrrrrr}
\toprule
domain & Lstar & B & beta & A0 & gamma & b0 & R2 \\
\midrule
algebra & 0.18929 & 0.10752 & 0.46554 & 0.035371 & 0.011425 & 0.19757 & 0.975 \\
analysis & 0.19237 & 0.10912 & 0.50434 & 0.050902 & 0.016536 & 0.58856 & 0.980 \\
biology & 0.6077 & 0.60414 & 0.38384 & 0.37301 & 0.0080666 & 1.1634 & 0.990 \\
chemistry & 0.48423 & 0.53452 & 0.35563 & 0.30644 & 0.0069314 & 1.2221 & 0.989 \\
code & 0.26877 & 0.21391 & 0.38297 & 0.079839 & 0.11961 & 1.1999 & 0.986 \\
discrete & 0.22555 & 0.30917 & 0.28993 & 0.037062 & 2.9507e-10 & 1.2161e-08 & 0.988 \\
geometry & 0.21017 & 0.15222 & 0.38672 & 0.051128 & 5.5086e-10 & 0.39637 & 0.983 \\
number\_theory & 0.22585 & 0.15954 & 0.41453 & 0.046348 & 1.0173e-09 & 0.27291 & 0.987 \\
physics & 0.53415 & 0.51524 & 0.34897 & 0.15923 & 0.00073252 & 0.50358 & 0.987 \\
\bottomrule
\end{tabular}
\end{table}

\subsection{Variance: Joint $(N,k)$ Fits by Method}
We fit $\mathrm{Var}[L_d \mid N,k] = V_{\ast,d} + B_d^{(\mathrm{var})} N^{-\beta_d^{(\mathrm{var})}} + \frac{A_{0,d}^{(\mathrm{var})} N^{-\gamma_d^{(\mathrm{var})}}}{k+b_{0,d}^{(\mathrm{var})}}$ with $V_{\ast,d}\approx 0$.
Below we list parameters and $N{=}72$B predictions for $k\in\{1,3,5,9\}$ in Tables~\ref{tab:var-params-average}--\ref{tab:var72-ties}.


\begin{table}[h!]
\centering
\small
\caption{Variance fit parameters, Average.}
\label{tab:var-params-average}
\begin{tabular}{llllllll}
\toprule
       domain &       ls &        b &     beta &      a0 &   gamma &       b0 &    r2 \\
\midrule
      algebra & 1.36e-18 & 5.57e-19 &        3 & 0.00159 &  0.0178 & 1.89e-11 & 0.844 \\
     discrete & 5.80e-34 & 1.23e-22 &     1.94 & 0.00254 & 0.00496 & 2.29e-19 & 0.862 \\
     analysis & 8.12e-29 & 2.49e-20 &        3 & 0.00146 &  0.0283 & 1.57e-12 & 0.861 \\
     geometry & 2.30e-27 & 9.82e-22 &      2.1 & 0.00192 &   0.032 & 1.82e-16 & 0.851 \\
         code & 3.25e-23 & 2.74e-22 &    0.067 &  0.0085 &   0.254 &    0.912 & 0.782 \\
number\_theory & 2.92e-20 & 1.53e-23 & 2.08e-07 & 0.00248 &  0.0273 & 5.79e-13 &  0.86 \\
    chemistry & 7.08e-31 & 9.84e-24 &     1.99 &  0.0393 &   0.127 &    0.205 & 0.891 \\
      physics & 5.15e-21 & 1.09e-27 &     1.76 &  0.0229 &   0.119 &    0.125 & 0.903 \\
      biology & 1.43e-19 & 1.77e-33 &      1.9 &  0.0556 &   0.151 &    0.272 & 0.879 \\
\bottomrule
\end{tabular}

\end{table}

\begin{table}[h!]
\centering
\small
\caption{Variance at $N{=}72$B, Average.}
\label{tab:var72-average}
\begin{tabular}{lllll}
\toprule
       domain & $k{=}1$ &  $k{=}3$ &  $k{=}5$ &  $k{=}9$ \\
\midrule
      algebra & 0.00148 & 4.93e-04 & 2.96e-04 & 1.64e-04 \\
     analysis &  0.0013 & 4.32e-04 & 2.59e-04 & 1.44e-04 \\
      biology &  0.0229 &  0.00889 &  0.00552 &  0.00314 \\
    chemistry &  0.0189 &  0.00711 &  0.00438 &  0.00247 \\
         code &  0.0015 & 7.34e-04 & 4.86e-04 & 2.90e-04 \\
     discrete & 0.00249 & 8.30e-04 & 4.98e-04 & 2.77e-04 \\
     geometry & 0.00168 & 5.59e-04 & 3.35e-04 & 1.86e-04 \\
number\_theory &  0.0022 & 7.34e-04 & 4.41e-04 & 2.45e-04 \\
      physics &  0.0122 &  0.00441 &  0.00269 &  0.00151 \\
\bottomrule
\end{tabular}

\end{table}


\begin{table}[h!]
\centering
\small
\caption{Variance fit parameters, TA.}
\label{tab:var-params-ta}
\begin{tabular}{llllllll}
\toprule
       domain &       ls &        b &     beta &       a0 &    gamma &       b0 &    r2 \\
\midrule
      algebra & 7.61e-31 & 9.32e-22 & 4.05e-08 &  0.00109 & 1.62e-06 & 3.41e-16 & 0.821 \\
     discrete & 5.05e-36 & 9.52e-33 &    0.784 &   0.0017 & 1.14e-10 & 9.33e-23 & 0.819 \\
     analysis & 2.63e-44 & 2.67e-26 &    0.585 & 9.98e-04 & 2.01e-06 & 7.55e-23 &  0.84 \\
     geometry & 8.01e-30 & 4.73e-21 & 3.03e-08 &  0.00133 &  0.00682 & 5.56e-16 & 0.848 \\
         code & 1.78e-33 & 2.30e-05 &        3 &  0.00525 &    0.206 &    0.514 & 0.816 \\
number\_theory & 4.23e-45 & 1.72e-27 &     1.62 &  0.00171 & 5.25e-07 & 8.13e-21 & 0.845 \\
    chemistry & 1.75e-24 & 3.78e-21 &     2.18 &   0.0266 &    0.112 & 1.88e-11 &  0.93 \\
      physics & 6.05e-20 & 2.15e-21 &    0.812 &   0.0169 &   0.0995 & 1.70e-28 & 0.936 \\
      biology & 2.27e-19 & 2.19e-23 &     1.91 &   0.0372 &    0.137 & 4.71e-12 & 0.924 \\
\bottomrule
\end{tabular}

\end{table}

\begin{table}[h!]
\centering
\small
\caption{Variance at $N{=}72$B, TA.}
\label{tab:var72-ta}
\begin{tabular}{lllll}
\toprule
       domain &  $k{=}1$ &  $k{=}3$ &  $k{=}5$ &  $k{=}9$ \\
\midrule
      algebra &  0.00109 & 3.64e-04 & 2.19e-04 & 1.21e-04 \\
     analysis & 9.98e-04 & 3.33e-04 & 2.00e-04 & 1.11e-04 \\
      biology &   0.0207 &  0.00691 &  0.00414 &   0.0023 \\
    chemistry &   0.0165 &   0.0055 &   0.0033 &  0.00183 \\
         code &  0.00144 & 6.18e-04 & 3.94e-04 & 2.28e-04 \\
     discrete &   0.0017 & 5.66e-04 & 3.40e-04 & 1.89e-04 \\
     geometry &   0.0013 & 4.32e-04 & 2.59e-04 & 1.44e-04 \\
number\_theory &  0.00171 & 5.70e-04 & 3.42e-04 & 1.90e-04 \\
      physics &    0.011 &  0.00368 &  0.00221 &  0.00123 \\
\bottomrule
\end{tabular}

\end{table}


\begin{table}[h!]
\centering
\small
\caption{Variance fit parameters, TIES.}
\label{tab:var-params-ties}
\begin{tabular}{llllllll}
\toprule
       domain &       ls &        b &     beta &       a0 &    gamma &       b0 &    r2 \\
\midrule
      algebra & 1.35e-27 & 4.08e-32 &    0.863 & 7.48e-04 & 7.09e-11 & 1.94e-09 & 0.801 \\
     discrete & 2.48e-34 & 9.61e-29 &     2.98 &  0.00117 & 6.34e-13 & 8.13e-11 & 0.736 \\
     analysis & 2.03e-22 & 1.96e-26 & 4.28e-08 & 6.83e-04 & 1.33e-08 & 3.31e-09 & 0.822 \\
     geometry & 2.56e-27 & 6.97e-26 &     0.63 & 8.99e-04 & 1.43e-11 & 2.86e-08 & 0.784 \\
         code & 2.92e-12 & 1.76e-05 &        3 &  0.00424 &     0.12 &     1.09 & 0.752 \\
number\_theory & 2.25e-24 & 3.61e-33 & 1.11e-07 &  0.00114 & 8.10e-12 & 6.51e-11 & 0.796 \\
    chemistry & 3.25e-49 & 2.78e-28 &    0.781 &   0.0241 &  0.00132 &    0.446 & 0.816 \\
      physics & 2.45e-19 & 5.40e-27 &        3 &   0.0137 &  0.00363 &    0.164 & 0.886 \\
      biology & 1.65e-23 & 9.54e-22 &  0.00561 &   0.0344 &   0.0366 &    0.397 & 0.857 \\
\bottomrule
\end{tabular}

\end{table}


\begin{table}[h!]
\centering
\small
\caption{Variance at $N{=}72$B, TIES.}
\label{tab:var72-ties}
\begin{tabular}{lllll}
\toprule
       domain &  $k{=}1$ &  $k{=}3$ &  $k{=}5$ &  $k{=}9$ \\
\midrule
      algebra & 7.48e-04 & 2.49e-04 & 1.50e-04 & 8.31e-05 \\
     analysis & 6.83e-04 & 2.28e-04 & 1.37e-04 & 7.59e-05 \\
      biology &   0.0211 &  0.00867 &  0.00546 &  0.00313 \\
    chemistry &   0.0165 &  0.00694 &  0.00439 &  0.00253 \\
         code &  0.00121 & 6.20e-04 & 4.17e-04 & 2.51e-04 \\
     discrete &  0.00117 & 3.91e-04 & 2.35e-04 & 1.30e-04 \\
     geometry & 8.99e-04 & 3.00e-04 & 1.80e-04 & 9.99e-05 \\
number\_theory &  0.00114 & 3.81e-04 & 2.28e-04 & 1.27e-04 \\
      physics &   0.0116 &  0.00426 &  0.00261 &  0.00147 \\
\bottomrule
\end{tabular}

\end{table}

\section{Cross-Domain Fit Details}
\label{app:xdom-fits}

\subsubsection{Cross-domain (pooled evidence)}
\label{sec:cross-domain-compact}
Macro-averaged CE over the nine domains follows the same floor+tail law \(L_{\infty}(N)+A(N)/(k{+}b)\) as in-domain: curves are monotone with steep early gains and a short inverse tail; TA and TIES(\(0.5\)) show slightly faster early drops and gaps narrow with \(k\). A small bounded non-monotonicity appears for TIES(\(1\)) at 3B and is captured by adding a small bounded term in the fit. Scaling with model size mirrors the in-domain trend: at fixed large \(k\) (e.g., \(k{=}9\), Average), pooled CE improves substantially from small to large bases, reflecting a lower floor and a smaller tail amplitude. Merge-to-merge variance contracts approximately as \(1/k\) with smaller amplitude at larger \(N\), and TIES/TA exhibit slightly lower variance than Average at small \(k\); details and extended forecasts (including 72B) appear in Tables~\ref{tab:var-params-average}--\ref{tab:var72-ties}. For fitting, we regress the mean law per base size and method (setting the interference term to zero for monotone series) and fit the variance model unweighted; the scale parameter decreases with \(N\) and the variance floor is small. Representative figures include Average@32B (mean improving from \(0.5173\) to \(0.4530\); variance shrinking from \(9.8{\times}10^{-4}\) to \(4.3{\times}10^{-5}\)), TIES(\(0.5\))@14B (mean \(0.5286{\to}0.4599\)), and the bounded non-monotonicity for TIES(\(1\))@3B captured by a positive interference term.

\subsection{Variance Behavior (Both Settings)}
\label{sec:variance}
\emph{Merge-to-merge variability} contracts approximately as
\begin{equation}
\label{eq:var-law}
\mathrm{Var}[L] \;\propto\; \frac{1}{k},
\end{equation}
with three robust regularities: (i) a near-\(1/k\) drop that is already pronounced by small \(k\) and flattens near a small floor (e.g., chemistry @0.5B, Average: \(0.0385{\to}0.00108\) by \(k{=}8\); algebra @0.5B: \(2.28{\times}10^{-3}{\to}1.88{\times}10^{-5}\)); (ii) \emph{larger models are stabler}—at fixed \(k\), variance is lower for larger \(N\) (e.g., physics, Average, \(k{=}1\): \(0.0239{\to}0.0128\) from 0.5B to 32B); and (iii) \emph{method ordering} at small \(k\) typically satisfies \(\text{TIES} < \text{TA} < \text{Average}\), with gaps vanishing as \(k\) grows. We use Eq.~\eqref{eq:var-law} descriptively (fixing the log–log slope near \(-1\)), as heavier parameterization yields little additional predictive value while the simple form transfers cleanly across domains, sizes, and methods.

\section{Core Questions}
\subsection{Per-domain fits, $k_\varepsilon$ examples, and robustness}
\label{app:rq1-details}

\paragraph{Specification.}
For each domain $d$ we fit the joint law
$\;\mathbb{E}[L_d\mid N,k]=L_{\ast,d}+B_dN^{-\beta_d}+\frac{A_{0,d}N^{-\gamma_d}}{k+b_d}\;$
by weighted nonlinear least squares (weights $\propto k$). 
We summarize \emph{floors} via $L_{\infty,d}(N)=L_{\ast,d}+B_dN^{-\beta_d}$ (log--log regression) and \emph{tails} via $A_d(N)=A_{0,d}N^{-\gamma_d}$.

\paragraph{Per-domain parameters.}
Floors exhibit tight power-law fits with exponents clustered in $[0.33,0.42]$ and $R^2{\approx}0.98$–$0.99$ across domains.
Tails are smaller and noisier; several domains are near-flat in $N$, while \textit{code} shows the clearest decay.
Table~\ref{tab:rq1-params} lists an illustrative subset; full tables for all methods/domains appear in Appendix~\ref{app:in-domain-tables}.

\begin{table}[h]
\centering
\small
\begin{tabular}{lcccccccc}
\toprule
Domain & $\hat b$ & $\hat A_0$ & $\hat\gamma$ & $R^2(A)$ & $\hat L_\ast$ & $\hat B$ & $\hat\beta$ & $R^2(L)$ \\
\midrule
algebra   & 0.000 & 0.0460 & $-0.004$ & $-0.002$ & 0.1724 & 0.1248 & 0.379 & 0.983 \\
analysis  & 0.000 & 0.0462 & $+0.009$ & $+0.009$ & 0.1793 & 0.1255 & 0.417 & 0.990 \\
biology   & 0.125 & 0.1741 & $-0.006$ & $+0.007$ & 0.6227 & 0.6338 & 0.362 & 0.988 \\
chemistry & 0.075 & 0.1317 & $-0.006$ & $+0.008$ & 0.4924 & 0.5639 & 0.331 & 0.988 \\
code      & 0.250 & 0.0682 & $+0.115$ & $0.556$  & 0.2705 & 0.2238 & 0.378 & 0.986 \\
\bottomrule
\end{tabular}
\caption{\textbf{Joint $(N,k)$ fits (subset, Average).}
Floors are tight power laws; tails are small and domain-dependent (clearest decay in \textit{code}).}
\label{tab:rq1-params}
\end{table}

\paragraph{Macro evidence.}
At $k{=}9$ (Average), macro CE decreases from $0.739$ at $0.5$B to $0.430$ at $32$B (–41.9\%), consistent with a lower floor and a weakly shrinking tail.

\paragraph{$k_\varepsilon$ examples ($\varepsilon{=}0.01$).}
Using $k_\varepsilon(N,d)=\big\lceil A_d(N)/\varepsilon - b_d\big\rceil$ with $A_d(N)=A_{0,d}N^{-\gamma_d}$:
\begin{itemize}
  \item \textit{code}: $(\hat b,\hat A_0,\hat\gamma)=(0.25,0.068,0.115)$ gives 
  $A(0.5\text{B}){\approx}0.074$, $A(32\text{B}){\approx}0.046$, hence 
  $k_\varepsilon(0.5\text{B})=\mathbf{8}$ and $k_\varepsilon(32\text{B})=\mathbf{5}$.
  \item \textit{biology}: $(0.125,0.174,-0.006)$ (near-flat tail) gives 
  $A(0.5\text{B}){\approx}0.173$, $A(32\text{B}){\approx}0.177$, so $k_\varepsilon$ stays $\approx\!18$, 
  yet CE still falls with $N$ due to the lower floor.
\end{itemize}

\paragraph{Robustness.}
Altering weights (uniform vs.\ $\propto k$) or censoring tiny high-$k$ points barely changes floor exponents. 
For extrapolation (e.g., 72B), floors should be treated as the dominant $N$-driver, with tails weakly decreasing/flat; $k_\varepsilon$ then gives a practical “experts-to-saturation’’ budget.

\paragraph{Plot inventory.}
(i) Macro CE@{$k{=}9$} vs.\ $N$ (log–log) with power-law fit; 
(ii) two representative floor curves $L_{\infty,d}(N)$ (e.g., \textit{algebra} vs.\ \textit{biology}); 
(iii) optional $A_d(N)$ vs.\ $N$ to visualize tail trends.

\subsection{Most of the gain comes from the first few experts}
\label{app:rq2}

\begin{figure}[h]
  \centering
  \includegraphics[width=.48\linewidth]{/rq2_panel_a_median_Rk_v2.png}\hfill
  \includegraphics[width=.48\linewidth]{/rq2_panel_b_k90_heatmap.png}
  \caption{\textbf{Most of the gain comes from the first few experts.} \emph{Left:} Median fractional return \(R(k)\) with IQR band; \(k{=}5\) and \(k{=}6\) cross the \(85\%\)/\(90\%\) thresholds, respectively. 
\emph{Right:} \(k_{90}\) across domains and sizes concentrates at \(k\in\{5,6\}\) (about half to two-thirds of this 9-expert pool (\(5/9\!\approx\!56\%\))).}
  \label{fig:rq2}
\end{figure}

We quantify the “return’’ from merging $k$ experts at a fixed $(N,d)$ by the fraction of realized improvement
$R(N,d,k)$ computed from the monotone envelope of the measured CE curve (see Appendix~\ref{app:rq2}).
We summarize two views in Fig.~\ref{fig:rq2}: \emph{(left)} the median $R(k)$ over all $(N,d)$ with an IQR band; 
\emph{(right)} a heatmap of the smallest $k$ that reaches a target return (here $90\%$).
As shown in Fig.~\ref{fig:rq2}, most of the improvement arrives early: the median curve crosses $85\%$ by \textbf{$k{=}5$} and $90\%$ by \textbf{$k{=}6$}, and the $k_{90}$ heatmap concentrates in $\{5,6\}$ across domains and model sizes.
Math-like domains tend to saturate slightly earlier, while science-like domains keep a longer—but still flattening—tail.
This “early elbow’’ follows directly from the unified law $L(N,k)=L_{\infty}(N)+A(N)/(k{+}b)$:
the marginal gain $\Delta_k \!\approx\! A(N)/[(k{+}b)(k{+}1{+}b)]$ decays roughly as $k^{-2}$, so returns diminish sharply beyond the first few experts.

\subsection{Additional plots, tables, and details}
\label{app:rq3-more}
For each method (Average, TA, TIES, DARE) and size $N$, we fit $L(N,k)=L_{\infty}(N)+A(N)/(k{+}b)$ by weighted least squares (weights $\propto k$) on the pooled CE, averaging over randomized expert orders; only TIES with the strongest nonlinearity requires an extra bounded term $+D(N)\,\tfrac{k}{k+q}$, with small $D$ and stable $q$. We release per-method parameter tables $\{b,A_0,\gamma,L_\ast,B,\beta\}$ and residual plots versus $k$; companion figures reproduce the floor/tail summaries in Fig.~\ref{fig:rq1-floor-tail-panels} for all methods and provide fractional-return curves $R(k)$ and $k_{90}$ heatmaps across $N$. Headline patterns match the main text: most pooled improvement is realized by $k\!\le\!6$, method differences narrow with $k$, and scaling in $N$ lowers both the pooled floor and the tail.

\begin{table}[h]
\centering
\resizebox{\textwidth}{!}{%
\begin{tabular}{lcccccccc}
\hline
\textbf{Method} & \textbf{Qwen-0.5b} & \textbf{Qwen-1.5b} & \textbf{Qwen-3b} & \textbf{Qwen-7b} & \textbf{Qwen-14b} & \textbf{Qwen-32b} & \textbf{Qwen-72b} & \textbf{SUM(GPUh)} \\
\hline
TA    & 32s   & 68s   & 129s  & 244s  & 383s  & 777s  & 2686s & 1.20  \\
AVG   & 48s   & 73s   & 168s  & 265s  & 421s  & 843s  & 2280s & 1.14  \\
Dare  & 30s   & 72s   & 102s  & 251s  & 420s  & 796s  & 2360s & 1.112 \\
Ties  & 43s   & 77s   & 136s  & 270s  & 507s  & 961s  & 2967s & 1.38  \\
\hline
\end{tabular}%
}
\caption{GPU hours required to merge nine domains across model sizes.}
\label{tab:gpu}
\end{table}

\begin{table}[h!]
\centering
\begin{tabular}{cl}
\toprule
\multicolumn{1}{c}{\textbf{Model Size}} & 
\multicolumn{1}{c}{\textbf{Model Name}} \\
\midrule
\multirow{9}{*}{3B} 
 & theprint/ReWiz-Llama-3.2-3B~\citep{theprint_ReWiz_Llama_3_2_3B} \\
 & NousResearch/Hermes-3-Llama-3.2-3B~\citep{NousResearch_Hermes_3_Llama_3_2_3B} \\
 & MergeBench/Llama-3.2-3B-Instruct\_coding~\citep{MergeBench_Llama_3_2_3B_Instruct_coding} \\
 & MergeBench/Llama-3.2-3B-Instruct\_math~\citep{MergeBench_Llama_3_2_3B_Instruct_math} \\
 & MergeBench/Llama-3.2-3B-Instruct\_multilingual~\citep{MergeBench_Llama_3_2_3B_Instruct_multilingual} \\
 & meta-llama/Llama-3.2-3B-Instruct~\citep{meta_llama_Llama_3_2_3B_Instruct} \\
 & ValiantLabs/Llama3.2-3B-ShiningValiant2~\citep{ValiantLabs_Llama3_2_3B_ShiningValiant2} \\
 & MergeBench/Llama-3.2-3B-Instruct\_safety~\citep{MergeBench_Llama_3_2_3B_Instruct_safety} \\
 & MergeBench/Llama-3.2-3B-Instruct\_instruction~\citep{MergeBench_Llama_3_2_3B_Instruct_instruction} \\
\midrule
\multirow{9}{*}{8B} 
 & Undi95/Meta-Llama-3-8B-Instruct-hf~\citep{Undi95_Meta_Llama_3_8B_Instruct_hf} \\
 & Undi95/Llama-3-LewdPlay-8B-evo~\citep{Undi95_Llama_3_LewdPlay_8B_evo} \\
 & jondurbin/bagel-8b-v1.0~\citep{jondurbin_bagel_8b_v1_0} \\
 & Weyaxi/Einstein-v6.1-Llama3-8B~\citep{Weyaxi_Einstein_v6_1_Llama3_8B} \\
 & VAGOsolutions/Llama-3-SauerkrautLM-8b-Instruct~\citep{VAGOsolutions_Llama_3_SauerkrautLM_8b_Instruct} \\
 & aaditya/OpenBioLLM-Llama3-8B~\citep{aaditya_OpenBioLLM_Llama3_8B} \\
 & Dampfinchen/Llama-3-8B-Ultra-Instruct~\citep{Dampfinchen_Llama_3_8B_Ultra_Instruct} \\
 & NousResearch/Hermes-3-Llama-3.1-8B~\citep{NousResearch_Hermes_3_Llama_3_1_8B} \\
 & meta-llama/Llama-3.1-8B-Instruct~\citep{meta_llama_Llama_3_1_8B_Instruct} \\
\bottomrule
\end{tabular}
\caption{List of open source models on Huggingface.}
\label{tab:merged_models}
\end{table}

\subsection*{Extended evidence}
\addcontentsline{toc}{section}{Extended evidence for RQ4}

\noindent\textit{Small-$k$ mean gaps vs.\ Average (relative \%).} We report $(\mathrm{Avg}-\mathrm{Method})/\mathrm{Avg}$ at $k{=}2$ and the signed gap at $k{=}9$ (lower is better).
\begin{center}
\begin{tabular}{lcccc}
\toprule
 & \multicolumn{2}{c}{\textbf{0.5B}} & \multicolumn{2}{c}{\textbf{14B}} \\
Method & $k{=}2$ & $k{=}9$ & $k{=}2$ & $k{=}9$ \\
\midrule
TA ($\lambda{=}0.8$)   & $0.9\%$  & $+0.7\%$ (worse) & $0.6\%$  & $+1.4\%$ (worse) \\
TIES ($\lambda{=}0.5$) & $0.9\%$  & $-1.1\%$ (better) & $0.6\%$  & $-2.2\%$ (better) \\
\midrule
 & \multicolumn{2}{c}{\textbf{32B}} \\
Method & $k{=}2$ & $k{=}9$ \\
\midrule
TA ($\lambda{=}0.8$)   & $1.2\%$  & $+1.2\%$ (worse) \\
TIES ($\lambda{=}0.5$) & $1.7\%$  & $-2.3\%$ (better) \\
\bottomrule
\end{tabular}
\end{center}

\noindent\textit{Summary.} The method “bandwidth” is consistently narrow across scales: at small $k$, TA and TIES($0.5$) are modestly better than Average (typically $1\%\!\sim\!2\%$ at $k{=}2$), and by $k{=}9$ gaps further compress (TIES($0.5$) usually retains a $\approx\!1\%\!\sim\!2\%$ edge, TA is near-tied or slightly worse). Variance shows the same convergence: at $N{=}32$B, $k{=}2$ the across-merge variance is $9.67{\times}10^{-4}$ (Average), $7.83{\times}10^{-4}$ (TA, $-19\%$), and $6.50{\times}10^{-4}$ (TIES$\,0.5$, $-33\%$); by $k{=}8$ all methods are around $(3\!-\!4)\times10^{-5}$. At $N{=}0.5$B, $k{=}2$ the pattern holds (Average $1.73{\times}10^{-3}$, TA $-26\%$, TIES$\,0.5$ $-44\%$). A mild bounded non-monotonicity for TIES($\lambda{=}1$) at 3B is captured by a small term $D\,\tfrac{k}{k+q}$; using $\lambda{=}0.5$ restores the standard monotone $1/(k{+}b)$ tail. Together, these results support the main-text claim: method differences are second-order and shrink quickly with $k$.


\section{Do Downstream Metrics Follow the Same Trend?}
\label{app:downstream}

\subsection{Overall Results}
{
\textbf{Setup.}
To test whether the CE scaling law is reflected in end-task quality, we train and use the different merged checkpoints from Section~\ref{sec:scaling-laws} to demonstrate the trend on different backbones. In this section, we post-train on Llama-based and Gemma-based models and evaluate them on a diverse suite of downstream benchmarks, including math, general reasoning, multilingual, coding, and safety.
For each backbone and merge method, we:
}

(i) evaluate all expert subsets for $k\!\in\!\{1,\dots,5\}$,

(ii) normalise each metric so that larger is better,

(iii) report the \emph{mean accuracy} obtained by first averaging across tasks and then across all expert subsets at fixed $k$.

{
Table~\ref{tab:downstream-mean-k} summarises the resulting trend for three backbones (LLaMA-3.1 8B, LLaMA-3.2 3B, Gemma-2 2B) and two merge rules (Task Arithmetic and TIES).
}

{
\textbf{Findings.}
Across all settings, aggregated downstream performance generally improves as we increase the number of merged experts and then saturates, consistent with the same diminishing-return intuition observed for CE, although the plateau often appears earlier. ...
For LLaMA-3.1 8B with Task Arithmetic, mean accuracy rises steadily from $0.41$ at $k{=}1$ to $0.47$ at $k{=}5$, with rapidly diminishing gains after $k{\approx}3$.
LLaMA-3.2 3B shows the same qualitative pattern but with a shallower tail: accuracy improves from $0.38$ at $k{=}2$ to about $0.39$ at $k{=}4$ and then slightly fluctuates within $\pm 0.002$, which we attribute to benchmark variance rather than a systematic degradation.
Gemma-2 2B (available only for $k{\ge}2$) and TIES on LLaMA-3.1 8B both display monotone or nearly monotone gains up to $k{\approx}4$, followed by a clear plateau.
Taken together, these results suggest that the CE scaling law is informative about the practical utility regime of merging, while downstream metrics remain complementary and noisier indicators rather than quantities we claim to follow the same parametric law.
A more refined characterisation of when CE and task accuracy may diverge, and how to predict in advance whether a particular merge will work on a given task, is an interesting direction for future work.
}

\begin{table}[t]
    \centering
    \small
    \begin{tabular}{lcccccc}
    \toprule
    Backbone & Method & $k{=}1$ & $k{=}2$ & $k{=}3$ & $k{=}4$ & $k{=}5$ \\
    \midrule
    LLaMA-3.1 8B & TA   & 0.411 & 0.443 & 0.456 & 0.462 & 0.469 \\
    LLaMA-3.2 3B & TA   & 0.375 & 0.386 & 0.388 & 0.389 & 0.388 \\
    Gemma-2 2B   & TA   & 0.492 & 0.503 & 0.506 & 0.507 & 0.507 \\
    LLaMA-3.1 8B & TIES & 0.388  & 0.414 & 0.426 & 0.436 & 0.436 \\
    \bottomrule
    \end{tabular}
    \caption{Mean downstream accuracy vs.\ number of merged experts $k$.
    Each entry is averaged over all benchmarks and all expert subsets at fixed $k$ for the given backbone and merge methods (higher is better).
    }
    \label{tab:downstream-mean-k}
\end{table}

\subsection{Detailed Cases}
\textbf{Setup.}
{
We report full downstream results for three backbones and two merge rules.
For all experiments, we start from a common base model and five domain-specialised
experts:
(1) \emph{math} (MATH/GSM8K style),
(2) \emph{code} (MBPP/HumanEval style),
(3) \emph{multilingual} (general language understanding across languages),
(4) \emph{safety} (safety and refusal tuning),
and (5) \emph{instruction-following} (generic chat/IFEval).
In the tables, the column \texttt{folder} encodes which experts are merged:
for example, \texttt{1}, \texttt{2}, …, \texttt{5} are single experts;
\texttt{1-2} or \texttt{3-5} are 2-model merges of the corresponding experts;
and \texttt{1-2-3-4-5} is the merge of all five experts.
Tables~\ref{tab:llama8b-ta-full} and \ref{tab:llama3b-ta-full}
use \emph{Task Arithmetic (TA)} as the merge rule on LLaMA~3.1~8B
and LLaMA~3.2~3B, respectively.
Table~\ref{tab:gemma2b-ta-full} uses TA on Gemma~2~2B.
Table~\ref{tab:llama8b-ties-full} uses \emph{TIES} merging on LLaMA~3.1~8B.
We evaluate on a heterogeneous benchmark suite covering math reasoning
({math\_500}, GSM8K), code ({mbppplus}, {humanevalplus}),
general QA and language understanding (IFEval, ARC, HellaSwag, MMLU and
multilingual\_overall), and safety.
Rows \texttt{overall\_k1}--\texttt{overall\_k5} in the TA tables aggregate
over all expert subsets with fixed $k$ and then average across benchmarks.
These aggregated means are the values used in the main-text plots.
}

\begin{table*}[t]
\renewcommand\arraystretch{1}
\setlength{\tabcolsep}{5pt}
\centering
\large
\resizebox{\textwidth}{!}{
\begin{tabular}{
  l
  S S S S S S S S
  S S S S
}
\toprule
\multicolumn{1}{c}{folder} &
\multicolumn{1}{c}{math\_500} &
\multicolumn{1}{c}{gsm8k} &
\multicolumn{1}{c}{ifeval} &
\multicolumn{1}{c}{arc} &
\multicolumn{1}{c}{hellaswag} &
\multicolumn{1}{c}{mmlu} &
\multicolumn{1}{c}{mbppplus} &
\multicolumn{1}{c}{humanevalplus} &
\multicolumn{1}{c}{wildguard\_micro\_harm} &
\multicolumn{1}{c}{harmbench\_micro\_asr} &
\multicolumn{1}{c}{wildguard\_rta} &
\multicolumn{1}{c}{harmbench\_rta} \\
\midrule
1 & 0.138 & 0.3783 & 0.244 & 0.4058112712304329 & 0.492553211292034 & 0.4998304015033764 & 0.5582010582010583 & 0.5426829268292683 & 0.4712950600801068 & 0.66875 & 0.5287049399198932 & 0.33125 \\
2 & 0.468 & 0.8347 & 0.1017 & 0.3408101379658266 & 0.46593977783464485 & 0.45147811524595505 & 0.37037037037037035 & 0.25609756097560976 & 0.5647530040053405 & 0.634375 & 0.43524699599465955 & 0.365625 \\
3 & 0.15 & 0.2714 & 0.5453 & 0.4124395896850987 & 0.4941781165286623 & 0.5009245783219473 & 0.5264550264550265 & 0.3719512195121951 & 0.5060080106809078 & 0.55 & 0.49399198931909216 & 0.45 \\
4 & 0.016 & 0.1478 & 0.1978 & 0.35749033069392355 & 0.4666454993371247 & 0.43357336200342383 & 0.4126984126984127 & 0.22560975609756098 & 0.22830440587449932 & 0.184375 & 0.7716955941255007 & 0.815625 \\
5 & 0.128 & 0.2146 & 0.1645 & 0.43189408728330886 & 0.4961798818432952 & 0.5396359458356141 & 0.5396825396825397 & 0.08536585365853659 & 0.48464619492656874 & 0.584375 & 0.5153538050734312 & 0.415625 \\
1-2 & 0.406 & 0.8044 & 0.1885 & 0.3799382919143398 & 0.49207151148388906 & 0.49883223056751064 & 0.5264550264550265 & 0.49390243902439024 & 0.5580774365821095 & 0.65 & 0.4419225634178905 & 0.35 \\
1-3 & 0.168 & 0.539 & 0.3678 & 0.43980591929693724 & 0.510245161029039 & 0.5266871092779731 & 0.5634920634920635 & 0.5487804878048781 & 0.5233644859813084 & 0.71875 & 0.47663551401869164 & 0.28125 \\
1-4 & 0.116 & 0.4443 & 0.1885 & 0.39255536545955705 & 0.4931911041519085 & 0.499237728670619 & 0.5343915343915344 & 0.5060975609756098 & 0.1909212283044059 & 0.1375 & 0.8090787716955941 & 0.8625 \\
1-5 & 0.148 & 0.4928 & 0.2311 & 0.4423698390764259 & 0.5089895776379446 & 0.5373809755318453 & 0.5502645502645502 & 0.39634146341463417 & 0.5473965287049399 & 0.675 & 0.45260347129506007 & 0.325 \\
2-3 & 0.442 & 0.8188 & 0.268 & 0.37844146871092976 & 0.4952788833912205 & 0.49725596318685145 & 0.5105820105820106 & 0.35365853658536583 & 0.5567423230974633 & 0.653125 & 0.4432576769025367 & 0.346875 \\
2-4 & 0.348 & 0.7316 & 0.1904 & 0.36497225329560656 & 0.4832594656936953 & 0.4752101299461407 & 0.46296296296296297 & 0.3231707317073171 & 0.34979973297730305 & 0.459375 & 0.650200267022697 & 0.540625 \\
2-5 & 0.402 & 0.8059 & 0.1312 & 0.37844146871092976 & 0.49005771131968456 & 0.5131207754487666 & 0.5105820105820106 & 0.32926829268292684 & 0.5367156208277704 & 0.609375 & 0.46328437917222964 & 0.390625 \\
3-4 & 0.01 & 0.2191 & 0.2902 & 0.39490780344073756 & 0.49045527088501323 & 0.4800534605015503 & 0.5370370370370371 & 0.3170731707317073 & 0.20026702269692923 & 0.1625 & 0.7997329773030708 & 0.8375 \\
3-5 & 0.146 & 0.5428 & 0.3919 & 0.44194267143368937 & 0.5035688731735943 & 0.5424056148060122 & 0.5343915343915344 & 0.31097560975609756 & 0.5086782376502003 & 0.60625 & 0.49132176234979974 & 0.39375 \\
4-5 & 0.042 & 0.1175 & 0.146 & 0.40239009161164846 & 0.4927844042692231 & 0.49978178721511224 & 0.5026455026455027 & 0.16463414634146342 & 0.21762349799732977 & 0.24375 & 0.7823765020026703 & 0.75625 \\
1-2-3 & 0.388 & 0.7665 & 0.244 & 0.3968294180869031 & 0.5029652556973598 & 0.5171394367324542 & 0.5608465608465608 & 0.45121951219512196 & 0.5580774365821095 & 0.653125 & 0.4419225634178905 & 0.346875 \\
1-2-4 & 0.316 & 0.7036 & 0.2015 & 0.379297631842542 & 0.49487754389251437 & 0.5028872306225264 & 0.5370370370370371 & 0.45121951219512196 & 0.3431241655540721 & 0.4625 & 0.6568758344459279 & 0.5375 \\
1-2-5 & 0.368 & 0.7528 & 0.1774 & 0.4011034707142492 & 0.5014898771218941 & 0.5216703822081319 & 0.5634920634920635 & 0.45121951219512196 & 0.5554072096128171 & 0.671875 & 0.4445927903871829 & 0.328125 \\
1-3-4 & 0.116 & 0.5102 & 0.2625 & 0.4070949675740095 & 0.5036016753898995 & 0.51470557475846 & 0.5370370370370371 & 0.49390243902439024 & 0.2162883845126836 & 0.165625 & 0.7837116154873164 & 0.834375 \\
1-3-5 & 0.176 & 0.5034 & 0.3327 & 0.4466457195499112 & 0.511737535671617 & 0.5416160536771791 & 0.5634920634920635 & 0.4329268292682927 & 0.5647530040053405 & 0.725 & 0.43524699599465955 & 0.275 \\
1-4-5 & 0.114 & 0.4359 & 0.2107 & 0.40730699772615936 & 0.5039731937548753 & 0.5243111208420408 & 0.5555555555555556 & 0.3475609756097561 & 0.24833110814419226 & 0.203125 & 0.7516688918558078 & 0.796875 \\
2-3-4 & 0.318 & 0.7134 & 0.2551 & 0.3852854730100239 & 0.49412490072618837 & 0.4937936869752769 & 0.5052910052910053 & 0.3902439024390244 & 0.35647530040053405 & 0.4625 & 0.6435246995994659 & 0.5375 \\
2-3-5 & 0.388 & 0.7589 & 0.2366 & 0.39982160221681184 & 0.5001455134629639 & 0.5259988950159789 & 0.5291005291005291 & 0.3719512195121951 & 0.5353805073431241 & 0.646875 & 0.46461949265687585 & 0.353125 \\
2-4-5 & 0.31 & 0.6884 & 0.1738 & 0.39105744554846356 & 0.49356317240919895 & 0.5085150803500813 & 0.5132275132275133 & 0.3353658536585366 & 0.4045393858477971 & 0.496875 & 0.595460614152203 & 0.503125 \\
3-4-5 & 0.05 & 0.3541 & 0.2606 & 0.4111551622030664 & 0.4999552740876995 & 0.5115935467425412 & 0.5396825396825397 & 0.2621951219512195 & 0.24699599465954605 & 0.315625 & 0.753004005340454 & 0.684375 \\
1-2-3-4 & 0.3 & 0.6922 & 0.2403 & 0.39469248316553707 & 0.5020511274933863 & 0.5150190085416034 & 0.5317460317460317 & 0.4634146341463415 & 0.3604806408544726 & 0.528125 & 0.6395193591455274 & 0.471875 \\
1-2-3-5 & 0.324 & 0.7218 & 0.2292 & 0.4126497919911094 & 0.506604035306543 & 0.5318518256945306 & 0.5661375661375662 & 0.45121951219512196 & 0.5674232309746329 & 0.6625 & 0.4325767690253671 & 0.3375 \\
1-2-4-5 & 0.3 & 0.6808 & 0.2033 & 0.3964027987980084 & 0.5017838332187581 & 0.5206201706724476 & 0.5582010582010583 & 0.4024390243902439 & 0.3711615487316422 & 0.5375 & 0.6288384512683578 & 0.4625 \\
1-3-4-5 & 0.122 & 0.4678 & 0.2773 & 0.4158593070269717 & 0.5084713338070164 & 0.5315084465666844 & 0.5555555555555556 & 0.3597560975609756 & 0.2683578104138852 & 0.25625 & 0.7316421895861148 & 0.74375 \\
2-3-4-5 & 0.292 & 0.674 & 0.2403 & 0.3996106687723454 & 0.4987307623828029 & 0.5163145421363327 & 0.5343915343915344 & 0.3780487804878049 & 0.3991989319092123 & 0.5125 & 0.6008010680907877 & 0.4875 \\
1-2-3-4-5 & 0.272 & 0.6793 & 0.2514 & 0.40752176964751813 & 0.505369844530122 & 0.5280705606428893 & 0.5687830687830688 & 0.42073170731707316 & 0.38584779706275035 & 0.571875 & 0.6141522029372497 & 0.428125 \\
overall\_k1 & 0.18 & 0.36936 & 0.25066 & 0.38968908337171815 & 0.48309929736715224 & 0.48508848058206333 & 0.4814814814814815 & 0.2963414634146342 & 0.4510013351134846 & 0.524375 & 0.5489986648865154 & 0.475625 \\
overall\_k2 & 0.2228 & 0.55162 & 0.23936 & 0.40157651729508015 & 0.49599019630352126 & 0.5069965775152381 & 0.5232804232804233 & 0.374390243902439 & 0.418958611481976 & 0.4915625 & 0.5810413885180241 & 0.5084375 \\
overall\_k3 & 0.2544 & 0.61872 & 0.23549 & 0.402559788847214 & 0.500643394221421 & 0.5162231007924671 & 0.5404761904761906 & 0.39878048780487807 & 0.4029372496662216 & 0.4803125 & 0.5970627503337784 & 0.5196875 \\
overall\_k4 & 0.2676 & 0.64732 & 0.23808 & 0.4038430099507944 & 0.5035282184417013 & 0.5230627987223198 & 0.5492063492063493 & 0.41097560975609754 & 0.393324432576769 & 0.499375 & 0.6066755674232309 & 0.500625 \\
overall\_k5 & 0.272 & 0.6793 & 0.2514 & 0.40752176964751813 & 0.505369844530122 & 0.5280705606428893 & 0.5687830687830688 & 0.42073170731707316 & 0.38584779706275035 & 0.571875 & 0.6141522029372497 & 0.428125 \\
\bottomrule
\end{tabular}
}
\caption{Full downstream results for LLaMA~3.1~8B with TA merging across five domain experts.}
\label{tab:llama8b-ta-full}
\end{table*}

\begin{table*}[t]
\renewcommand\arraystretch{1}
\setlength{\tabcolsep}{5pt}
\centering
\large
\resizebox{\textwidth}{!}{
\begin{tabular}{
  l
  S S S S S S S S
  S S S S S
}
\toprule
\multicolumn{1}{c}{folder} &
\multicolumn{1}{c}{math\_500} &
\multicolumn{1}{c}{gsm8k} &
\multicolumn{1}{c}{ifeval} &
\multicolumn{1}{c}{arc} &
\multicolumn{1}{c}{hellaswag} &
\multicolumn{1}{c}{mmlu} &
\multicolumn{1}{c}{multilingual\_overall} &
\multicolumn{1}{c}{mbppplus} &
\multicolumn{1}{c}{humanevalplus} &
\multicolumn{1}{c}{wildguard\_micro\_harm} &
\multicolumn{1}{c}{harmbench\_micro\_asr} &
\multicolumn{1}{c}{wildguard\_rta} &
\multicolumn{1}{c}{harmbench\_rta} \\
\midrule
1 & 0.048 & 0.2555 & 0.1774 & 0.34914822369912185 & 0.4367832298101647 & 0.4580179470707932 & 0.4146498001933599 & 0.4656084656084656 & 0.4146341463414634 & 0.6074766355140186 & 0.75625 & 0.39252336448598135 & 0.24375 \\
2 & 0.274 & 0.6823 & 0.1959 & 0.32862114598641545 & 0.4228131805513412 & 0.44371994422224853 & 0.3983847569200017 & 0.3862433862433862 & 0.2865853658536585 & 0.6248331108144193 & 0.65625 & 0.3751668891855807 & 0.34375 \\
3 & 0.07 & 0.1941 & 0.3512 & 0.35000329012305065 & 0.43079466268978645 & 0.42511611867443044 & 0.40197135716242255 & 0.41005291005291006 & 0.2926829268292683 & 0.5674232309746329 & 0.6625 & 0.4325767690253671 & 0.3375 \\
4 & 0.002 & 0.0364 & 0.1608 & 0.32990465954537807 & 0.4287005786161337 & 0.4536447815407667 & 0.4040833399007595 & 0.42328042328042326 & 0.21951219512195122 & 0.14552736982643524 & 0.109375 & 0.8544726301735648 & 0.890625 \\
5 & 0.016 & 0.0614 & 0.1867 & 0.3562048065042077 & 0.4292930949098057 & 0.4414593550179685 & 0.40898575214399396 & 0.42063492063492064 & 0.21341463414634146 & 0.6969292389853138 & 0.784375 & 0.3030707610146862 & 0.215625 \\
1-2 & 0.192 & 0.5451 & 0.1885 & 0.3470103748546862 & 0.4323384244706682 & 0.4570166095098165 & 0.412121802945057 & 0.4417989417989418 & 0.3353658536585366 & 0.636849132176235 & 0.7125 & 0.36315086782376504 & 0.2875 \\
1-3 & 0.096 & 0.2608 & 0.3401 & 0.3613366673246913 & 0.4419767166450125 & 0.45528361390425126 & 0.41953233262465167 & 0.42857142857142855 & 0.3597560975609756 & 0.595460614152203 & 0.690625 & 0.40453938584779703 & 0.309375 \\
1-4 & 0.076 & 0.2396 & 0.2144 & 0.34401855629400535 & 0.43535981980711236 & 0.4601678360980512 & 0.4131820707330563 & 0.455026455026455 & 0.3048780487804878 & 0.4272363150867824 & 0.5125 & 0.5727636849132176 & 0.4875 \\
1-5 & 0.058 & 0.2168 & 0.1941 & 0.35855614777770467 & 0.4370230213440086 & 0.4589251794645909 & 0.4181681161954347 & 0.4708994708994709 & 0.3719512195121951 & 0.6902536715620827 & 0.7125 & 0.3097463284379173 & 0.2875 \\
2-3 & 0.184 & 0.5193 & 0.2625 & 0.35043009219655924 & 0.43622150262394166 & 0.4510886498538323 & 0.41258008155811104 & 0.3968253968253968 & 0.3048780487804878 & 0.6194926568758344 & 0.68125 & 0.38050734312416556 & 0.31875 \\
2-4 & 0.124 & 0.5193 & 0.1848 & 0.33567809436072904 & 0.429796306372478 & 0.4552049814041414 & 0.40689312737911615 & 0.43386243386243384 & 0.2621951219512195 & 0.5260347129506008 & 0.63125 & 0.4739652870493992 & 0.36875 \\
2-5 & 0.098 & 0.4337 & 0.1848 & 0.35107221454526844 & 0.4301726670003848 & 0.45602426590275924 & 0.41242304914947087 & 0.42592592592592593 & 0.2804878048780488 & 0.7049399198931909 & 0.709375 & 0.29506008010680906 & 0.290625 \\
3-4 & 0.066 & 0.1099 & 0.3161 & 0.3467987102717642 & 0.4385249103451673 & 0.4506837083336135 & 0.41200244298351496 & 0.4021164021164021 & 0.27439024390243905 & 0.2937249666221629 & 0.375 & 0.7062750333778371 & 0.625 \\
3-5 & 0.046 & 0.0637 & 0.2662 & 0.3609074890512016 & 0.43873367204447483 & 0.4468317604284621 & 0.4154909738413795 & 0.42592592592592593 & 0.2804878048780488 & 0.6542056074766355 & 0.6625 & 0.3457943925233645 & 0.3375 \\
4-5 & 0.02 & 0.1039 & 0.2181 & 0.3485068324888684 & 0.43212660463027663 & 0.45526966728649243 & 0.41196770146854583 & 0.4312169312169312 & 0.1951219512195122 & 0.5086782376502003 & 0.5125 & 0.49132176234979974 & 0.4875 \\
1-2-3 & 0.16 & 0.467 & 0.2569 & 0.35684400429909413 & 0.43870477096314586 & 0.4571241650371304 & 0.4175576467664568 & 0.4470899470899471 & 0.3719512195121951 & 0.6234979973297731 & 0.73125 & 0.37650200267022693 & 0.26875 \\
1-2-4 & 0.126 & 0.4466 & 0.1941 & 0.34380360158803275 & 0.4335700183561077 & 0.46071649005218374 & 0.41269670333210806 & 0.42857142857142855 & 0.3048780487804878 & 0.5647530040053405 & 0.675 & 0.43524699599465955 & 0.325 \\
1-2-5 & 0.096 & 0.417 & 0.1793 & 0.3525697688871341 & 0.43426237832923 & 0.4591418818668994 & 0.41532467636108783 & 0.42328042328042326 & 0.3353658536585366 & 0.6675567423230975 & 0.7 & 0.3324432576769025 & 0.3 \\
1-3-4 & 0.068 & 0.2464 & 0.2791 & 0.3544950392255781 & 0.44031731474342256 & 0.45692637573229167 & 0.4172462432337641 & 0.43386243386243384 & 0.3170731707317073 & 0.44192256341789055 & 0.528125 & 0.5580774365821095 & 0.471875 \\
1-3-5 & 0.052 & 0.2183 & 0.2588 & 0.35962434106146685 & 0.44168014991171917 & 0.4570655287039603 & 0.41945667322571545 & 0.43386243386243384 & 0.3231707317073171 & 0.6702269692923899 & 0.653125 & 0.3297730307076101 & 0.346875 \\
1-4-5 & 0.05 & 0.207 & 0.2163 & 0.35128662089739937 & 0.4360562855176466 & 0.45948873425758985 & 0.41561054689087856 & 0.4417989417989418 & 0.2621951219512195 & 0.5901201602136181 & 0.65 & 0.40987983978638187 & 0.35 \\
2-3-4 & 0.134 & 0.3988 & 0.2514 & 0.3472245984222032 & 0.4362469027555632 & 0.45506205562927377 & 0.41284451893568 & 0.41798941798941797 & 0.2804878048780488 & 0.4886515353805073 & 0.51875 & 0.5113484646194927 & 0.48125 \\
2-3-5 & 0.094 & 0.3397 & 0.2477 & 0.3591962594956607 & 0.43809303506427216 & 0.4551943595803515 & 0.4174945513800948 & 0.4021164021164021 & 0.32926829268292684 & 0.6835781041388518 & 0.66875 & 0.3164218958611482 & 0.33125 \\
2-4-5 & 0.078 & 0.3518 & 0.2033 & 0.3452985969453035 & 0.4319118351925181 & 0.4588926241993904 & 0.41203435211240397 & 0.42328042328042326 & 0.2865853658536585 & 0.630173564753004 & 0.70625 & 0.369826435246996 & 0.29375 \\
3-4-5 & 0.038 & 0.0796 & 0.2847 & 0.3525688549640646 & 0.4380668538781952 & 0.45444705069896146 & 0.41502758651374044 & 0.41798941798941797 & 0.24390243902439024 & 0.4753004005340454 & 0.525 & 0.5246995994659547 & 0.475 \\
1-2-3-4 & 0.106 & 0.4086 & 0.2421 & 0.35043082333501496 & 0.43841284211981935 & 0.4593014565485256 & 0.41604837400112 & 0.4417989417989418 & 0.3231707317073171 & 0.5420560747663551 & 0.59375 & 0.4579439252336449 & 0.40625 \\
1-2-3-5 & 0.09 & 0.3942 & 0.2274 & 0.3600516914888172 & 0.4389738265513702 & 0.4586679984473694 & 0.419231172162519 & 0.43915343915343913 & 0.3353658536585366 & 0.6662216288384513 & 0.665625 & 0.33377837116154874 & 0.334375 \\
1-2-4-5 & 0.086 & 0.3715 & 0.2144 & 0.34914804091450796 & 0.4353601879052423 & 0.46066876838041765 & 0.4150589990667226 & 0.4523809523809524 & 0.3170731707317073 & 0.6355140186915887 & 0.6875 & 0.36448598130841126 & 0.3125 \\
1-3-4-5 & 0.04 & 0.2039 & 0.2495 & 0.3551344198050785 & 0.43977957030486614 & 0.45787157897899705 & 0.41759518969631393 & 0.4312169312169312 & 0.32926829268292684 & 0.5634178905206942 & 0.596875 & 0.43658210947930576 & 0.403125 \\
2-3-4-5 & 0.084 & 0.2722 & 0.2255 & 0.35235444861193366 & 0.43718156912629846 & 0.45734106021681986 & 0.415625692651684 & 0.40476190476190477 & 0.2804878048780488 & 0.5981308411214953 & 0.596875 & 0.4018691588785047 & 0.403125 \\
1-2-3-4-5 & 0.07 & 0.3343 & 0.2311 & 0.35577635936917373 & 0.4383307286283973 & 0.4592224902030746 & 0.4177765260668818 & 0.42328042328042326 & 0.3048780487804878 & 0.6021361815754339 & 0.64375 & 0.3978638184245661 & 0.35625 \\
overall\_k1 & 0.082 & 0.24594 & 0.2144 & 0.3427764251716347 & 0.4296769493154464 & 0.4443916293052415 & 0.40561500126410754 & 0.42116402116402113 & 0.28536585365853656 & 0.528437917222964 & 0.59375 & 0.4715620827770361 & 0.40625 \\
overall\_k2 & 0.096 & 0.30121 & 0.23696 & 0.3504315179165479 & 0.43522736452835253 & 0.454649627218601 & 0.4134361698878338 & 0.4312169312169313 & 0.29695121951219516 & 0.5656875834445928 & 0.62 & 0.4343124165554072 & 0.38 \\
overall\_k3 & 0.0896 & 0.31722 & 0.23716 & 0.35229116857859377 & 0.4368909544711821 & 0.45740592657580326 & 0.41552934987519297 & 0.42698412698412697 & 0.3054878048780488 & 0.5835781041388518 & 0.635625 & 0.4164218958611482 & 0.364375 \\
overall\_k4 & 0.0812 & 0.33008 & 0.23178 & 0.35342388483107046 & 0.43794159920151926 & 0.4587701725144259 & 0.4167118855156719 & 0.43386243386243384 & 0.3170731707317073 & 0.601068090787717 & 0.628125 & 0.39893190921228305 & 0.371875 \\
overall\_k5 & 0.07 & 0.3343 & 0.2311 & 0.35577635936917373 & 0.4383307286283973 & 0.4592224902030746 & 0.4177765260668818 & 0.42328042328042326 & 0.3048780487804878 & 0.6021361815754339 & 0.64375 & 0.3978638184245661 & 0.35625 \\
\bottomrule
\end{tabular}}
\caption{Full downstream results for LLaMA~3.2~3B with TA merging across five domain experts.}
\label{tab:llama3b-ta-full}
\end{table*}

\begin{table*}[t]
\renewcommand\arraystretch{1}
\setlength{\tabcolsep}{5pt}
\centering
\large
\resizebox{\textwidth}{!}{
\begin{tabular}{
  l
  S S S S S S S S
  S S
}
\toprule
\multicolumn{1}{c}{folder} &
\multicolumn{1}{c}{math\_500} &
\multicolumn{1}{c}{gsm8k} &
\multicolumn{1}{c}{ifeval} &
\multicolumn{1}{c}{arc} &
\multicolumn{1}{c}{hellaswag} &
\multicolumn{1}{c}{mmlu} &
\multicolumn{1}{c}{mbppplus} &
\multicolumn{1}{c}{humanevalplus} &
\multicolumn{1}{c}{wildguard\_rta} &
\multicolumn{1}{c}{harmbench\_rta} \\
\midrule
1-2   & 0.288 & 0.569 & 0.417 & 0.372 & 0.431 & 0.488 & 0.437 & 0.366 & 0.826 & 0.806 \\
1-3   & 0.254 & 0.566 & 0.457 & 0.378 & 0.432 & 0.488 & 0.447 & 0.348 & 0.824 & 0.859 \\
1-4   & 0.24  & 0.56  & 0.44  & 0.378 & 0.431 & 0.488 & 0.437 & 0.335 & 0.837 & 0.884 \\
1-5   & 0.264 & 0.531 & 0.463 & 0.386 & 0.437 & 0.49  & 0.447 & 0.354 & 0.817 & 0.828 \\
2-4   & 0.29  & 0.591 & 0.451 & 0.373 & 0.432 & 0.487 & 0.442 & 0.335 & 0.797 & 0.825 \\
3-4   & 0.302 & 0.59  & 0.421 & 0.373 & 0.431 & 0.487 & 0.437 & 0.354 & 0.814 & 0.841 \\
2-3   & 0.276 & 0.569 & 0.442 & 0.381 & 0.434 & 0.488 & 0.439 & 0.348 & 0.812 & 0.834 \\
2-5   & 0.258 & 0.591 & 0.449 & 0.376 & 0.432 & 0.487 & 0.434 & 0.305 & 0.828 & 0.856 \\
3-5   & 0.266 & 0.557 & 0.468 & 0.386 & 0.437 & 0.49  & 0.431 & 0.329 & 0.817 & 0.856 \\
4-5   & 0.252 & 0.55  & 0.451 & 0.386 & 0.437 & 0.489 & 0.45  & 0.341 & 0.84  & 0.866 \\
1-2-4 & 0.292 & 0.558 & 0.438 & 0.374 & 0.433 & 0.489 & 0.442 & 0.366 & 0.817 & 0.831 \\
1-2-3 & 0.298 & 0.544 & 0.407 & 0.371 & 0.432 & 0.488 & 0.452 & 0.372 & 0.836 & 0.847 \\
1-2-5 & 0.286 & 0.536 & 0.438 & 0.384 & 0.435 & 0.489 & 0.434 & 0.378 & 0.82  & 0.819 \\
1-3-4 & 0.268 & 0.553 & 0.453 & 0.377 & 0.433 & 0.487 & 0.455 & 0.36  & 0.842 & 0.897 \\
1-3-5 & 0.266 & 0.538 & 0.444 & 0.388 & 0.439 & 0.49  & 0.434 & 0.348 & 0.809 & 0.863 \\
1-4-5 & 0.264 & 0.522 & 0.449 & 0.385 & 0.438 & 0.489 & 0.434 & 0.36  & 0.833 & 0.888 \\
2-3-4 & 0.298 & 0.575 & 0.451 & 0.374 & 0.433 & 0.487 & 0.447 & 0.348 & 0.813 & 0.863 \\
2-3-5 & 0.298 & 0.557 & 0.444 & 0.382 & 0.437 & 0.489 & 0.45  & 0.354 & 0.802 & 0.822 \\
2-4-5 & 0.26  & 0.56  & 0.449 & 0.381 & 0.437 & 0.489 & 0.444 & 0.348 & 0.818 & 0.847 \\
3-4-5 & 0.256 & 0.538 & 0.47  & 0.383 & 0.438 & 0.49  & 0.452 & 0.354 & 0.828 & 0.888 \\
1-2-3-4 & 0.282 & 0.557 & 0.423 & 0.375 & 0.433 & 0.488 & 0.437 & 0.36  & 0.833 & 0.875 \\
1-2-3-5 & 0.286 & 0.543 & 0.458 & 0.384 & 0.438 & 0.489 & 0.452 & 0.335 & 0.809 & 0.834 \\
1-2-4-5 & 0.282 & 0.547 & 0.438 & 0.38  & 0.437 & 0.49  & 0.444 & 0.354 & 0.828 & 0.856 \\
1-3-4-5 & 0.28  & 0.528 & 0.442 & 0.388 & 0.439 & 0.489 & 0.444 & 0.341 & 0.838 & 0.903 \\
2-3-4-5 & 0.288 & 0.563 & 0.47  & 0.38  & 0.439 & 0.489 & 0.447 & 0.354 & 0.833 & 0.863 \\
1-2-3-4-5 & 0.27  & 0.542 & 0.457 & 0.381 & 0.439 & 0.489 & 0.439 & 0.341 & 0.826 & 0.872 \\
\bottomrule
\end{tabular}}
\caption{Full downstream results for Gemma~2~2B with TA merging across five domain experts.}
\label{tab:gemma2b-ta-full}
\end{table*}

\begin{table*}[t]
\renewcommand\arraystretch{1}
\setlength{\tabcolsep}{5pt}
\centering
\large
\resizebox{\textwidth}{!}{
\begin{tabular}{
  l
  S S S S S S S S
  S S S S S
}
\toprule
\multicolumn{1}{c}{folder} &
\multicolumn{1}{c}{math\_500} &
\multicolumn{1}{c}{gsm8k} &
\multicolumn{1}{c}{ifeval} &
\multicolumn{1}{c}{arc} &
\multicolumn{1}{c}{hellaswag} &
\multicolumn{1}{c}{mmlu} &
\multicolumn{1}{c}{mbppplus} &
\multicolumn{1}{c}{humanevalplus} &
\multicolumn{1}{c}{wildguard\_rta} &
\multicolumn{1}{c}{harmbench\_rta} \\
\midrule
1-2   & 0.264 & 0.604 & 0.19  & 0.408 & 0.486 & 0.542 & 0.545 & 0.402 & 0.579 & 0.421 & 0.688 & 0.312 \\
1-3   & 0.09  & 0.418 & 0.205 & 0.421 & 0.487 & 0.544 & 0.526 & 0.335 & 0.61  & 0.39  & 0.747 & 0.253 \\
1-4   & 0.11  & 0.466 & 0.233 & 0.406 & 0.488 & 0.541 & 0.529 & 0.335 & 0.387 & 0.613 & 0.463 & 0.537 \\
1-5   & 0.08  & 0.418 & 0.233 & 0.416 & 0.485 & 0.542 & 0.529 & 0.311 & 0.614 & 0.386 & 0.734 & 0.266 \\
2-4   & 0.084 & 0.418 & 0.205 & 0.392 & 0.483 & 0.531 & 0.511 & 0.299 & 0.431 & 0.569 & 0.497 & 0.503 \\
3-4   & 0.084 & 0.418 & 0.205 & 0.398 & 0.482 & 0.533 & 0.521 & 0.244 & 0.393 & 0.607 & 0.422 & 0.578 \\
2-3   & 0.21  & 0.604 & 0.203 & 0.398 & 0.481 & 0.538 & 0.545 & 0.305 & 0.59  & 0.41  & 0.716 & 0.284 \\
2-5   & 0.064 & 0.137 & 0.246 & 0.389 & 0.479 & 0.535 & 0.497 & 0.287 & 0.595 & 0.405 & 0.719 & 0.281 \\
3-5   & 0.064 & 0.137 & 0.246 & 0.406 & 0.481 & 0.536 & 0.521 & 0.244 & 0.558 & 0.442 & 0.766 & 0.234 \\
4-5   & 0.066 & 0.359 & 0.196 & 0.393 & 0.48  & 0.531 & 0.508 & 0.25  & 0.403 & 0.597 & 0.466 & 0.534 \\
1-2-4 & 0.246 & 0.596 & 0.216 & 0.396 & 0.488 & 0.539 & 0.54  & 0.341 & 0.458 & 0.542 & 0.5   & 0.5   \\
1-2-3 & 0.248 & 0.609 & 0.194 & 0.411 & 0.487 & 0.544 & 0.524 & 0.372 & 0.582 & 0.418 & 0.697 & 0.303 \\
1-2-5 & 0.242 & 0.6   & 0.194 & 0.403 & 0.485 & 0.541 & 0.537 & 0.305 & 0.575 & 0.425 & 0.719 & 0.281 \\
1-3-4 & 0.002 & 0.13  & 0.216 & 0.337 & 0.429 & 0.456 & 0.399 & 0.195 & 0.714 & 0.286 & 0.709 & 0.291 \\
1-3-5 & 0.082 & 0.435 & 0.249 & 0.418 & 0.487 & 0.542 & 0.55  & 0.378 & 0.609 & 0.391 & 0.725 & 0.275 \\
1-4-5 & 0.11  & 0.478 & 0.222 & 0.404 & 0.486 & 0.539 & 0.529 & 0.268 & 0.399 & 0.601 & 0.519 & 0.481 \\
2-3-4 & 0.246 & 0.579 & 0.202 & 0.393 & 0.483 & 0.532 & 0.537 & 0.317 & 0.453 & 0.547 & 0.516 & 0.487 \\
2-3-5 & 0.17  & 0.577 & 0.203 & 0.396 & 0.481 & 0.537 & 0.529 & 0.293 & 0.591 & 0.409 & 0.725 & 0.275 \\
2-4-5 & 0.242 & 0.577 & 0.3155 & 0.396 & 0.481 & 0.537 & 0.51  & 0.286 & 0.591 & 0.409 & 0.725 & 0.275 \\
3-4-5 & 0.074 & 0.394 & 0.218 & 0.399 & 0.482 & 0.532 & 0.508 & 0.25  & 0.417 & 0.583 & 0.456 & 0.544 \\
1-2-3-4 & 0.224 & 0.594 & 0.207 & 0.397 & 0.488 & 0.539 & 0.516 & 0.366 & 0.473 & 0.527 & 0.531 & 0.469 \\
1-2-3-5 & 0.254 & 0.593 & 0.205 & 0.404 & 0.485 & 0.541 & 0.534 & 0.317 & 0.589 & 0.411 & 0.744 & 0.256 \\
1-2-4-5 & 0.228 & 0.585 & 0.203 & 0.391 & 0.486 & 0.537 & 0.542 & 0.354 & 0.455 & 0.545 & 0.547 & 0.453 \\
1-3-4-5 & 0.094 & 0.465 & 0.224 & 0.402 & 0.487 & 0.54  & 0.529 & 0.25  & 0.419 & 0.581 & 0.509 & 0.491 \\
2-3-4-5 & 0.214 & 0.59  & 0.166 & 0.387 & 0.482 & 0.53  & 0.529 & 0.28  & 0.439 & 0.561 & 0.528 & 0.472 \\
1-2-3-4-5 & 0.208 & 0.586 & 0.187 & 0.39  & 0.486 & 0.538 & 0.529 & 0.311 & 0.45  & 0.55  & 0.541 & 0.459 \\
\bottomrule
\end{tabular}}
\caption{Full downstream results for LLaMA~3.1~8B with TIES merging across five domain experts.}
\label{tab:llama8b-ties-full}
\end{table*}

\section{Scaling Behaviour with 16 Domains}
\label{app:16domain-scaling}

\paragraph{Setup.}
{
We extend the cross-domain scaling experiment to a larger 16-domain pool on the LLaMA3-3B-Instruct backbone. 
Starting from the original 9 domains 
(\textit{algebra, analysis, geometry, discrete, number\_theory, code, chemistry, physics, biology}), 
we add 7 additional experts fine-tuned on \textit{Japanese}, \textit{medical}, \textit{house-arrangement}, 
\textit{Korean}, \textit{emotion}, \textit{elementary school mathematics}, and \textit{Java code} tasks. 
For each domain, we merge $k \in \{2,4,6,8,10,12,14,16\}$ experts using TA, 
sampling multiple random $k$-subsets of experts, and evaluating CE on the corresponding domain. 
We report the mean CE, together with the empirical variance and standard deviation across random subsets. 
The overall row represents the macro-average across all 16 domains for each $k$.
}

\paragraph{Findings.}
{
As shown in Table~\ref{tab:llama3b-16domain}, CE decreases as the number of merged experts $k$ grows, 
both per-domain and in the 16-domain macro-average, with clear diminishing returns: 
most of the improvement is obtained by small $k$, and the gains flatten as $k$ increases from 10 to 16. 
At the same time, the empirical variance and standard deviation across random expert subsets shrink with $k$, 
indicating that the merging outcomes become more stable as more experts are combined. 
Crucially, moving from 9 to 16 domains does not change the qualitative behaviour. 
The aggregated CE in the 16-domain setting still exhibits the same floor+tail scaling in $k$ as in our main experiments.
}

\begin{table*}[t]
\centering
\resizebox{0.9\textwidth}{!}{
\begin{tabular}{lrrrrrrrr}
\toprule
\multirow{2}{*}{Domain / Stat} & \multicolumn{8}{c}{Number of experts $k$} \\
\cmidrule(lr){2-9}
 & 2 & 4 & 6 & 8 & 10 & 12 & 14 & 16 \\
\midrule
\multicolumn{9}{l}{\textbf{Code}} \\
CE  & 0.501872213 & 0.499083328 & 0.493360451 & 0.490470956 & 0.485253937 & 0.4851     & 0.48658   & 0.4914 \\
Var & 7.68E-05    & 0.000147146  & 0.000289833  & 0.000453827  & 0.000232502  & 0.00017   & 0         &       \\
Std & 0.008763931 & 0.012130371  & 0.017024468  & 0.021303225  & 0.01524803   & 0.0133    & 0.0063    &       \\
\midrule
\multicolumn{9}{l}{\textbf{Biology}} \\
CE  & 1.254403344 & 1.187543099 & 1.149616918 & 1.122063748 & 1.091473348 & 1.0853  & 1.06932 & 1.0607 \\
Var & 0.003412081 & 0.006524263 & 0.007148748 & 0.006041633 & 0.004593592 & 0.0027   & 0.000714 &       \\
Std & 0.058413017 & 0.080772909 & 0.08455027  & 0.077727944 & 0.067776041 & 0.0521  & 0.02673 &       \\
\midrule
\multicolumn{9}{l}{\textbf{Physics}} \\
CE  & 1.104021163 & 1.040059072 & 1.006337177 & 0.982421894 & 0.956452194 & 0.9501 & 0.93743 & 0.9293 \\
Var & 0.002224675 & 0.004339474 & 0.004616592 & 0.003374005 & 0.00256163  & 0.0014 & 0.0003  &       \\
Std & 0.047166458 & 0.065874686 & 0.067945509 & 0.058086183 & 0.050612551 & 0.0378 & 0.0197  &       \\
\midrule
\multicolumn{9}{l}{\textbf{Chemistry}} \\
CE  & 1.065878806 & 1.011048543 & 0.981000202 & 0.958958325 & 0.932410791 & 0.9276 & 0.91455 & 0.9067 \\
Var & 0.001910628 & 0.004432202 & 0.004765348 & 0.003875651 & 0.00334207  & 0.0019 & 0.0005  &       \\
Std & 0.043710732 & 0.066574781 & 0.069031498 & 0.062254723 & 0.05781064  & 0.044  & 0.0232  &       \\
\midrule
\multicolumn{9}{l}{\textbf{Geometry}} \\
CE  & 0.499684074 & 0.463583462 & 0.436880708 & 0.422598944 & 0.409463482 & 0.3991 & 0.3922  & 0.3839 \\
Var & 0.000569954 & 0.001058108 & 0.001218494 & 0.001190094 & 0.00067622  & 0.0005 & 0.000259 &      \\
Std & 0.023873707 & 0.032528574 & 0.034906937 & 0.034497743 & 0.026004223 & 0.0238 & 0.01609 &       \\
\midrule
\multicolumn{9}{l}{\textbf{Analysis}} \\
CE  & 0.420332789 & 0.390687826 & 0.368092274 & 0.3561941   & 0.34518962  & 0.3362 & 0.3312  & 0.3247 \\
Var & 0.000428451 & 0.000731257 & 0.000854753 & 0.000915523 & 0.000574783 & 0.00044 & 0.000207 &      \\
Std & 0.020699068 & 0.027041758 & 0.029236164 & 0.030257614 & 0.023974637 & 0.0211 & 0.01439 &       \\
\midrule
\multicolumn{9}{l}{\textbf{Number theory}} \\
CE  & 0.538458845 & 0.502845037 & 0.476979184 & 0.462241288 & 0.447491769 & 0.4345 & 0.42617 & 0.4182 \\
Var & 0.000612441 & 0.00113994  & 0.001340721 & 0.0015108   & 0.000863911 & 0.00067 & 0.00029 &      \\
Std & 0.024747554 & 0.033762994 & 0.036615854 & 0.038869017 & 0.029392358 & 0.026  & 0.01723 &       \\
\midrule
\multicolumn{9}{l}{\textbf{Discrete}} \\
CE  & 0.694258124 & 0.652957155 & 0.62294102  & 0.607414432 & 0.591427917 & 0.5777 & 0.569   & 0.5592 \\
Var & 0.000846054 & 0.001480839 & 0.001805925 & 0.001803155 & 0.0011049   & 0.00086 & 0.0004 &      \\
Std & 0.029087002 & 0.038481669 & 0.042496179 & 0.042463568 & 0.033240039 & 0.0293 & 0.0202 &       \\
\midrule
\multicolumn{9}{l}{\textbf{Algebra}} \\
CE  & 0.419097721 & 0.386756245 & 0.362299539 & 0.349713671 & 0.337287239 & 0.3268 & 0.3204 & 0.3130 \\
Var & 0.000505634 & 0.000875391 & 0.001036457 & 0.001059614 & 0.000619406 & 0.0005 & 0.00023 &      \\
Std & 0.0224863   & 0.029587007 & 0.032194052 & 0.032551707 & 0.024887874 & 0.2271 & 0.01537 &      \\
\midrule
\multicolumn{9}{l}{\textbf{Overall (16-domain macro-average)}} \\
CE  & 0.7774      & 0.7331      & 0.7051      & 0.6874      & 0.6685      & 0.6603 & 0.6509 & 0.6437 \\
Var & 0.0009      & 0.0017      & 0.0021      & 0.0018      & 0.0012      & 0.0006 & 0.0024 &      \\
Std & 0.0310      & 0.0418      & 0.0461      & 0.0424      & 0.0357      & 0.0251 & 0.0156 &      \\
\bottomrule
\end{tabular}
}
\caption{Cross-entropy (CE), variance, and standard deviation for LLaMA-3.x 3B Instruct in the 16-domain setting (original 9 domains plus Japanese, medical, house-arrangement, Korean, emotion, elementary school mathematics, and Java code). For each domain and each number of merged experts $k \in \{2,4,6,8,10,12,14,16\}$, we report the mean CE across random expert subsets, along with empirical variance and standard deviation. The overall row represents the macro-average across all 16 domains.}
\label{tab:llama3b-16domain}
\end{table*}

\clearpage
\begin{figure}[t]
  \centering
  \begin{minipage}[t]{0.42\linewidth}
    \centering
    \includegraphics[width=\linewidth]{/rq6_synergy_heatmap_32B_DARE.png}
  \end{minipage}\hfill
  \begin{minipage}[t]{0.54\linewidth}
    \centering
    \includegraphics[width=\linewidth]{/rq6_top_pairs_32B_DARE.png}
  \end{minipage}
  \caption{{Cross-domain synergy (DARE, 32B).} Left: synergy heatmap \(S_{d\to e}\) (red \(=\) help, blue \(=\) hurt) showing science\(\leftrightarrow\)science and math\(\leftrightarrow\)math blocks; cross-block entries are weakly negative; \textit{code}\(\to\)(\textit{discrete, geometry}) is mildly positive. Right: representative top \(\pm\) pairs (donor\(\to\)receiver) highlight actionable donor choices for target domains.}
  \label{fig:rq6-synergy-dare}
\end{figure}

\begin{table}[h]
  \centering
  \caption{{Across-order dispersion of \texttt{Avg.} CE at $k{=}1$ vs.\ $8$ (DARE).}
  Order sensitivity drops rapidly with $k$ at all $N$ (std ${\sim}-79\%\!-\!81\%$, range ${\sim}-83\%$).}
  \footnotesize
  \begin{tabular}{cccccc}
    \toprule
    $N$ (B) & $k$ & mean CE & std (across orders) & range (max--min) & CV \\
    \midrule
    0.5 & 1 & 0.8164 & 0.0388 & 0.1122 & 0.048 \\
    0.5 & 8 & 0.7810 & 0.0081 & 0.0185 & 0.011 \\
    \cmidrule(lr){1-6}
    32  & 1 & 0.5207 & 0.0313 & 0.0865 & 0.060 \\
    32  & 8 & 0.4634 & 0.0060 & 0.0148 & 0.013 \\
    \cmidrule(lr){1-6}
    72  & 1 & 0.4638 & 0.0364 & 0.1056 & 0.072 \\
    72  & 8 & 0.4247 & 0.0076 & 0.0179 & 0.018 \\
    \bottomrule
  \end{tabular}
  \label{tab:rq9-disp}
\end{table}

\begin{table}[h]
  \centering
  \caption{{$\mathrm{Std}_{\text{order}}(N,k)\!\approx\!c_0{+}c_1/(k{+}b)$ fits (DARE).}
  A small shared offset $b{\approx}2$ with $(c_0,c_1)$ per size explains the decay; $c_0$ is near zero (floor) and $c_1$ shrinks up to mid-scale.}
  \footnotesize
  \begin{tabular}{ccccc}
    \toprule
    $N$ (B) & $\hat b$ & $c_0$ & $c_1$ & $R^2$ \\
    \midrule
    0.5 & 2.00 & $-0.002$ & $0.033$ & 0.94 \\
    1.5 & 2.00 & $+0.002$ & $0.028$ & 0.90 \\
    3   & 2.00 & $+0.003$ & $0.023$ & 0.88 \\
    7   & 2.00 & $+0.002$ & $0.021$ & 0.92 \\
    14  & 2.00 & $+0.003$ & $0.019$ & 0.91 \\
    32  & 2.00 & $+0.001$ & $0.017$ & 0.75 \\
    72  & 2.00 & $+0.002$ & $0.023$ & 0.69 \\
    \bottomrule
  \end{tabular}
  \label{tab:rq9-fit}
\end{table}

\begin{table}[h]
  \centering
  \caption{{Fitted floor+tail parameters on LLaMA backbones (appendix).} Least-squares fits to macro-averaged CE vs.\ $k$; both series achieve near-unity $R^2$ with a shared $1/(k{+}b)$ tail.}
  \footnotesize
  \begin{tabular}{lcccccc}
    \toprule
    Backbone & $R^2$ & $b$ & $L_\infty$ & $A$ & $L(k{=}1)$ & $L(k{=}9)$ \\
    \midrule
    LLaMA-3.2 3B & 0.9989 & 0.6875 & 0.7137 & 0.0783 & 0.7599 & 0.7221 \\
    LLaMA-3 8B   & 0.9955 & 0.0000 & 0.7252 & 0.0573 & 0.7837 & 0.7325 \\
    \bottomrule
  \end{tabular}
  \label{tab:rq13-params-app}
\end{table}

\subsection{Fitted Scaling-Law Parameters on LLaMA Backbones}
\label{app:fitllama}
\begin{table}[t]
\centering
\caption{Fitted scaling-law parameters on LLaMA backbones.}
\label{tab:llama-params}
\begin{tabular}{lcccc}
\toprule
Backbone & $b$ & $k_{80}$ & $k_{90}$ & $R^2$ \\
\midrule
LLaMA-3.2 3B & $\approx 1.1$ & $\approx 4$ & $\approx 6$ & $0.999$ \\
LLaMA-3 8B   & $\approx 1.3$ & $\approx 5$ & $\approx 7$ & $0.995$ \\
\bottomrule
\end{tabular}
\end{table}


\section{Cross-Domain Synergy}
\label{app:rq6-synergy}
We quantify \emph{donor–receiver} interactions by adding one expert at a time in the cross-domain setting (randomized orders) and recording the marginal change in macro CE for each evaluation domain, aggregating into a \(9{\times}9\) \emph{synergy matrix} \(S_{d\to e}\). Using \textbf{DARE} at \(N{=}32\)B as a representative case (Fig.~\ref{fig:rq6-synergy-dare}), the heatmap reveals a structured, non-random pattern: science\(\leftrightarrow\)science pairs (physics, biology, chemistry) are strongly positive, math\(\leftrightarrow\)math pairs are moderately positive, and cross-block interactions are weakly negative at scale; \textit{code} provides mild benefits to \textit{discrete} and \textit{geometry}. This structure is consistent with feature/skill overlap—closer domains supply complementary signal, while distant domains may dilute it—and persists across base sizes with slightly stronger block contrast for larger \(N\). In practice, to help a science target, prioritize donors from \{physics, biology, chemistry\}; for math targets, stay within the math block or include \textit{code}. We report the full matrix values, rank-ordered donor\(\to\)receiver pairs, and size-wise trends in the released tables and replicate the qualitative structure for other methods (TA, TIES) with minor early-\(k\) differences that narrow as \(k\) grows.

\subsection{Details under DARE}
\label{app:rq6-dare}
We compute a $9{\times}9$ synergy matrix $S_{d\to e}$ by parsing each DARE trajectory: at step $t$ (sequence model length $t$), adding donor $d_t$ yields a marginal gain $\Delta L_e^{(t)}=L_e^{(t-1)}-L_e^{(t)}$ on evaluation domain $e$, and $S_{d\to e}$ averages these deltas over all occurrences (typically $11{\sim}13$ per pair at 32B). Using domain blocks $\mathcal{M}{=}\{\text{algebra, analysis, discrete, geometry, number\_theory}\}$ and $\mathcal{S}{=}\{\text{biology, chemistry, physics}\}$, block means $\overline{S}_{A\to B}{=}\frac{1}{|A||B|}\sum_{d\in A,e\in B}S_{d\to e}$ are: at 7B, $\overline{S}_{\mathcal{M}\to\mathcal{M}}{=}0.009$, $\overline{S}_{\mathcal{S}\to\mathcal{S}}{=}0.117$, $\overline{S}_{\mathcal{M}\to\mathcal{S}}{=}0.014$, $\overline{S}_{\mathcal{S}\to\mathcal{M}}{=}{-}0.003$; at 14B, $0.016, 0.077, {-}0.011, {-}0.005$; at 32B, $0.012, 0.073, {-}0.013, {-}0.005$. The strongest off-diagonal positive pairs at 32B are biology$\to$chemistry ($+0.076$), physics$\to$biology ($+0.074$), physics$\to$chemistry ($+0.068$), chemistry$\to$biology ($+0.066$), biology$\to$physics ($+0.054$); the largest negatives are algebra$\to$physics ($-0.026$), geometry$\to$chemistry ($-0.020$), discrete$\to$chemistry ($-0.018$), algebra$\to$biology ($-0.016$), number\_theory$\to$biology ($-0.015$). Donor strengths (row-sums, off-diagonal) rank physics $0.124$ $>$ biology $0.107$ $>$ chemistry $0.063$ $>$ discrete $0.025 \gtrsim$ number\_theory $0.021$, with algebra and geometry weakest ($-0.032$, $-0.005$); receiver susceptibilities (column-sums) rank biology $0.133$ $>$ chemistry $0.087$ $>$ physics $0.059$, while code is slightly negative ($-0.029$). Fig.~\ref{fig:rq6-synergy-dare} visualizes the 32B heatmap and top pairs that these numbers summarize.

\subsection{Details for Order sensitivity and $1/(k{+}b)$ fit}
\label{app:rq9-details}
From each DARE CSV we derive $k$ (hyphen count $+1$) and collect macro \texttt{Avg.} CE across all permutations to compute, per $(N,k)$, the across-order std, range, and CV; we then fit $\mathrm{Std}_{\text{order}}(N,k)=c_0(N)+\tfrac{c_1(N)}{k+b}$ by grid-search over a small $b\!\in\![0,2]$ with linear least squares for $(c_0,c_1)$. Table~\ref{tab:rq9-disp} shows that dispersion collapses from $k{=}1$ to $k{=}8$ at 0.5B/32B/72B (std drops ${\sim}79\%{-}81\%$, range ${\sim}83\%$), and Table~\ref{tab:rq9-fit} reports fitted $(\hat b,c_0,c_1)$ and $R^2$ across sizes, where a single small offset $\hat b{\approx}2$ with $c_0{\approx}0$ explains most of the decay; these are the statistics underlying the violin/heatmap/bar visualizations in Fig.~\ref{fig:rq9-order}.

\subsection{Details for Cross-backbone/open-source replication}
\label{app:rq13-details}
For each backbone, every CSV row corresponds to one merge order (tokenized in the \texttt{model} field) evaluated on a domain with CE in \texttt{CE Loss}. We compute macro CE per order by averaging \texttt{CE Loss} over the nine domains, derive $k$ as the length of the \texttt{model} token list, and then average across orders with the same $k$ to obtain a per-backbone series $\{(k,\bar L_k)\}_{k=1}^9$. We fit $L(k)=L_\infty+\tfrac{A}{k+b}$ by least squares with a small grid over $b\in[0,1]$; the best $b$ and $(L_\infty,A)$, along with $R^2$ and the end-point values $L(k{=}1)$/$L(k{=}9)$, are reported in Table~\ref{tab:rq13-params-app}. These numbers back Fig.~\ref{fig:rq13-llama} and show near-unity $R^2$ and small residuals, confirming that the same $1/(k{+}b)$ tail holds on LLaMA backbones.

\subsection{Additional Model-Size Slices for Method Comparison}
\begin{figure}[htbp]
    \centering
    \begin{subfigure}{0.32\textwidth}
    \centering
        \includegraphics[width=\linewidth]{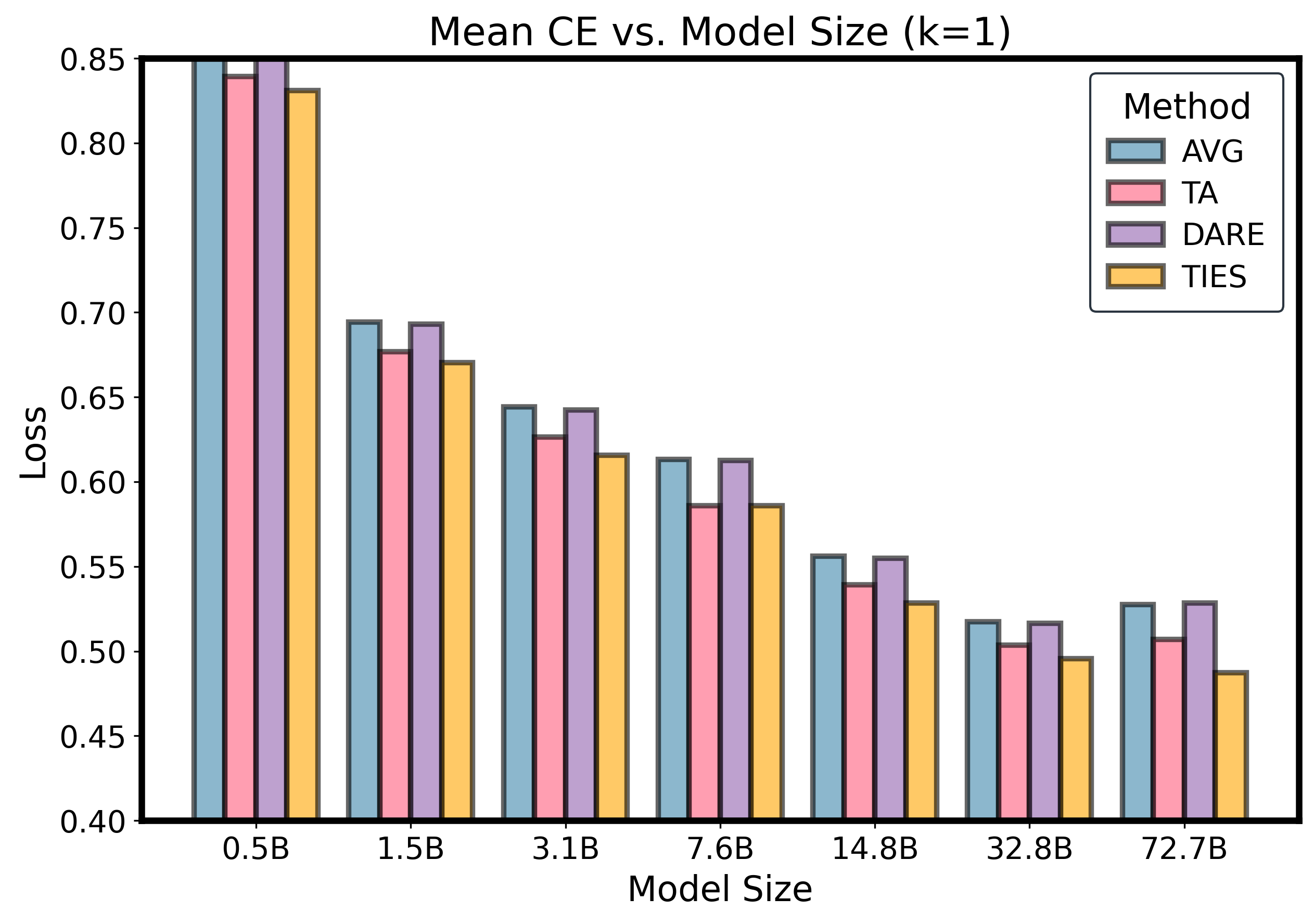}
        \caption{$k$=1}
    \end{subfigure}
    \begin{subfigure}{0.32\textwidth}
    \centering
        \includegraphics[width=\linewidth]{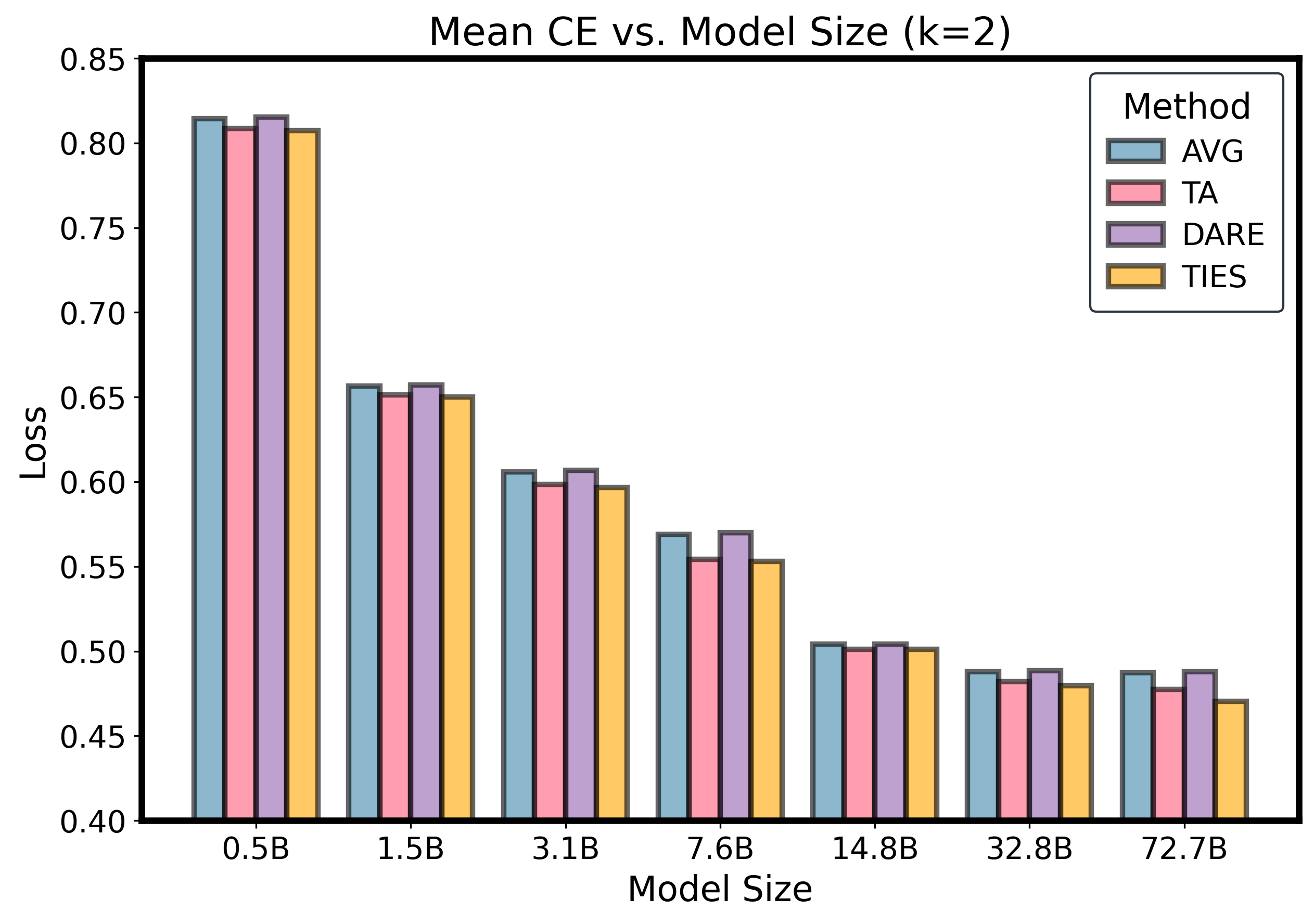}
        \caption{$k$=2}
    \end{subfigure}
    \begin{subfigure}{0.32\textwidth}
    \centering
        \includegraphics[width=\linewidth]{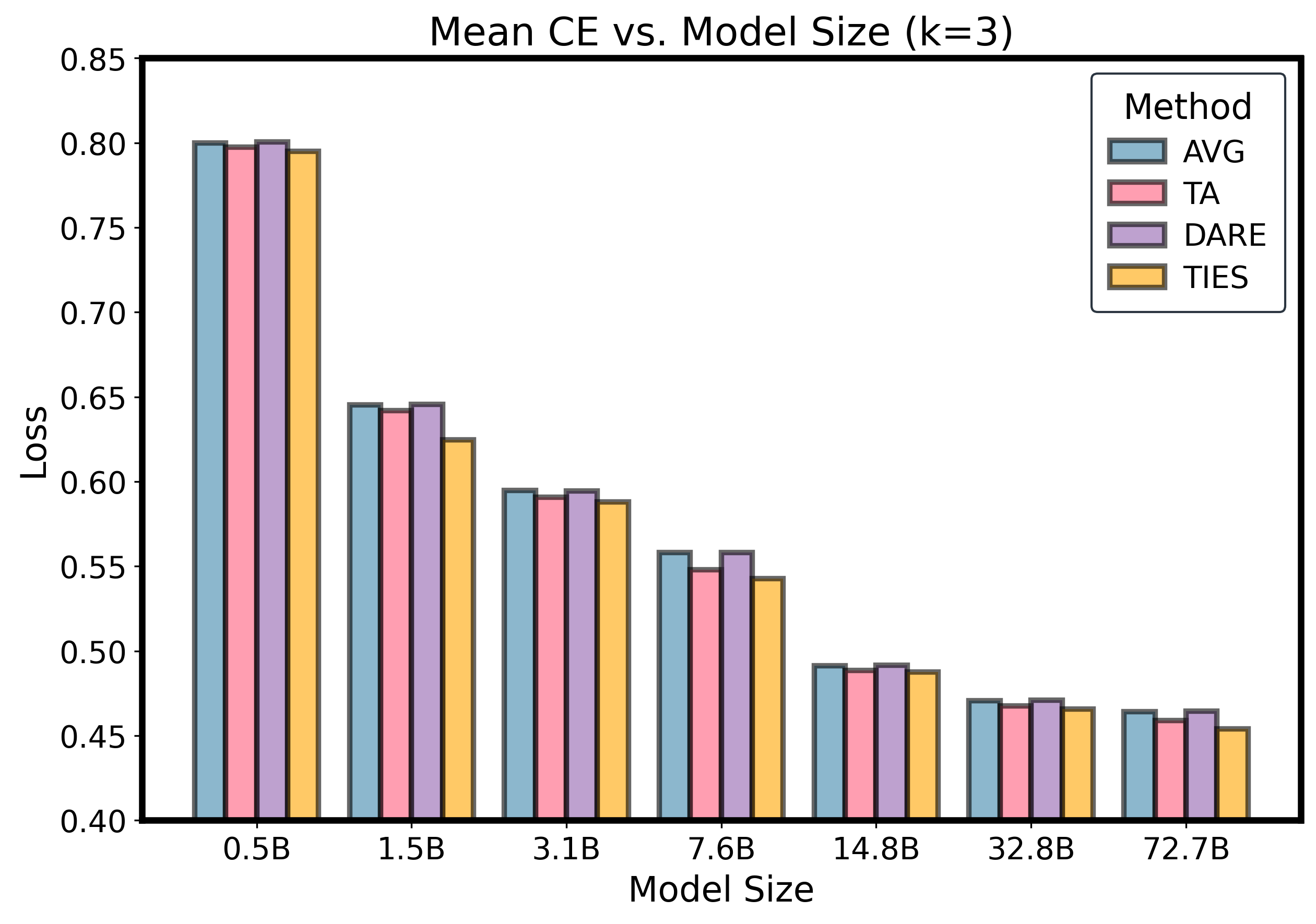}
        \caption{$k$=3}
    \end{subfigure}

    \begin{subfigure}{0.32\textwidth}
    \centering
        \includegraphics[width=\linewidth]{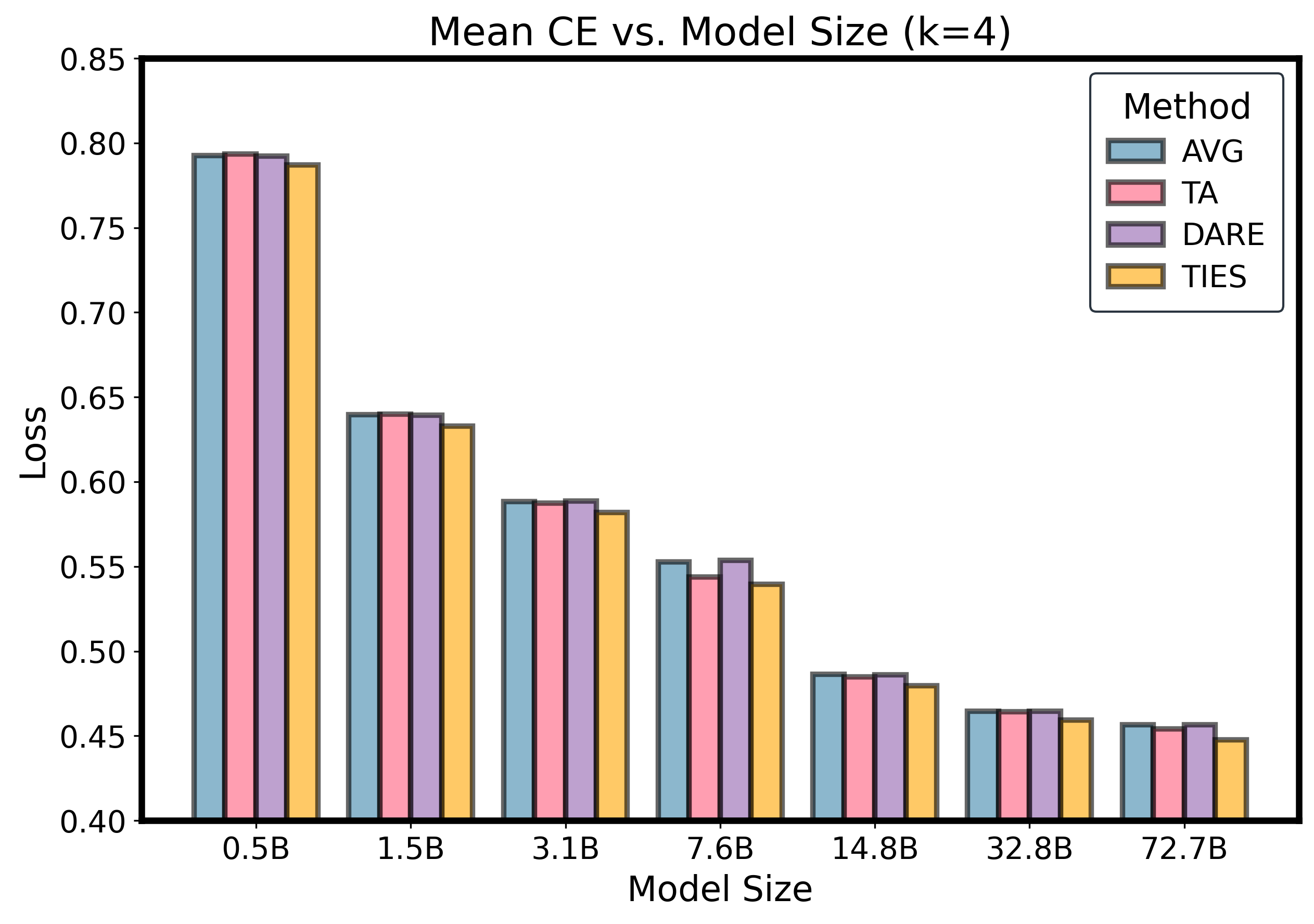}
        \caption{$k$=4}
    \end{subfigure}
    \begin{subfigure}{0.32\textwidth}
    \centering
        \includegraphics[width=\linewidth]{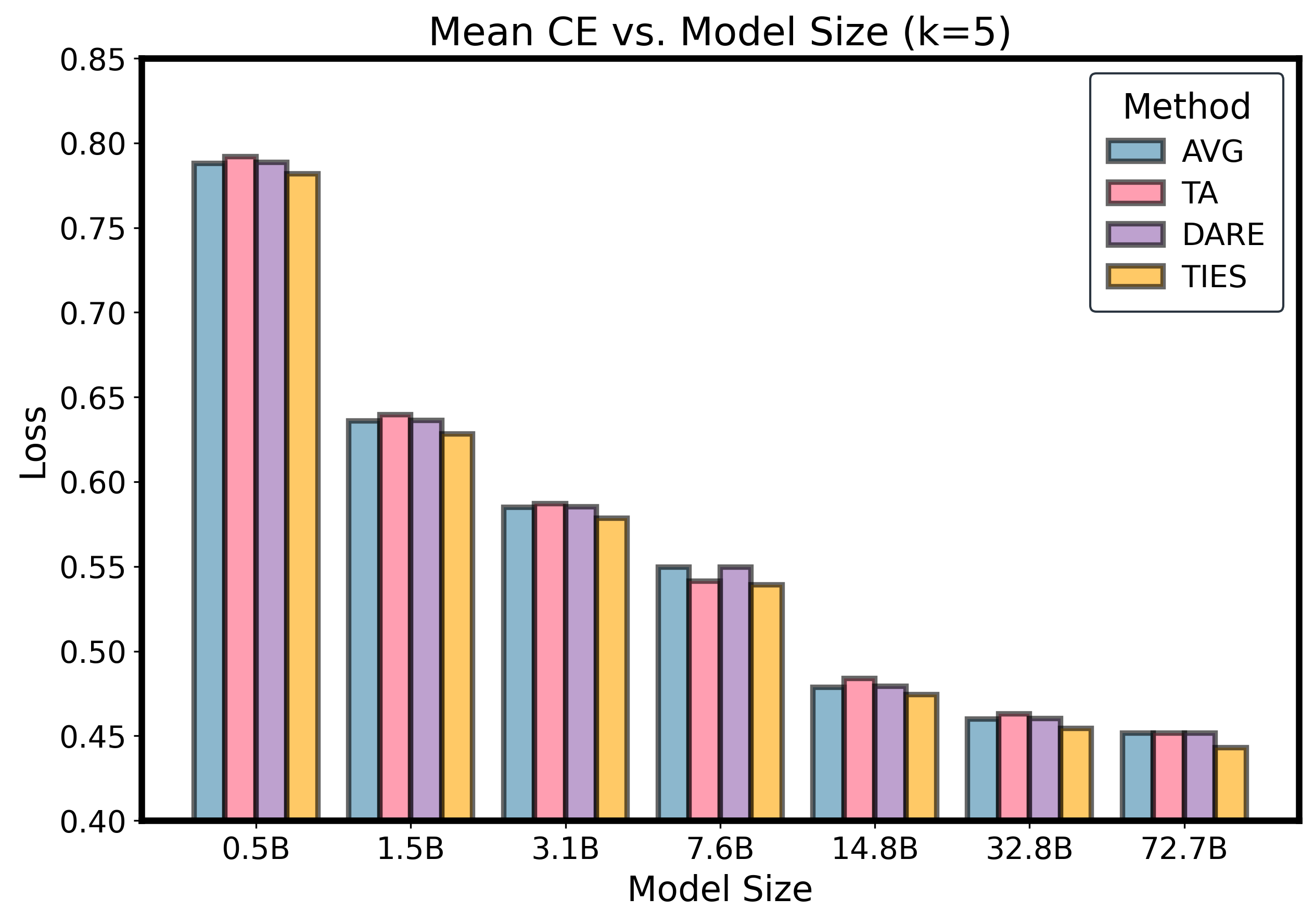}
        \caption{$k$=5}
    \end{subfigure}
    \begin{subfigure}{0.32\textwidth}
    \centering
        \includegraphics[width=\linewidth]{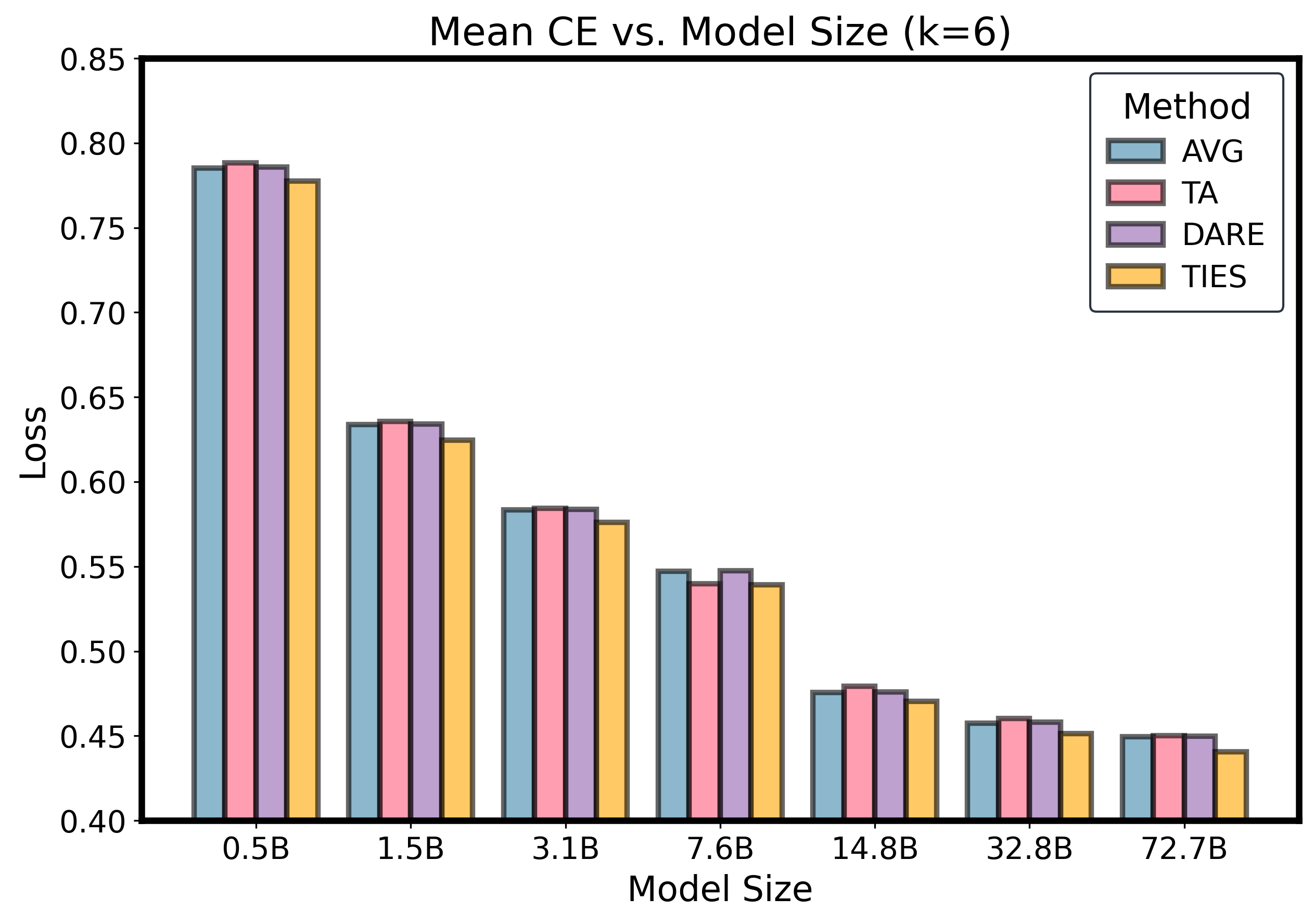}
        \caption{$k$=6}
    \end{subfigure}

    \begin{subfigure}{0.32\textwidth}
    \centering
        \includegraphics[width=\linewidth]{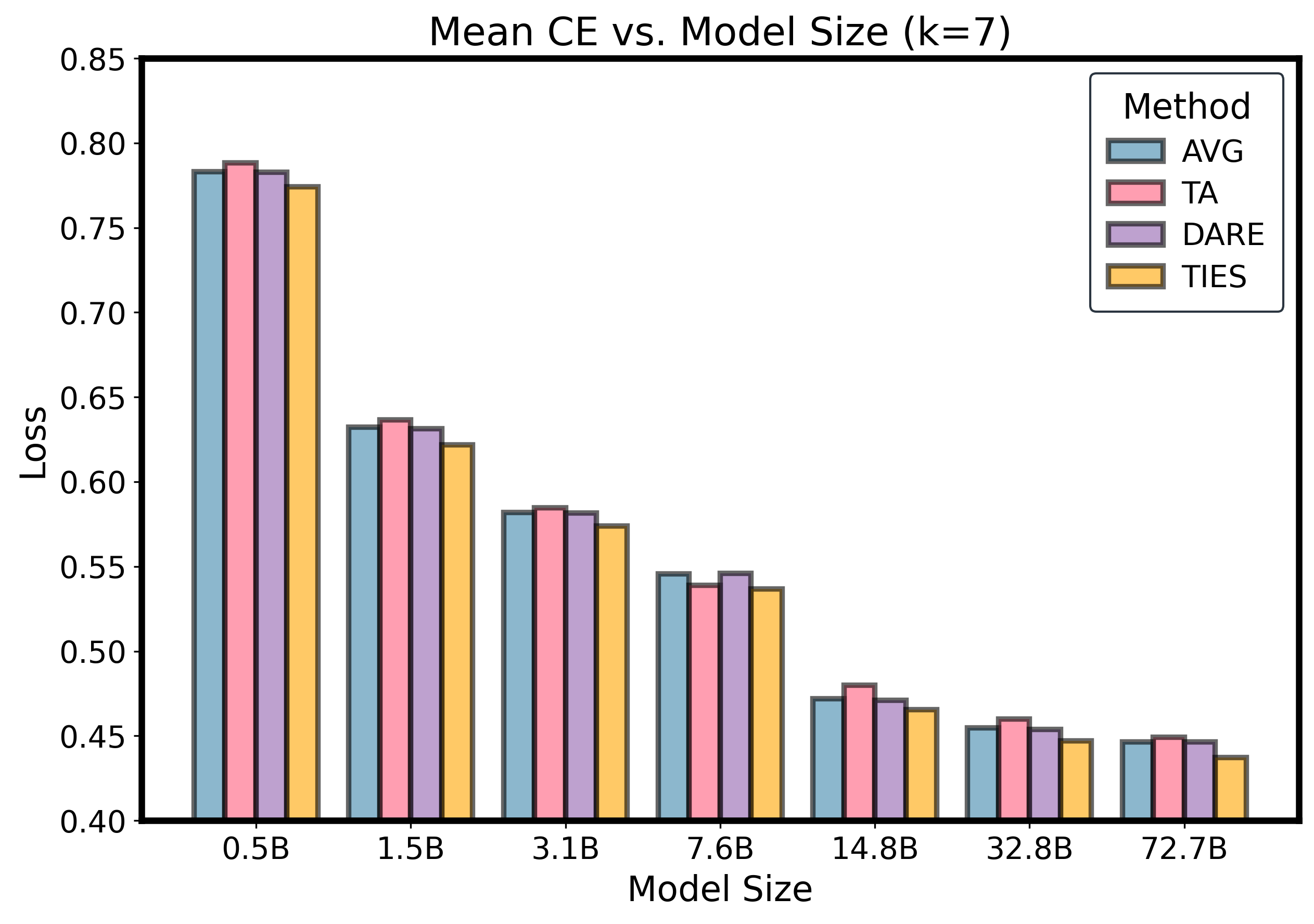}
        \caption{$k$=7}
    \end{subfigure}
    \begin{subfigure}{0.32\textwidth}
    \centering
        \includegraphics[width=\linewidth]{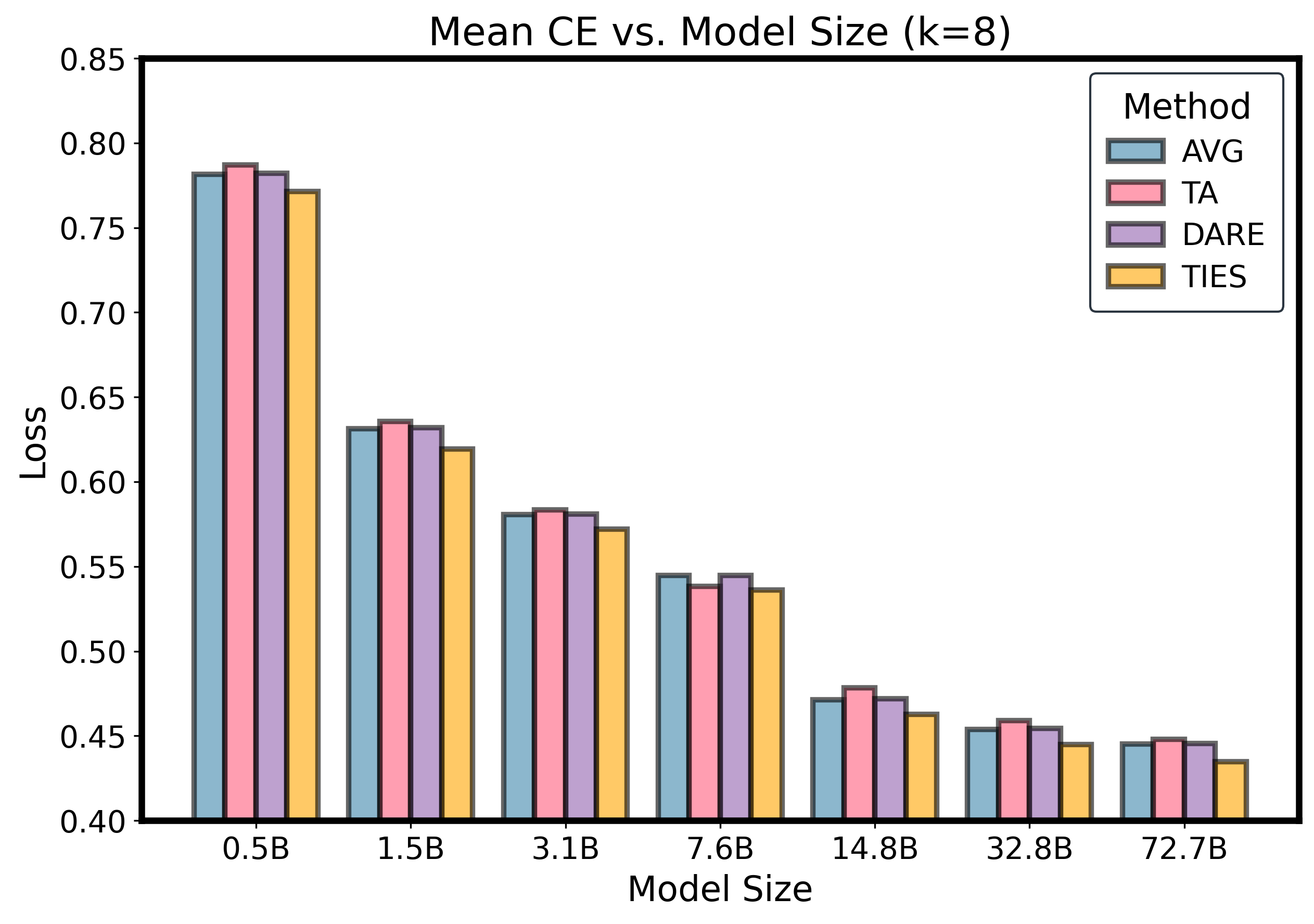}
        \caption{$k$=8}
    \end{subfigure}
    \begin{subfigure}{0.32\textwidth}
    \centering
        \includegraphics[width=\linewidth]{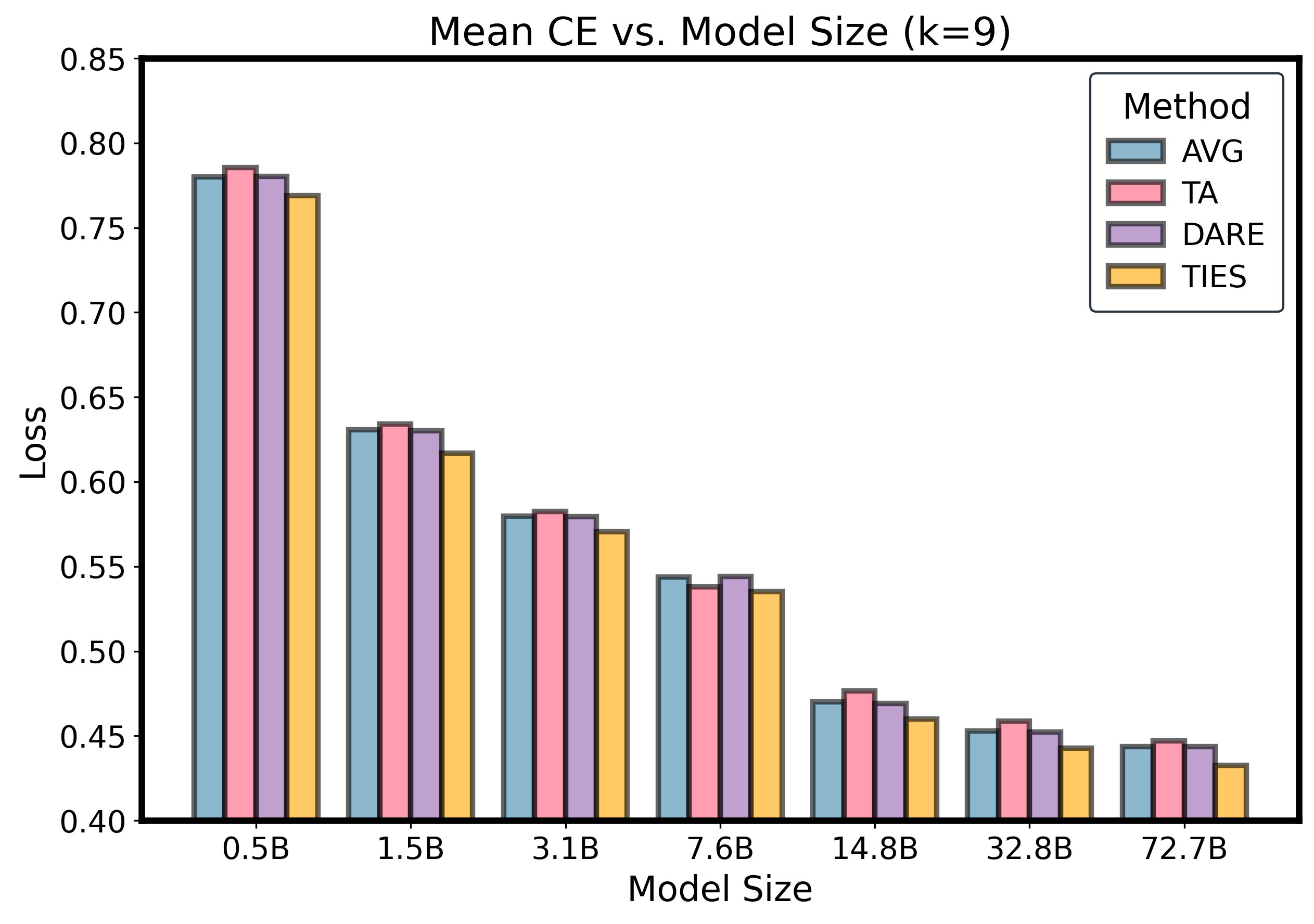}
        \caption{$k$=9}
    \end{subfigure}

    \caption{Mean CE Loss vs. Model Size with Different $k$}
\end{figure}

\end{document}